\newcommand{\aka}{\emph{a.k.a.}\xspace}
\newcommand{\iif}{\emph{iif}\xspace}
\newcommand{\first}{1\textsuperscript{st}\xspace}
\newcommand{\second}{2\textsuperscript{nd}\xspace}
\newcommand{\third}{3\textsuperscript{rd}\xspace}
\newcommand{\mysection}[1]{\vspace{2pt}\noindent\textbf{#1}}
\newtheorem{theorem}{Theorem}
\newtheorem{corollary}{Corollary}
\newtheorem{property}{Property}
\newtheorem{example}{Example}
\newtheorem{lemma}{Lemma}
\newtheorem{experiment}{Random Experiment}
\definecolor{cvprblue}{rgb}{0.21,0.49,0.74}
\title{Foundations of the Theory of Performance-Based Ranking}
\author{S\'ebastien Pi\'erard, Ana\"is Halin, Anthony Cioppa, Adrien Deli\`ege, and Marc Van Droogenbroeck\\
Montefiore Institute, University of Li\`ege, Li\`ege, Belgium\\
{\tt\small \{S.Pierard,Anais.Halin,Anthony.Cioppa,Adrien.Deliege,M.VanDroogenbroeck\}@uliege.be}
}
\begin{document}

\newcommand{\paperA}{paper~A~\cite{Pierard2024Foundations-arxiv}\xspace}
\newcommand{\paperB}{paper~B~\cite{Pierard2024TheTile-arxiv}\xspace}
\newcommand{\paperC}{paper~C~\cite{Halin2024AHitchhikers-arxiv}\xspace}
\newcommand{\PaperA}{Paper~A~\cite{Pierard2024Foundations-arxiv}\xspace}
\newcommand{\PaperB}{Paper~B~\cite{Pierard2024TheTile-arxiv}\xspace}
\newcommand{\PaperC}{Paper~C~\cite{Halin2024AHitchhikers-arxiv}\xspace}

\global\long\def\sampleSpace{\Omega}%
\global\long\def\aSample{\omega}%
\global\long\def\eventSpace{\Sigma}%
\global\long\def\anEvent{E}%
\global\long\def\measurableSpace{(\sampleSpace,\eventSpace)}%
\global\long\def\expectedValueSymbol{\mathbf{E}}%

\global\long\def\aPerformance{P}%
\global\long\def\allPerformances{\mathbb{\aPerformance}_{\measurableSpace}}%
\global\long\def\aSetOfPerformances{\Pi}%
\global\long\def\randVarSatisfaction{S}%
\global\long\def\aScore{X}%
\global\long\def\allScores{\mathbb{\aScore}_{\measurableSpace}}%
\newcommandx\domainOfScore[1][usedefault, addprefix=\global, 1=\aScore]{\mathrm{dom}(#1)}%
\global\long\def\evaluation{\mathrm{eval}}%
\global\long\def\opFilter{\mathrm{filter}_\randVarImportance}%
\global\long\def\opNoSkill{\mathrm{no\text{\textendash{}}skill}}%
\global\long\def\opPriorShift{\mathrm{shift}_{\pi\rightarrow\pi'}}%
\global\long\def\opChangePredictedClass{\mathrm{change}_{\randVarPredictedClass}}%
\global\long\def\opChangeGroundtruthClass{\mathrm{change}_{\randVarGroundtruthClass}}%
\global\long\def\opSwapGroundtruthAndPredictedClasses{\mathrm{swap}_{\randVarGroundtruthClass\leftrightarrow\randVarPredictedClass}}%
\global\long\def\opSwapClasses{\mathrm{swap}_{\classNeg\leftrightarrow\classPos}}%
\global\long\def\allWorstPerformances{\frownie}
\global\long\def\allBestPerformances{\smiley}

\global\long\def\randVarGroundtruthClass{Y}%
\global\long\def\randVarPredictedClass{\hat{Y}}%
\global\long\def\allClasses{\mathbb{C}}%
\global\long\def\aClass{c}%
\global\long\def\classNeg{c_-}%
\global\long\def\classPos{c_+}%
\global\long\def\sampleTN{tn}%
\global\long\def\sampleFP{fp}%
\global\long\def\sampleFN{fn}%
\global\long\def\sampleTP{tp}%
\global\long\def\eventTN{\{\sampleTN\}}%
\global\long\def\eventFP{\{\sampleFP\}}%
\global\long\def\eventFN{\{\sampleFN\}}%
\global\long\def\eventTP{\{\sampleTP\}}%
\global\long\def\scorePTN{PTN}%
\global\long\def\scorePFP{PFP}%
\global\long\def\scorePFN{PFN}%
\global\long\def\scorePTP{PTP}%
\global\long\def\scoreAccuracy{A}%
\global\long\def\scoreExpectedSatisfaction{\aScore_{\randVarSatisfaction}}%
\global\long\def\scoreTNR{TNR}%
\global\long\def\scoreFPR{FPR}%
\global\long\def\scoreTPR{TPR}%
\global\long\def\scoreFNR{FNR}%
\global\long\def\scoreNPV{NPV}%
\global\long\def\scoreFOR{FOR}%
\global\long\def\scorePPV{PPV}%
\global\long\def\scorePrecision{\scorePPV}%
\global\long\def\scoreFDR{FDR}%
\global\long\def\scoreJaccardNeg{J_-}%
\global\long\def\scoreJaccardPos{J_+}%
\global\long\def\scoreCohenKappa{\kappa}%
\global\long\def\scoreScottPi{\pi}%
\global\long\def\scoreFleissKappa{\kappa}%
\global\long\def\scoreBalancedAccuracy{BA}%
\global\long\def\scoreWeightedAccuracy{WA}%
\global\long\def\scoreYoudenJ{J_Y}
\global\long\def\scorePLR{PLR}%
\global\long\def\scoreNLR{NLR}%
\global\long\def\scoreOR{OR}%
\global\long\def\scoreSNPV{SNPV}%
\global\long\def\scoreSPPV{SPPV}%
\global\long\def\scoreACP{ACP}%
\global\long\def\scoreFOne{F_{1}}%
\newcommandx\scoreFBeta[1][usedefault, addprefix=\global, 1=\beta]{F_{#1}}%
\global\long\def\scoreFOne{\scoreFBeta[1]}%
\global\long\def\priorpos{\pi_+}%
\global\long\def\priorneg{\pi_-}%
\global\long\def\scoreBiasIndex{BI}%
\global\long\def\ratepos{\tau_+}%
\global\long\def\rateneg{\tau_-}%
\global\long\def\scoreACP{ACP}%
\global\long\def\scorePFour{P_4}%
\global\long\def\normalizedConfusionMatrix{C}%
\global\long\def\scoreConfusionMatrixDeterminant{|\normalizedConfusionMatrix|}

\global\long\def\allEntities{\mathbb{E}}%
\global\long\def\entitiesToRank{\mathbb{E}}%
\global\long\def\anEntity{\epsilon}%
\global\long\def\randVarImportance{I}%
\global\long\def\randVarCanonicalImportance{\randVarImportance_{a,b}}
\global\long\def\canonicalRankingScore{\rankingScore[\randVarCanonicalImportance]}
\newcommandx\rankingScore[1][usedefault, addprefix=\global, 1=\randVarImportance]{R_{#1}}%
\global\long\def\canonicalRankingScore{\rankingScore[\randVarImportance_{a,b}]}
\global\long\def\scoreVUT{VUT}%
\global\long\def\tileCurvePriors{\gamma_\pi}
\global\long\def\tileCurveRates{\gamma_\tau}
\global\long\def\relWorseOrEquivalent{\lesssim}%
\global\long\def\relBetterOrEquivalent{\gtrsim}%
\global\long\def\relEquivalent{\sim}%
\global\long\def\relBetter{>}%
\global\long\def\relWorse{<}%
\global\long\def\relIncomparable{\not\lesseqqgtr}%
\global\long\def\rank{\mathrm{rank}_\entitiesToRank}%
\global\long\def\ordering{\relWorseOrEquivalent}%
\global\long\def\invertedOrdering{\relBetterOrEquivalent}%

\global\long\def\LScityscapes{\ding{171}}
\global\long\def\LSade{\ding{170}}
\global\long\def\LSvoc{\ding{169}}
\global\long\def\LScoco{\ding{168}}

\global\long\def\indicatorSymbol{\mathbf{1}}
\global\long\def\realNumbers{\mathbb{R}}%
\global\long\def\aRelation{\mathcal{R}}%
\global\long\def\achievableByCombinations{\Phi}%
\global\long\def\allConvexCombinations{\mathrm{conv}}%
\newcommand{\indep}{\perp \!\!\! \perp}


\global\long\def\cityscapes{\LScityscapes{}~Cityscapes}
\global\long\def\ade{\LSade{}~ADE20K}
\global\long\def\voc{\LSvoc{}~Pascal VOC 2012}
\global\long\def\coco{\LScoco{}~COCO-Stuff 164k}

\newcommand{\MethodDesigner}{Bernadette}
\newcommand{\Benchmarker}{Leonard}
\newcommand{\AppDeveloper}{Howard}
\newcommand{\TheoreticalAnalyst}{Sheldon}

\newcommand{\tile}{Tile\xspace}
\newcommand{\tiles}{Tiles\xspace}
\newcommand{\valueTile}{Value Tile\xspace}
\newcommand{\baselineTile}{Baseline Value Tile\xspace}
\newcommand{\SOTATile}{State-of-the-Art Value Tile\xspace}
\newcommand{\noSkillTile}{No-Skill Tile\xspace}
\newcommand{\skillTile}{Relative-Skill Tile\xspace}
\newcommand{\correlationTile}{Correlation Tile\xspace}
\newcommand{\rankingTile}{Ranking Tile\xspace}
\newcommand{\entityTile}{Entity Tile\xspace}

\global\long\def\aNonSkilledPerformance{\aPerformance_{\indep}}
\global\long\def\allNonSkilledPerformances{\mathbb{\aPerformance}^{\randVarGroundtruthClass\indep\randVarPredictedClass}_{\measurableSpace}}%

\global\long\def\allPriorFixedPerformances{\mathbb{\aPerformance}^{\priorpos}_{\measurableSpace}}%

\newcommand{\comma}{\,,}
\newcommand{\point}{\,.}

\newcommandx\unconditionalProbabilisticScore[1]{\aScore_{#1}^{U}}%
\global\long\def\formulaPTN{\unconditionalProbabilisticScore{\eventTN}}%
\global\long\def\formulaPFP{\unconditionalProbabilisticScore{\eventFP}}%
\global\long\def\formulaPFN{\unconditionalProbabilisticScore{\eventFN}}%
\global\long\def\formulaPTP{\unconditionalProbabilisticScore{\eventTP}}%
\global\long\def\formulapriorneg{\unconditionalProbabilisticScore{\{\sampleTN,\sampleFP\}}}%
\global\long\def\formulapriorpos{\unconditionalProbabilisticScore{\{\sampleFN,\sampleTP\}}}%
\global\long\def\formularateneg{\unconditionalProbabilisticScore{\{\sampleTN,\sampleFN\}}}%
\global\long\def\formularatepos{\unconditionalProbabilisticScore{\{\sampleFP,\sampleTP\}}}%
\global\long\def\formulaAccuracy{\unconditionalProbabilisticScore{\{\sampleTN,\sampleTP\}}}%

\newcommandx\conditionalProbabilisticScore[2]{\aScore_{#1 \vert #2}^{C}}%
\global\long\def\formulaTNR{\conditionalProbabilisticScore{\{\sampleTN\}}{\{\sampleTN,\sampleFP\}}}%
\global\long\def\formulaTPR{\conditionalProbabilisticScore{\{\sampleTP\}}{\{\sampleFN,\sampleTP\}}}%
\global\long\def\formulaNPV{\conditionalProbabilisticScore{\{\sampleTN\}}{\{\sampleTN,\sampleFN\}}}%
\global\long\def\formulaPPV{\conditionalProbabilisticScore{\{\sampleTP\}}{\{\sampleFP,\sampleTP\}}}%
\global\long\def\formulaJaccardNeg{\conditionalProbabilisticScore{\{\sampleTN\}}{\{\sampleTN,\sampleFP,\sampleFN\}}}%
\global\long\def\formulaJaccardPos{\conditionalProbabilisticScore{\{\sampleTP\}}{\{\sampleFP,\sampleFN,\sampleTP\}}}%

\global\long\def\scoreBennettS{S}


\renewcommand{\paperA}{\textcolor{red}{XXXXXXXX}\xspace}
\renewcommand{\paperB}{\cite{Pierard2024TheTile-arxiv}\xspace}
\renewcommand{\paperC}{\cite{Halin2024AHitchhikers-arxiv}\xspace}
\renewcommand{\PaperA}{\textcolor{red}{XXXXXXXX}\xspace}
\renewcommand{\PaperB}{Paper~\cite{Pierard2024TheTile-arxiv}\xspace}
\renewcommand{\PaperC}{Paper~\cite{Halin2024AHitchhikers-arxiv}\xspace}
\global\long\def\scoreExpectedSatisfaction{\expectedValueScore{\randVarSatisfaction}}%

\maketitle

\begin{abstract}

Ranking entities such as algorithms, devices, methods, or models based on their performances, while accounting for application-specific preferences, is a challenge. To address this challenge, we establish the foundations of a universal theory for performance-based ranking. First, we introduce a rigorous framework built on top of both the probability and order theories. Our new framework encompasses the elements necessary to (1) manipulate performances as mathematical objects, (2) express which performances are worse than or equivalent to others, (3) model tasks through a variable called satisfaction, (4) consider properties of the evaluation, (5) define scores, and (6) specify application-specific preferences through a variable called importance. On top of this framework, we propose the first axiomatic definition of performance orderings and performance-based rankings. Then, we introduce a universal parametric family of scores, called \emph{ranking scores}, that can be used to establish rankings satisfying our axioms, while considering application-specific preferences. Finally, we show, in the case of two-class classification, that the family of ranking scores encompasses well-known performance scores, including the accuracy, the true positive rate (recall, sensitivity), the true negative rate (specificity), the positive predictive value (precision), and $\scoreFBeta[1]$. However, we also show that some other scores commonly used to compare classifiers are unsuitable to derive performance orderings satisfying the axioms.

\end{abstract}

\begin{figure}[t]
\begin{centering}
\resizebox{1\linewidth}{!}{
\includegraphics[width=\linewidth]{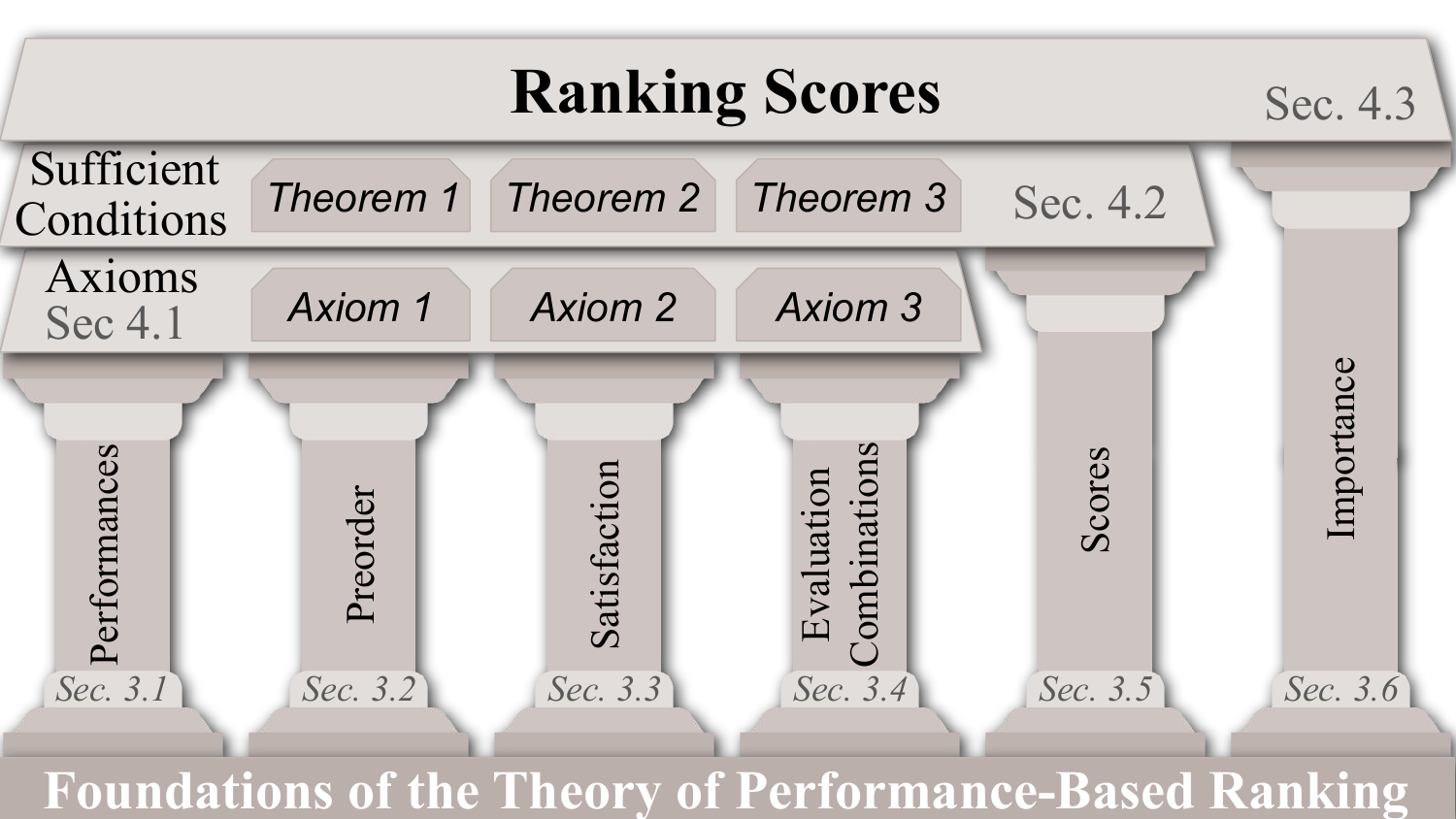}}
\par\end{centering}
\caption{This work establishes the foundations of the theory of performance-based ranking. We do this in two steps. First, we introduce a new mathematical framework with $6$ main elements, as depicted here by the pillars. Second, we build on top of it: (1) a set of three axioms for the ordering of performances and for the performance-based ranking of entities, (2) sufficient conditions for them when the performance ordering is induced by a score, and (3) a family of scores, named \emph{ranking scores} that consider the application-specific preferences. This theory is universal in the sense that it is applicable to any task.
\label{fig:graphical_abstract}}
\end{figure}

\section{Introduction}
\label{sec:intro}


Every day, millions of people are faced with choices to make. Often, these choices are between entities (\eg, algorithms, devices, methods, models, options, procedures, solutions, strategies, \etc) 
considered to be interchangeable, although not necessarily equivalent in terms of performance. One of the main difficulties arises from the uncertainty that people have regarding the use that will be made of the entity to choose. A widespread approach to objectifying these choices is to (1) perform an evaluation to determine (\ie, assume, calculate, estimate, predict, \etc) a \emph{performance}, encompassing the necessary uncertainty, for each of these entities; (2) choose a way of comparing these performances with each other; and (3) assume that an entity is preferable to others if it has the best performance.
A more general problem is to establish an order of preference between the entities: this is the \emph{performance-based ranking}.

The approach of performance-based ranking is common in many fields and has proved its usefulness, especially in scientific communities that organize themselves around competitions~\cite{Cioppa2024SoccerNet2023Challenge, Goyette2014ANovel, Kristan2015TheVisual, Kristan2024TheSecond} for the development of algorithms for specific tasks. Nevertheless, several studies \cite{MaierHein2018WhyRankings,Nguyen2023HowTrustworthy} have  alerted the scientific community about the ranking methodology used in these competitions.

A critical analysis~\cite{MaierHein2018WhyRankings} of common practices for 150 biomedical image analysis challenges reveals that the scores used are justified in only 23\% of the cases, and that the rank computation method is reported in only 36\% of the cases. Moreover, there are at least 10 different methods for determining the rank of an algorithm based on multiple scores. The properties of these methods are largely unknown.

There remains an obvious lack of theoretical foundations for the performance-based ranking. From our perspective, there is a common and detrimental confusion between the concept of performance and the numerical scores (also called metrics, measures, indicators, criteria, factors, and indices~\cite{Texel2013Measure,Canbek2017Binary}). Moreover, the way of comparing performances is often chosen by intuition~\cite{Nguyen2023HowTrustworthy} or by imitation of what has been chosen in the previously published related works. This inevitably leads to a drastic loss of diversity in the rankings that can be presented. From our perspective, considering the diversity of possible rankings is healthy, and also desirable for those who want to subsequently choose an entity based on their own application-specific preferences.

The lack of theoretical foundations is profound. Loosely speaking, the \emph{performance} of a given entity can be defined as the information necessary to determine the degree of \emph{satisfaction} one has with it, relatively to application-specific preferences tuned by the relative \emph{importance} given to the various cases that can occur when the entity is used. Although such a definition sheds interesting light on the subject, it does not specify the mathematical nature of the performance, the space in which it is, or the operations that are permitted, particularly those that enable the various performances to be compared with one another.


The aim of this paper is to look at the performance-based ranking from a broader angle. As shown in \cref{fig:graphical_abstract}, we establish mathematically rigorous theoretical foundations that can be applied to a wide range of problems.  Throughout the paper, we exemplify our theory with the task of two-class classification. More tasks are detailed in \cref{sec:catalog-of-problems} about: multi-class classification with links to micro- and macro-averaging; regression with links to the mean squared error and the mean absolute error; information retrieval; detection with links to the intersection over-union and the F-score; clustering with a link to the Fowlkes-Mallows index; ranking with a link to Kendall's $\tau$.


\mysection{Contributions.} Our contributions are threefold. 
(1) First, in \cref{sec:mathematical-framework}, we present a new mathematical framework that encompasses the evaluation of entities, the performances, the space in which they are, the notions of \emph{satisfaction} and \emph{importance}, the scores that characterize numerically the performances, and some  operations permitted on performances such as those for combining several performances and those for comparing performances with one another (\emph{is equivalent to}, \emph{is worse than}, \emph{is better than}, \emph{is incomparable with}, \etc).
(2) Second, we introduce the foundations of the theory of performance-based ranking in \cref{sec:ranking-theory}. We innovate with an \emph{axiomatic definition for both the ordering of performances and the ranking of entities based on their performances}. We also present a new family of scores, named \emph{ranking scores}, that can be used to induce performance orderings that satisfy our axioms. These scores are parameterized by application-specific preferences, and are universal, \ie, applicable to any task.
(3) Third, in \cref{sec:two-class-crisp-classification}, we study the particularization of our theory of performance-based ranking to the popular case of the two-class crisp classification.
\section{Related Work}

Mathematical foundations for our work can be found in the probability theory, in the order theory, and in statistics. The probability theory~\cite{Kolmogorov1933Grundbegriffe,Kolmogorov1950Foundations} provides the tools needed to consider the uncertainty that one has about how an entity will be used. However, to be rigorous, probability measures cannot be used without defining a measurable space. Surprisingly, after reviewing hundreds of papers that use the notion of performance, mainly in the fields of computer vision, medicine and physics, we found none that explicitly gives such a space and explains how to express, based on it, what the performances are and what operations are allowed on them. The order theory~\cite{Davey2002Introduction,Gratzer2003General} provides the basis for defining and manipulating homogeneous binary relations such as \emph{is equivalent to}, \emph{is worse than}, \emph{is better than}, and \emph{is incomparable with} that underpin rankings. Statistics provide tools to compare rankings through rank correlations, in particular Kendall's $\tau$~\cite{Kendall1938ANewMeasure} and Spearman's $\rho$~\cite{Spearman1904Theproof}. 

In a recent attempt to formalize the notion of ranking, for basic vision tasks, Nguyen \etal~\cite{Nguyen2023HowTrustworthy} proposed to impose three properties for ranking: (1) reliability (``a small change in parameter values, should not result in a drastic change in rankings''), (2) meaningfulness (evaluated by humans), and (3) mathematical consistency (``use scores that satisfy certain properties''). The mathematical framework introduced in this paper helps to clarify these three requirements. (1) Regarding the reliability, which is also a matter of concern in~\cite{MaierHein2018WhyRankings}, we argue that one should distinguish between two types of parameters: those involved in the evaluation (\ie the step in which the performance of an entity is determined) and those involved in the ranking of the entities based on their previously determined performances. The case actually discussed in~\cite{Nguyen2023HowTrustworthy} is of the first type. The second type of parameters can be useful to adapt the ranking to application-specific preferences: the \emph{importance}. (2) Regarding the meaningfulness, in the absence of analytical means to ensure the meaningfulness of scores, Nguyen \etal~\cite{Nguyen2023HowTrustworthy} suggest testing the ranking procedure on sanity tests with pre-determined desired rankings. 
In contrast, we propose to start by modeling the task through a variable called \emph{satisfaction} and then to derive meaningful ranking scores, by construction. (3) Concerning the mathematical consistency, we show that it is possible to impose it based on axioms, at a level just below the scores (see \cref{fig:graphical_abstract}).

\section{Mathematical Framework}
\label{sec:mathematical-framework}

We now present the mathematical framework in which the theory of performance-based rankings will be established in \cref{sec:ranking-theory}. 

To the best of our knowledge, it is the first time that a rigorous mathematical framework is conceived for the universal comparison of performances. All mathematical symbols used in this paper are defined where they first appear. For convenience, we also provide a list of them in~\cref{sec:list_of_symbols}. Our framework involves six components that correspond to the pillars depicted in \cref{fig:graphical_abstract}.

\mysection{Performance to address uncertainty.} First, we introduce the notion of \emph{performance}, which is our main  object of interest. To anchor it on a solid mathematical ground, we leverage 
probability theory to benefit from its established expressivity and rigor, and thus define performances as probability measures. Probability theory is indeed the ideal framework for studying uncertainty and randomness, which naturally pertain to performances, hence our choice. 

\mysection{The essence of performance.} Then, we define the three components that really differentiate performances from ordinary probability measures: performances should be comparable/rankable (\eg, we need notions of \emph{better} and \emph{worse} performances) through a \emph{preorder} $\relWorseOrEquivalent$; performances should be related to a task, that we model by a random variable called \emph{satisfaction} $\randVarSatisfaction$; and performances should be related to entities (algorithms, devices, \etc) through an \emph{evaluation}, that we model by a function $\achievableByCombinations$. We show that there exist compatibility conditions to be met between $\relWorseOrEquivalent$ and $\randVarSatisfaction$ as well as between $\relWorseOrEquivalent$ and $\achievableByCombinations$, which lead us to formalize the \emph{axioms} at the basis of our theory in~\cref{sec:ranking-theory}. 

\mysection{Performance in practical applications.} Finally, we define two components that connect our theory to practical applications: the \emph{scores} $\aScore$, which are functions mapping performances to numerical values (\eg accuracy, error rate,\etc); and the \emph{importance} $\randVarImportance$, a random variable that encodes applicative preferences about the possible outcomes of the process. Later in~\cref{sec:ranking-theory}, we show how these two components can be defined to satisfy sufficient conditions (through three theorems) to fulfill our three axioms, thus further serving as basis for our performance-based ranking theory and the definition of new universal ranking scores.

\subsection{The Performance $\aPerformance$ as a Probability Measure}
\label{sec:mathematical-framework-performances}

For us, a performance is not a real number or a collection of them, as sometimes assumed in the literature. Instead, we choose to design our mathematical framework specifically to compare performances having a probabilistic meaning. Thus, we ground our framework in probability theory~\cite{Kolmogorov1933Grundbegriffe,Kolmogorov1950Foundations} and consider that \emph{performances} $\aPerformance$ are probability measures. For two performances to be comparable, they should be defined on a common measurable space $\measurableSpace$, where $\sampleSpace$ is the sample space, or universe, and $\eventSpace$ is a $\sigma$-algebra on $\sampleSpace$ called event space. Without loss of generality, when $\sampleSpace$ is finite, one can choose $\eventSpace=2^\sampleSpace$. We note $\allPerformances$ the set of probability measures on $\measurableSpace$.

\begin{example}
    For the popular case of two-class crisp classification, several choices can be made for $\sampleSpace$. In the simplest setting, this set contains two elements interpreted as ``correct result'' and ``incorrect result''. In this case, the performance analysis can only be based on the proportion of correct results, \ie, the accuracy. In another setting, one can choose to have three elements in $\sampleSpace$, making the distinction between the two types of incorrect results, namely \emph{false positive} (\aka type I error) and \emph{false negative} (\aka type II error). Finally, one could prefer having four elements in $\sampleSpace$, one for each pair of ground-truth and predicted classes (in the frequency-based approach, this corresponds to the normalized contingency table or confusion matrix). This enlarges the flexibility in the analysis of performances.
\end{example}

\subsection{Ordering Performances with a Preorder $\relWorseOrEquivalent$}
\label{sec:mathematical-framework-ordering}


Our framework is not only grounded in  probability theory, as explained above, but also in order theory~\cite{Davey2002Introduction,Gratzer2003General}. We aim at being able to decide if a performance is, \eg, worse than, equivalent, or better than another one. Mathematically, these comparisons are done thanks to binary homogeneous relations on $\allPerformances$. Indeed, all binary relations used to compare performances should be coherent, \ie, they should all correspond to a common \emph{performance ordering}.  
Following this path, in this paper, the binary relations $\relEquivalent$, $\relBetter$, $\relWorse$, and $\relIncomparable$ on $\allPerformances$ will be implicitly considered as derived from a common binary relation $\relWorseOrEquivalent$ on $\allPerformances$ as follows:
\begin{itemize}
    \item $\aPerformance_1\relEquivalent\aPerformance_2$ if and only if (\iif) $\aPerformance_1\relWorseOrEquivalent\aPerformance_2 \wedge \aPerformance_2\relWorseOrEquivalent\aPerformance_1$;
    \item $\aPerformance_1\relBetter\aPerformance_2$ \iif  $\aPerformance_1\not\relWorseOrEquivalent\aPerformance_2 \wedge \aPerformance_2\relWorseOrEquivalent\aPerformance_1$;
    \item $\aPerformance_1\relWorse\aPerformance_2$ \iif $\aPerformance_1\relWorseOrEquivalent\aPerformance_2 \wedge \aPerformance_2\not\relWorseOrEquivalent\aPerformance_1$;
    \item $\aPerformance_1\relIncomparable\aPerformance_2$ \iif $\aPerformance_1\not\relWorseOrEquivalent\aPerformance_2 \wedge \aPerformance_2\not\relWorseOrEquivalent\aPerformance_1$.
\end{itemize}
With such a construction, if $\relWorseOrEquivalent$ is a preorder (\ie, reflexive and transitive), then $\relEquivalent$ is an equivalence (\ie reflexive, transitive, and symmetric), $\relBetter$ and $\relWorse$ are converse strict partial orders (\ie, irreflexive, asymmetric, and transitive), and $\relIncomparable$ is irreflexive and symmetric. For the proof, see \cref{sec:Properties_of_orders}. These are the intuitively expected properties that justify to interpret $\relWorseOrEquivalent$ as \emph{worse or equivalent}, $\relBetter$ as \emph{better}, $\relWorse$ as \emph{worse}, and $\relIncomparable$ as \emph{incomparable}.

\addtocounter{example}{-1}
\begin{example}[continued]
    In the case of two-class classification, two spaces are commonly used to depict the performances as points, at fixed priors: the \emph{Receiver Operating Characteristic} (ROC) space and the \emph{Precision-Recall} (PR) space. On the one hand, ROC users will all intuitively agree that a performance $\aPerformance_1$ is better than, equivalent to, or worse than a performance $\aPerformance_2$ when both the values of $\scoreTNR$ (true negative rate) and the $\scoreTPR$ (true positive rate) of $\aPerformance_1$ are greater, equal, or smaller than those of $\aPerformance_2$, respectively. 
    Few of these users will risk deciding when one of the two scores is higher while the other is lower, as this depends on application-specific preferences. On the other hand, PR users will perform a similar intuitive reasoning based on the $\scoreTPR$ (recall) and the $\scorePPV$ (precision). A careful comparison reveals that ROC and PR users do not intuitively make the same decisions. This is surprising since, at fixed priors, both spaces show the same thing~\cite{Davis2006TheRelationship}. Our theory clarifies what are the suitable performance orderings.
\end{example}

\subsection{Modeling the Task as a Random Variable $\randVarSatisfaction$}
\label{sec:mathematical-framework-satisfaction}

\global\long\def\minSatisfaction{s_{min,\sampleSpace}}
\global\long\def\maxSatisfaction{s_{max,\sampleSpace}}

The random variable \emph{satisfaction}, $\randVarSatisfaction:\sampleSpace\rightarrow\realNumbers$, is task-specific. The user is responsible for assigning satisfaction values that are meaningful for the task at hand. We argue that a task is ill-defined when the satisfaction is not specified, either explicitly or implicitly. In this paper, we pose $\minSatisfaction=\min_{\aSample\in\sampleSpace} S(\aSample)$ and $\maxSatisfaction=\max_{\aSample\in\sampleSpace} S(\aSample)$.

\addtocounter{example}{-1}
\begin{example}[continued]
    In the case of two-class classification, we expect that most people will agree that (1) the satisfaction takes the same value for all samples corresponding to incorrect results (\eg $\randVarSatisfaction=0$), (2) the satisfaction takes the same value for all samples corresponding to correct results (\eg $\randVarSatisfaction=1$), and that (3) the satisfaction is strictly greater for correct results than for incorrect results.
\end{example}

\subsection{Modeling the Evaluation as a Function $\achievableByCombinations$}
\label{sec:mathematical-framework-combinations}

We now have all the elements necessary to introduce the notion of \emph{evaluation}. Let us denote, by $\allEntities$, the set of entities of interest. In our framework, the evaluation is modeled by function 
$\evaluation:\entitiesToRank\rightarrow\allPerformances:\anEntity\mapsto\evaluation(\anEntity)$. We find it convenient to think in terms of a (thought) random experiment involving an entity $\anEntity$ and having outcomes that allow to determine how satisfying the various realizations are, \eg, if the result is correct, how accurate it is, or how many resources are used. We define the performance $\aPerformance=\evaluation(\anEntity)$ of an entity $\anEntity$ as the distribution of these outcomes.

\addtocounter{example}{-1}
\begin{example}[continued]
    In the particular case of classification, a typical random experiment implicitly considered in the literature is in five steps. (1)~Draw a sample $s$ at random from a source. (2)~Apply the oracle on $s$ to obtain ground-truth class $y(s)$. (3)~Apply a descriptor on $s$ to obtain the features (\aka attributes) $x(s)$. (4)~Feed the classifier with $x(s)$ to obtain the predicted class $\hat{y}(x(s))$. (5)~Set the outcome of the experiment to the pair $(y,\hat{y})$.
\end{example}

Our framework can be further enriched by integrating some knowledge about the function $\evaluation$. By definition, a performance $\aPerformance$ is achievable when there exists an entity $\anEntity$ whose evaluation leads to it: $\exists\anEntity:\evaluation(\anEntity)=\aPerformance$. It often happens that, based solely on the performances of the evaluated entities, we can be sure that some other performances are also achievable, by combining and/or disrupting the entities we have at our disposal. To take this knowledge into account, we define the function $\achievableByCombinations:2^{\allPerformances}\rightarrow2^{\allPerformances}$ that gives the set of performances that are achievable for sure, for some set of achievable performances given in input. Note that $\achievableByCombinations$ is idempotent, \ie $\achievableByCombinations\circ\achievableByCombinations=\achievableByCombinations$.

Consider a random experiment using a black box entity $\anEntity$ only once (this is the knowledge we have about $\evaluation$). Let $\lambda_1, \lambda_2, \ldots \lambda_n$ be positive values summing up to one. All other things remaining identical, if one can achieve the performances $\aPerformance_1, \aPerformance_2, \ldots \aPerformance_n$ with, respectively, the entities $\anEntity_1, \anEntity_2, \ldots \anEntity_n$, then the performance $\sum_i\lambda_i \aPerformance_i$ is achievable with a hybrid entity that randomly selects an entity among $\anEntity_1, \anEntity_2, \ldots \anEntity_n$ with the series of respective selection probabilities $\lambda_1, \lambda_2, \ldots \lambda_n$ before running the corresponding entity. In this case, denoting the set of all possible convex combinations by $\allConvexCombinations$, we can take $\achievableByCombinations=\allConvexCombinations$.

\addtocounter{example}{-1}
\begin{example}[continued]
    In the particular case of two-class crisp classification, when the source of samples, the oracle, and the descriptor are kept unchanged, \emph{Fawcett's interpolation}~\cite{Fawcett2006AnIntroduction} allows interpolating linearly between performances with a hybrid classifier. And, when the oracle, the descriptor, and the classifier are kept unchanged, \emph{Pi\'erard's summarization}~\cite{Pierard2020Summarizing} allows interpolating linearly between performances with a hybrid source of samples.
\end{example}

\subsection{The Scores as Functions $\aScore$ of Performances}
\label{sec:mathematical-framework-scores}

\newcommandx\expectedValueScore[1]{\aScore_{#1}^{E}}%
\newcommandx\probabilisticScore[1]{\aScore_{#1}^{P}}%

In our framework, \emph{scores}\footnote{We choose the term \emph{score} to avoid any possible confusion with the mathematical meaning of the terms \emph{metric}, \emph{measure}, and \emph{indicator}.} (also called metrics, measures, indicators, criteria, factors, and indices in the literature~\cite{Texel2013Measure,Canbek2017Binary}) are functions associating a real value to performances, that is, $\aScore:\domainOfScore\rightarrow\realNumbers:\aPerformance\mapsto\aScore(\aPerformance)$ 
with $\domainOfScore\subseteq\allPerformances$. 
One can define different parametric families of scores, such as: 

\begin{itemize}
    \item \emph{Expected value scores} are parameterized by a random variable $V$. We define them as $\expectedValueScore{V}:\allPerformances\rightarrow[\min_{\aSample\in\sampleSpace}V(\aSample),\max_{\aSample\in\sampleSpace}V(\aSample)]:\aPerformance\mapsto\expectedValueScore{V}(\aPerformance)=\expectedValueSymbol_\aPerformance[V]$, where $\expectedValueSymbol$ denotes the mathematical expectation. The score $\scoreExpectedSatisfaction$, that we call the \emph{expected satisfaction}, is a universal score in the sense that it exists for all sample spaces $\sampleSpace$. It will be further studied in \cref{sec:ranking-theory}.
    \item \emph{Probabilistic scores} are parameterized by two events $\anEvent_1,\anEvent_2\in\eventSpace$ such that $\emptyset\subsetneq \anEvent_1\subsetneq \anEvent_2\subseteq\sampleSpace$. We define them as $\probabilisticScore{\anEvent_1\vert \anEvent_2}:\{\aPerformance\in\allPerformances:\aPerformance(\anEvent_2)\ne0\}\rightarrow[0,1]:\aPerformance\mapsto\probabilisticScore{\anEvent_1\vert \anEvent_2}(\aPerformance)=\aPerformance(\anEvent_1\vert \anEvent_2)$. All probabilistic scores can be expressed as a ratio of two expected value scores.
\end{itemize}

\addtocounter{example}{-1}
\begin{example}[continued]
    In the case of two-class classification, the expected satisfaction is both the expected value score $\expectedValueScore{\randVarSatisfaction}$ and the probabilistic score $\probabilisticScore{\randVarSatisfaction=1\vert \sampleSpace}$. This is because $\randVarSatisfaction$ is a $\{0,1\}$-binary random variable. In this case, the expected satisfaction is called \emph{accuracy} and is noted $\scoreAccuracy$.
\end{example}

\subsection{Modeling Application-Specific Preferences as a Random Variable $\randVarImportance$}
\label{sec:mathematical-framework-importance}

It is well known that the ranking of entities does not only have to be specific for the task, but that it should also be sensitive to application-specific preferences. To encode these preferences, we propose to rely on a second random variable that we call \emph{importance}: $\randVarImportance:\sampleSpace\rightarrow\realNumbers_{\ge 0}$. We require that $\randVarImportance\ne0$, \ie $\exists\aSample:\randVarImportance(\aSample)\ne0$.

Further in this paper, we will describe a new family of scores, called \emph{ranking scores}, which are parameterized by this random variable. We would, however, like to draw the reader's attention to the fact that the importance is something that cannot, in general, be deduced from a score. In particular, the visual inspection of a formula for a given score could be misleading. Therefore, in this paper, we present a technique to analyze the behavior of any score by computing the rank correlations with the ranking scores for which the importances are well-defined.

\addtocounter{example}{-1}
\begin{example}[continued]
    In the case of two-class classification, we provide examples of misleading formulas in~\cref{sup:misleading_visual_inspection}, in particular two equivalent formulas for the accuracy and two for the true positive rate. In both cases, by visually inspecting them, one would intuitively draw different conclusions about the importance given to the true negatives, false positives, false negatives, and true positives.
\end{example}

\section{Performance-Based Ranking Theory}\label{sec:ranking-theory}

In this section, we build our theory in three steps, represented by the three lintels of~\cref{fig:graphical_abstract}, on top of the six pillars depicting our mathematical framework. 
First, we present a universal axiomatic definition of performance orderings and performance-based rankings of entities (\cref{sec:axioms}). Then, we link these axioms with the scores, and give a sufficient condition per axiom (\cref{sec:sufficient-conditions}). Finally, we account for the application-specific preferences and provide an infinite, diversified, and universal family of scores that induce performance orderings satisfying our axioms (\cref{sec:ranking-scores}).

\subsection{Axiomatic Definition}\label{sec:axioms}

We propose an axiomatic definition (the first as far as we know) of both performance orderings and performance-based rankings. 
The axioms do not involve any $\aScore$ or $\randVarImportance$.

\subsubsection{Leveraging the Preorder $\relWorseOrEquivalent$}

We argue that if several entities from a set $\entitiesToRank$ have been ranked, then removing or adding an entity should not affect the relative order of those ranked entities. To guarantee it, our first axiom imposes that the ranking function is based on a preorder $\relWorseOrEquivalent$ on $\allPerformances$. There is no consensus in the literature on the rank value to consider when several entities have equivalent performances. Thus, instead of setting an arbitrarily chosen value, our axiom specifies bounds for it.
\begin{restatable}{axiom}{restatableAA}
\label{axiom:preorder}
The ranking function $\rank:\entitiesToRank\rightarrow[1,|\entitiesToRank|]:\anEntity\mapsto\rank(\anEntity)$ satisfies $|\{\anEntity'\in\entitiesToRank:\evaluation(\anEntity)\relWorse\evaluation(\anEntity')\}|+1\le\rank(\anEntity)\le|\{\anEntity'\in\entitiesToRank:\evaluation(\anEntity)\relWorseOrEquivalent\evaluation(\anEntity')\}|$, where $\relWorseOrEquivalent$ is  a preorder on $\allPerformances$. 
\end{restatable}

\subsubsection{Leveraging the Satisfaction $\randVarSatisfaction$}

We argue that if the satisfaction that can be obtained with an entity $\anEntity_1$ is for sure less or equal than the satisfaction that can be obtained with an entity $\anEntity_2$, then the performance $\aPerformance_1$ of $\anEntity_1$ cannot be better than the performance $\aPerformance_2$ of $\anEntity_2$.

\begin{restatable}{axiom}{restatableAB}
\label{axiom:satisfaction}
For $\aPerformance_1,\aPerformance_2\in\allPerformances$ such that $\aPerformance_1(\randVarSatisfaction\le s)=1$ and $\aPerformance_2(\randVarSatisfaction\ge s)=1$ for some $s$, then $\aPerformance_1\relWorseOrEquivalent\aPerformance_2$ or $\aPerformance_1\relIncomparable\aPerformance_2$.
\end{restatable}

This implies that, thanks to $\randVarSatisfaction$, we can use the natural order $\le$ that exists on $\realNumbers$ to obtain a preorder on $\sampleSpace$, and the axiom states that the preorder on $\allPerformances$ is coherent with it.

Let us now take a look at three implications that are clearly intuitively expected.

\begin{corollary}
    \label{corollary:quivalent-incomparable-performances}
    For any $s$, all performances $\aPerformance$ such that $\aPerformance(\randVarSatisfaction=s)=1$ are either equivalent or incomparable.
\end{corollary}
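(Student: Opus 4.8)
The plan is to apply Axiom~\ref{axiom:satisfaction} twice, once to each ordering of the pair, exploiting the symmetry of the hypothesis $\aPerformance(\randVarSatisfaction=s)=1$. Fix $s$ and let $\aPerformance_1,\aPerformance_2\in\allPerformances$ satisfy $\aPerformance_1(\randVarSatisfaction=s)=1$ and $\aPerformance_2(\randVarSatisfaction=s)=1$. First I would observe that $\{\randVarSatisfaction=s\}\subseteq\{\randVarSatisfaction\le s\}$ and $\{\randVarSatisfaction=s\}\subseteq\{\randVarSatisfaction\ge s\}$, so by monotonicity of the probability measures $\aPerformance_1,\aPerformance_2$ we get $\aPerformance_1(\randVarSatisfaction\le s)=\aPerformance_1(\randVarSatisfaction\ge s)=1$ and likewise for $\aPerformance_2$. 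Hence the premise of Axiom~\ref{axiom:satisfaction} holds both for the ordered pair $(\aPerformance_1,\aPerformance_2)$ and for $(\aPerformance_2,\aPerformance_1)$, giving (i) $\aPerformance_1\relWorseOrEquivalent\aPerformance_2$ or $\aPerformance_1\relIncomparable\aPerformance_2$, and (ii) $\aPerformance_2\relWorseOrEquivalent\aPerformance_1$ or $\aPerformance_2\relIncomparable\aPerformance_1$.

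Next I would rule out $\aPerformance_1\relBetter\aPerformance_2$ and $\aPerformance_1\relWorse\aPerformance_2$ using the definitions of the derived relations from \cref{sec:mathematical-framework-ordering}. Suppose $\aPerformance_1\relBetter\aPerformance_2$, i.e.\ $\aPerformance_1\not\relWorseOrEquivalent\aPerformance_2$ and $\aPerformance_2\relWorseOrEquivalent\aPerformance_1$. The first conjunct is incompatible with the first disjunct of~(i), so (i) forces $\aPerformance_1\relIncomparable\aPerformance_2$; but by definition $\aPerformance_1\relIncomparable\aPerformance_2$ entails $\aPerformance_2\not\relWorseOrEquivalent\aPerformance_1$, contradicting the second conjunct. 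Therefore $\aPerformance_1\not\relBetter\aPerformance_2$; running the same argument with the roles of $\aPerformance_1$ and $\aPerformance_2$ exchanged (and using~(ii) in place of~(i)) yields $\aPerformance_1\not\relWorse\aPerformance_2$.

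Finally, since $\relWorseOrEquivalent$ is a preorder, for any two elements of $\allPerformances$ exactly one of $\relEquivalent,\relBetter,\relWorse,\relIncomparable$ holds — this is precisely the dichotomy recalled in \cref{sec:mathematical-framework-ordering} and established in \cref{sec:Properties_of_orders}. Having excluded $\relBetter$ and $\relWorse$, only $\aPerformance_1\relEquivalent\aPerformance_2$ or $\aPerformance_1\relIncomparable\aPerformance_2$ remain, which proves the corollary. I do not expect a genuine obstacle here; the only step requiring a little care is the bookkeeping of the disjunctions produced by Axiom~\ref{axiom:satisfaction}, together with the observation that $\relIncomparable$ is defined so that $\aPerformance_1\relIncomparable\aPerformance_2$ already carries the information $\aPerformance_2\not\relWorseOrEquivalent\aPerformance_1$ — which is exactly what makes the two contradictions close.
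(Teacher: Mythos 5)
Your proof is correct and is essentially the argument the paper intends: the corollary is stated as an immediate consequence of Axiom~\ref{axiom:satisfaction} (the paper gives no separate proof), and applying the axiom to both ordered pairs $(\aPerformance_1,\aPerformance_2)$ and $(\aPerformance_2,\aPerformance_1)$ via $\{\randVarSatisfaction=s\}\subseteq\{\randVarSatisfaction\le s\}\cap\{\randVarSatisfaction\ge s\}$, then eliminating $\relBetter$ and $\relWorse$ through the exhaustive four-way case split on the truth values of $\aPerformance_1\relWorseOrEquivalent\aPerformance_2$ and $\aPerformance_2\relWorseOrEquivalent\aPerformance_1$, is exactly the intended reasoning.
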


\begin{corollary}
    \label{corollary:worst-performances}
    We have $\aPerformance(\randVarSatisfaction=\minSatisfaction)=1 \Rightarrow \nexists\aPerformance':\aPerformance'\relWorse\aPerformance$. In other words, $\aPerformance(\randVarSatisfaction=\minSatisfaction)=1$ means that $\aPerformance$ belongs to the set of the worst performances, among $\allPerformances$.
\end{corollary}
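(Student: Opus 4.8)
The plan is to derive the statement directly from Axiom~\ref{axiom:satisfaction} by a short proof by contradiction, after unpacking the definition of the strict relation $\relWorse$ in terms of the preorder $\relWorseOrEquivalent$. First I would recall, from the construction at the beginning of \cref{sec:mathematical-framework-ordering}, that $\aPerformance'\relWorse\aPerformance$ means precisely $\aPerformance'\relWorseOrEquivalent\aPerformance \wedge \aPerformance\not\relWorseOrEquivalent\aPerformance'$; in particular it entails $\aPerformance\not\relWorseOrEquivalent\aPerformance'$, and it also excludes $\aPerformance\relIncomparable\aPerformance'$, since incomparability would require $\aPerformance'\not\relWorseOrEquivalent\aPerformance$ whereas $\aPerformance'\relWorse\aPerformance$ gives $\aPerformance'\relWorseOrEquivalent\aPerformance$.

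Next I would note the elementary fact that, since $\minSatisfaction=\min_{\aSample\in\sampleSpace}\randVarSatisfaction(\aSample)$, every sample satisfies $\randVarSatisfaction(\aSample)\ge\minSatisfaction$, hence $\aPerformance'(\randVarSatisfaction\ge\minSatisfaction)=1$ holds for \emph{every} $\aPerformance'\in\allPerformances$, vacuously. Combined with the hypothesis $\aPerformance(\randVarSatisfaction=\minSatisfaction)=1$, which trivially gives $\aPerformance(\randVarSatisfaction\le\minSatisfaction)=1$, the pair $(\aPerformance,\aPerformance')$ satisfies the premise of Axiom~\ref{axiom:satisfaction} with $s=\minSatisfaction$ (here $\aPerformance$ plays the role of $\aPerformance_1$ and $\aPerformance'$ that of $\aPerformance_2$). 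Therefore the axiom yields $\aPerformance\relWorseOrEquivalent\aPerformance'$ or $\aPerformance\relIncomparable\aPerformance'$.

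Then I would close the argument: assuming for contradiction that some $\aPerformance'$ satisfies $\aPerformance'\relWorse\aPerformance$, the first alternative $\aPerformance\relWorseOrEquivalent\aPerformance'$ contradicts $\aPerformance\not\relWorseOrEquivalent\aPerformance'$, and the second alternative $\aPerformance\relIncomparable\aPerformance'$ contradicts $\aPerformance'\relWorseOrEquivalent\aPerformance$, both consequences of $\aPerformance'\relWorse\aPerformance$ recorded in the first paragraph. Either way we reach a contradiction, so no such $\aPerformance'$ exists, which is exactly the asserted implication; the second sentence of the corollary, identifying such a $\aPerformance$ as a worst performance of $\allPerformances$, is then merely the meaning of ``worst'' (no performance is strictly worse than it).

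I do not expect a genuine obstacle here. The only points requiring care are getting the direction of the inequalities right when instantiating Axiom~\ref{axiom:satisfaction} (the hypothesis performance carries the ``$\randVarSatisfaction\le s$'' condition, while the arbitrary competitor carries the ``$\randVarSatisfaction\ge s$'' condition, which is automatic because $\minSatisfaction$ is the global minimum of $\randVarSatisfaction$), and translating $\relWorse$ and $\relIncomparable$ faithfully back into statements about $\relWorseOrEquivalent$ so that the two cases returned by the axiom are each seen to be incompatible with $\aPerformance'\relWorse\aPerformance$.
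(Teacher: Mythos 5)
Your argument is correct and is exactly the derivation the paper intends: the paper states this corollary without an explicit proof (as an ``intuitively expected'' consequence of Axiom~\ref{axiom:satisfaction}), and your instantiation with $s=\minSatisfaction$, $\aPerformance_1=\aPerformance$, $\aPerformance_2=\aPerformance'$ — noting that $\aPerformance'(\randVarSatisfaction\ge\minSatisfaction)=1$ holds for every $\aPerformance'$ — followed by unpacking $\relWorse$ and $\relIncomparable$ in terms of $\relWorseOrEquivalent$, is the canonical way to make it rigorous. No gaps.
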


\begin{corollary}
    \label{corollary:best-performances}
    We have $\aPerformance(\randVarSatisfaction=\maxSatisfaction)=1 \Rightarrow \nexists\aPerformance':\aPerformance'\relBetter\aPerformance$. In other words, $\aPerformance(\randVarSatisfaction=\maxSatisfaction)=1$ means that $\aPerformance$ belongs to the set of the best performances, among $\allPerformances$.
\end{corollary}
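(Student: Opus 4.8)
The plan is to reduce the statement to a single application of Axiom~\ref{axiom:satisfaction} with the threshold $s = \maxSatisfaction$, followed by unwinding the definition of $\relBetter$ in terms of the preorder $\relWorseOrEquivalent$. First I would fix an arbitrary $\aPerformance' \in \allPerformances$ and aim to show $\aPerformance' \not\relBetter \aPerformance$. Since $\maxSatisfaction = \max_{\aSample \in \sampleSpace} \randVarSatisfaction(\aSample)$, we have $\randVarSatisfaction(\aSample) \le \maxSatisfaction$ for every $\aSample \in \sampleSpace$, hence $\aPerformance'(\randVarSatisfaction \le \maxSatisfaction) = 1$; on the other hand, the hypothesis $\aPerformance(\randVarSatisfaction = \maxSatisfaction) = 1$ immediately gives $\aPerformance(\randVarSatisfaction \ge \maxSatisfaction) = 1$. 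Thus the pair $(\aPerformance_1,\aPerformance_2) = (\aPerformance',\aPerformance)$ meets the hypotheses of Axiom~\ref{axiom:satisfaction} with $s = \maxSatisfaction$, so $\aPerformance' \relWorseOrEquivalent \aPerformance$ or $\aPerformance' \relIncomparable \aPerformance$.

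Then I would do a short case split. Recall that $\aPerformance' \relBetter \aPerformance$ means, by definition, $\aPerformance' \not\relWorseOrEquivalent \aPerformance \wedge \aPerformance \relWorseOrEquivalent \aPerformance'$. In the first case, $\aPerformance' \relWorseOrEquivalent \aPerformance$ contradicts the first conjunct, so $\aPerformance' \not\relBetter \aPerformance$. In the second case, $\aPerformance' \relIncomparable \aPerformance$ unfolds to $\aPerformance' \not\relWorseOrEquivalent \aPerformance \wedge \aPerformance \not\relWorseOrEquivalent \aPerformance'$, and the second conjunct here contradicts the second conjunct in the definition of $\relBetter$, so again $\aPerformance' \not\relBetter \aPerformance$. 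Since $\aPerformance'$ was arbitrary, $\nexists \aPerformance' : \aPerformance' \relBetter \aPerformance$, which is exactly the claim that $\aPerformance$ lies in the set of best performances among $\allPerformances$.

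I do not expect any real obstacle here: the result is essentially the ``top element'' specialization of Axiom~\ref{axiom:satisfaction}, and the only point requiring a little care is that the inequality $\randVarSatisfaction \le \maxSatisfaction$ holds on all of $\sampleSpace$ (which is precisely how $\maxSatisfaction$ is defined), so that the axiom applies simultaneously against every competitor $\aPerformance'$. Corollary~\ref{corollary:worst-performances} would then be obtained by the mirror-image argument, taking $s = \minSatisfaction$ and swapping the roles of $\aPerformance$ and $\aPerformance'$ in the application of Axiom~\ref{axiom:satisfaction}.
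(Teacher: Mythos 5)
Your proof is correct and is exactly the argument the paper intends (the corollary is stated without an explicit proof, as an immediate consequence of Axiom~\ref{axiom:satisfaction}): you instantiate the axiom with $s=\maxSatisfaction$, $\aPerformance_1=\aPerformance'$, $\aPerformance_2=\aPerformance$, and rule out $\aPerformance'\relBetter\aPerformance$ in both resulting cases by unfolding the definitions of $\relBetter$ and $\relIncomparable$. The key observation that $\aPerformance'(\randVarSatisfaction\le\maxSatisfaction)=1$ holds for \emph{every} $\aPerformance'\in\allPerformances$ by the definition of $\maxSatisfaction$ is correctly identified and is what makes the axiom applicable against all competitors.
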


\subsubsection{Leveraging the Combinations $\achievableByCombinations$}

We argue that, for any set of achievable performances, it must be impossible to obtain with certainty a performance better than the best of them, or worse than the worst of them, by sequentially combining operations from an arbitrary set of possible operations to perturb (\eg, add noise to their output) the entities corresponding to that initial set. Expressing this requirement in terms of $\relWorseOrEquivalent$ leads to our third axiom.

\begin{restatable}{axiom}{restatableAD}
\label{axiom:combinations}
Let $\aPerformance$ be a performance, and $\aSetOfPerformances$ be a set of performances on $\allPerformances$ such that $\aPerformance'\relWorseOrEquivalent\aPerformance\vee\aPerformance\relWorseOrEquivalent\aPerformance'\,\forall\aPerformance'\in\aSetOfPerformances$.
\begin{itemize}
    \item $\aPerformance'\relWorseOrEquivalent\aPerformance\, \forall\aPerformance'\in\aSetOfPerformances \Rightarrow \overline{\aPerformance}\relWorseOrEquivalent\aPerformance\, \forall\overline{\aPerformance}\in\achievableByCombinations(\aSetOfPerformances)$;
    \item $\aPerformance'\not\relWorseOrEquivalent\aPerformance\, \forall\aPerformance'\in\aSetOfPerformances \Rightarrow \overline{\aPerformance}\not\relWorseOrEquivalent\aPerformance\, \forall\overline{\aPerformance}\in\achievableByCombinations(\aSetOfPerformances)$;
    \item $\aPerformance\relWorseOrEquivalent\aPerformance'\, \forall\aPerformance'\in\aSetOfPerformances \Rightarrow \aPerformance\relWorseOrEquivalent\overline{\aPerformance}\, \forall\overline{\aPerformance}\in\achievableByCombinations(\aSetOfPerformances)$;
    \item and $\aPerformance\not\relWorseOrEquivalent\aPerformance'\, \forall\aPerformance'\in\aSetOfPerformances \Rightarrow \aPerformance\not\relWorseOrEquivalent\overline{\aPerformance}\, \forall\overline{\aPerformance}\in\achievableByCombinations(\aSetOfPerformances)$.
\end{itemize}
\end{restatable}

With the definitions for $\relEquivalent$, $\relBetter$, $\relWorse$, and $\relIncomparable$ given in \cref{sec:mathematical-framework-ordering}, the following corollary can be derived.

\begin{corollary}
Let $\aPerformance\in\allPerformances$ and $\aSetOfPerformances\subseteq\allPerformances$ such that $\aPerformance'\relWorseOrEquivalent\aPerformance\vee\aPerformance\relWorseOrEquivalent\aPerformance'\,\forall\aPerformance'\in\aSetOfPerformances$. We have:
\begin{itemize}
    \item $\aPerformance'\relEquivalent\aPerformance\, \forall\aPerformance'\in\aSetOfPerformances \Rightarrow \overline{\aPerformance}\relEquivalent\aPerformance\, \forall\overline{\aPerformance}\in\achievableByCombinations(\aSetOfPerformances)$;
    \item $\aPerformance'\relBetter\aPerformance\, \forall\aPerformance'\in\aSetOfPerformances \Rightarrow \overline{\aPerformance}\relBetter\aPerformance\, \forall\overline{\aPerformance}\in\achievableByCombinations(\aSetOfPerformances)$;
    \item $\aPerformance'\relWorse\aPerformance\, \forall\aPerformance'\in\aSetOfPerformances \Rightarrow \overline{\aPerformance}\relWorse\aPerformance\, \forall\overline{\aPerformance}\in\achievableByCombinations(\aSetOfPerformances)$;
    \item $\aPerformance'\relIncomparable\aPerformance\, \forall\aPerformance'\in\aSetOfPerformances \Rightarrow \overline{\aPerformance}\relIncomparable\aPerformance\, \forall\overline{\aPerformance}\in\achievableByCombinations(\aSetOfPerformances)$.
\end{itemize}
\end{corollary}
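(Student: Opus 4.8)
The plan is to derive this corollary directly from \cref{axiom:combinations} by unfolding, for each of the four relations $\relEquivalent$, $\relBetter$, $\relWorse$, $\relIncomparable$, its definition in terms of $\relWorseOrEquivalent$ given in \cref{sec:mathematical-framework-ordering}. I would first observe that the standing hypothesis of the corollary, namely $\aPerformance'\relWorseOrEquivalent\aPerformance\vee\aPerformance\relWorseOrEquivalent\aPerformance'$ for all $\aPerformance'\in\aSetOfPerformances$, is exactly the hypothesis under which the four sub-implications of \cref{axiom:combinations} are stated, so that axiom is available to us throughout and no further assumption is needed.

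For the first bullet: by definition $\aPerformance'\relEquivalent\aPerformance$ means $\aPerformance'\relWorseOrEquivalent\aPerformance$ and $\aPerformance\relWorseOrEquivalent\aPerformance'$; assuming this for every $\aPerformance'\in\aSetOfPerformances$, the first sub-implication of \cref{axiom:combinations} yields $\overline{\aPerformance}\relWorseOrEquivalent\aPerformance$ for every $\overline{\aPerformance}\in\achievableByCombinations(\aSetOfPerformances)$, and the third yields $\aPerformance\relWorseOrEquivalent\overline{\aPerformance}$; together these say $\overline{\aPerformance}\relEquivalent\aPerformance$. The remaining three cases are handled identically, each time splitting the premise into a ``$\relWorseOrEquivalent$'' part and a ``$\not\relWorseOrEquivalent$'' part and invoking the matching pair of sub-implications of \cref{axiom:combinations}: for $\relBetter$ (i.e. $\aPerformance'\not\relWorseOrEquivalent\aPerformance$ and $\aPerformance\relWorseOrEquivalent\aPerformance'$) use the second and third sub-implications; for $\relWorse$ (i.e. $\aPerformance'\relWorseOrEquivalent\aPerformance$ and $\aPerformance\not\relWorseOrEquivalent\aPerformance'$) use the first and fourth; for $\relIncomparable$ (i.e. $\aPerformance'\not\relWorseOrEquivalent\aPerformance$ and $\aPerformance\not\relWorseOrEquivalent\aPerformance'$) use the second and fourth. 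Re-folding the two resulting statements about $\overline{\aPerformance}$ gives $\overline{\aPerformance}\relBetter\aPerformance$, $\overline{\aPerformance}\relWorse\aPerformance$, and $\overline{\aPerformance}\relIncomparable\aPerformance$ respectively.

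I do not expect a genuine obstacle: the content of the argument is entirely bookkeeping — pairing each of the four relations with the correct two sub-implications of \cref{axiom:combinations}. The only point worth a remark is the last bullet, where the premise $\aPerformance'\relIncomparable\aPerformance$ for all $\aPerformance'\in\aSetOfPerformances$ is consistent with the standing hypothesis only when $\aSetOfPerformances=\emptyset$; in that degenerate case $\achievableByCombinations(\aSetOfPerformances)=\achievableByCombinations(\emptyset)=\emptyset$ (this is itself forced by \cref{axiom:combinations}), so the conclusion holds vacuously, and one may either flag this explicitly or simply run the mechanical derivation above, which stays logically valid regardless.
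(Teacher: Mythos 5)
Your proof is correct and follows exactly the route the paper intends: the paper states this corollary as an immediate consequence of unfolding $\relEquivalent$, $\relBetter$, $\relWorse$, and $\relIncomparable$ into their $\relWorseOrEquivalent$-definitions and pairing each half of the premise with the matching sub-implication of Axiom~\ref{axiom:combinations}, which is precisely your bookkeeping. Your side remark that the incomparability case is vacuous for nonempty $\aSetOfPerformances$ under the standing hypothesis (and that the axiom forces $\achievableByCombinations(\emptyset)=\emptyset$) is a correct and worthwhile observation, but the mechanical derivation stands on its own either way.
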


\subsubsection{Consistency}


The three axioms are consistent in the sense that they do not contradict each others. A trivial preorder $\relWorseOrEquivalent$ for which all axioms are satisfied is the one such that all performances are equivalent, regardless of what $\randVarSatisfaction$ and $\achievableByCombinations$ are.

\subsection{Sufficient Conditions for Score-Based Rankings}
\label{sec:sufficient-conditions}

We can now connect the axioms and scores and give sufficient conditions to satisfy our axioms. The proofs for the three following theorems are given in~\cref{sec:sufficient-conditions-proofs}.

The \first theorem explains how performance orderings $\ordering$ can be induced from scores $\aScore$, which allows capitalizing on the natural order $\le$  on $\realNumbers$ to obtain a preorder $\relWorseOrEquivalent$ on $\allPerformances$.

\begin{restatable}[Sufficient condition for Axiom~\ref{axiom:preorder}]{theorem}{restatableTA}\label{thm:suff-cond-preorder} 
    A binary relation $\ordering_\aScore$ on $\allPerformances$ induced by a score $\aScore$ as $\aPerformance_1 \ordering_\aScore \aPerformance_2$ \iif either $\aPerformance_1=\aPerformance_2$ or $\aPerformance_1\in\domainOfScore$ and $\aPerformance_2\in\domainOfScore$ and $\aScore(\aPerformance_1)\le\aScore(\aPerformance_2)$, is a preorder satisfying Axiom~\ref{axiom:preorder}.
\end{restatable}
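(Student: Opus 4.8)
The plan is to verify, directly from the definition of $\ordering_\aScore$, the two properties that make it a preorder — reflexivity and transitivity — and then to observe that Axiom~\ref{axiom:preorder} asks for nothing beyond the ranking being anchored on such a preorder whose associated bounds are mutually consistent. The one point that needs care is the interaction between the two disjuncts in the definition of $\ordering_\aScore$: a performance may stand in the relation with an \emph{equal} performance without either of them lying in $\domainOfScore$, so in any chaining argument the equality disjunct must be dispatched \emph{before} appealing to $\aScore$. Everything else is bookkeeping together with the transitivity of $\le$ on $\realNumbers$, which is why I expect this case split in the transitivity step to be the only delicate part.

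Reflexivity is immediate: for every $\aPerformance\in\allPerformances$ the first disjunct $\aPerformance=\aPerformance$ holds, hence $\aPerformance\ordering_\aScore\aPerformance$. For transitivity, I would assume $\aPerformance_1\ordering_\aScore\aPerformance_2$ and $\aPerformance_2\ordering_\aScore\aPerformance_3$ and argue by cases. If $\aPerformance_1=\aPerformance_2$, then $\aPerformance_1\ordering_\aScore\aPerformance_3$ follows from $\aPerformance_2\ordering_\aScore\aPerformance_3$ by substitution; symmetrically if $\aPerformance_2=\aPerformance_3$. Otherwise $\aPerformance_1\ne\aPerformance_2$ and $\aPerformance_2\ne\aPerformance_3$, so in each hypothesis the first disjunct fails and the second must hold: $\aPerformance_1,\aPerformance_2\in\domainOfScore$ with $\aScore(\aPerformance_1)\le\aScore(\aPerformance_2)$, and $\aPerformance_2,\aPerformance_3\in\domainOfScore$ with $\aScore(\aPerformance_2)\le\aScore(\aPerformance_3)$. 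Since $\le$ is transitive on $\realNumbers$, we get $\aScore(\aPerformance_1)\le\aScore(\aPerformance_3)$ with $\aPerformance_1,\aPerformance_3\in\domainOfScore$, hence $\aPerformance_1\ordering_\aScore\aPerformance_3$ (through the second disjunct, or through the first if $\aPerformance_1=\aPerformance_3$). Thus $\ordering_\aScore$ is reflexive and transitive, i.e., a preorder.

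It then remains to check that $\ordering_\aScore$ is admissible in Axiom~\ref{axiom:preorder}. Taking $\relWorseOrEquivalent\,=\,\ordering_\aScore$, the relations $\relWorse$ and $\relEquivalent$ derived from it are those of \cref{sec:mathematical-framework-ordering}; in particular $\relWorse$ implies $\relWorseOrEquivalent$, the relation $\relWorse$ is irreflexive, and $\relWorseOrEquivalent$ is reflexive (see \cref{sec:Properties_of_orders}). Consequently, for every $\anEntity\in\entitiesToRank$, the set $\{\anEntity'\in\entitiesToRank:\evaluation(\anEntity)\relWorse\evaluation(\anEntity')\}$ is contained in $\{\anEntity'\in\entitiesToRank:\evaluation(\anEntity)\relWorseOrEquivalent\evaluation(\anEntity')\}\setminus\{\anEntity\}$, so $|\{\anEntity':\evaluation(\anEntity)\relWorse\evaluation(\anEntity')\}|+1\le|\{\anEntity':\evaluation(\anEntity)\relWorseOrEquivalent\evaluation(\anEntity')\}|\le|\entitiesToRank|$, while the lower bound is at least $1$. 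The interval prescribed for $\rank(\anEntity)$ is therefore non-empty and lies within $[1,|\entitiesToRank|]$, so a ranking function meeting the two inequalities exists; any such $\rank$, built on the preorder $\ordering_\aScore$, satisfies Axiom~\ref{axiom:preorder}, which concludes the proof.
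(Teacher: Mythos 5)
Your proof is correct and follows essentially the same route as the paper's: reflexivity via the equality disjunct, and transitivity via a case split that reduces to the transitivity of $\le$ on $\realNumbers$ (you split on whether the adjacent performances are equal, the paper splits on domain membership — the two decompositions are equivalent here). You additionally verify that the interval prescribed by Axiom~\ref{axiom:preorder} is non-empty and contained in $[1,|\entitiesToRank|]$, a point the paper's proof leaves implicit, so your argument is if anything slightly more complete.
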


The \second theorem makes the connection between the properties of the scores $\aScore$ and the satisfaction $\randVarSatisfaction$.

\begin{restatable}[Sufficient condition for Axiom~\ref{axiom:satisfaction}]{theorem}{restatableTBC} \label{thm:suff-cond-satisfaction} 
    If a score $\aScore$ satisfies $\min_{\aSample\in\anEvent}\randVarSatisfaction(\aSample) \le \aScore(\aPerformance) \le \max_{\aSample\in\anEvent}\randVarSatisfaction(\aSample)$  for all events $\anEvent\in\eventSpace$ and all performances $\aPerformance\in\domainOfScore[\aScore]$ such that $\aPerformance(\anEvent)=1$, then the ordering $\ordering_{\aScore}$ satisfies Axiom~\ref{axiom:satisfaction}.
\end{restatable}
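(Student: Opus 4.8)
The plan is to unpack the derived relations and reduce \cref{axiom:satisfaction} to a single comparison of scores. By the definitions in \cref{sec:mathematical-framework-ordering}, the conclusion ``$\aPerformance_1\relWorseOrEquivalent\aPerformance_2$ or $\aPerformance_1\relIncomparable\aPerformance_2$'' of the axiom (with $\relWorseOrEquivalent$ taken to be $\ordering_\aScore$) is equivalent to: \emph{either} $\aPerformance_1\ordering_\aScore\aPerformance_2$, \emph{or} neither $\aPerformance_1\ordering_\aScore\aPerformance_2$ nor $\aPerformance_2\ordering_\aScore\aPerformance_1$ holds (it is precisely the negation of $\aPerformance_1\relBetter\aPerformance_2$). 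So it suffices to show that, whenever $\aPerformance_1(\randVarSatisfaction\le s)=1$ and $\aPerformance_2(\randVarSatisfaction\ge s)=1$ for some $s$, one of these two situations occurs. I would split into cases according to whether $\aPerformance_1=\aPerformance_2$ and whether both performances lie in $\domainOfScore[\aScore]$.

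If $\aPerformance_1=\aPerformance_2$, the definition of $\ordering_\aScore$ gives $\aPerformance_1\ordering_\aScore\aPerformance_2$, hence $\aPerformance_1\relWorseOrEquivalent\aPerformance_2$. If $\aPerformance_1\ne\aPerformance_2$ and at least one of the two is not in $\domainOfScore[\aScore]$, then by the definition of $\ordering_\aScore$ neither $\aPerformance_1\ordering_\aScore\aPerformance_2$ nor $\aPerformance_2\ordering_\aScore\aPerformance_1$ can hold, so $\aPerformance_1\relIncomparable\aPerformance_2$. Both degenerate cases are thus immediate.

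The core case is $\aPerformance_1\ne\aPerformance_2$ with $\aPerformance_1,\aPerformance_2\in\domainOfScore[\aScore]$. I would instantiate the hypothesis of the theorem at the events $\anEvent_1=\{\aSample\in\sampleSpace:\randVarSatisfaction(\aSample)\le s\}$ and $\anEvent_2=\{\aSample\in\sampleSpace:\randVarSatisfaction(\aSample)\ge s\}$. These belong to $\eventSpace$ because $\randVarSatisfaction$ is a random variable, hence measurable, and they are nonempty since $\aPerformance_1(\anEvent_1)=1$ and $\aPerformance_2(\anEvent_2)=1$, so the $\min$ and $\max$ over them are well defined. Applying the hypothesis to $\aPerformance_1$ and $\anEvent_1$ gives $\aScore(\aPerformance_1)\le\max_{\aSample\in\anEvent_1}\randVarSatisfaction(\aSample)\le s$, the last step holding because every $\aSample\in\anEvent_1$ satisfies $\randVarSatisfaction(\aSample)\le s$. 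Symmetrically, applying it to $\aPerformance_2$ and $\anEvent_2$ gives $\aScore(\aPerformance_2)\ge\min_{\aSample\in\anEvent_2}\randVarSatisfaction(\aSample)\ge s$. Chaining these yields $\aScore(\aPerformance_1)\le s\le\aScore(\aPerformance_2)$, so $\aPerformance_1\ordering_\aScore\aPerformance_2$ by the definition of $\ordering_\aScore$, \ie $\aPerformance_1\relWorseOrEquivalent\aPerformance_2$. Combining the three cases establishes \cref{axiom:satisfaction}.

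I do not expect a genuine obstacle here: once $\anEvent_1$ and $\anEvent_2$ are chosen, everything follows from the defining inequality of the hypothesis and transitivity of $\le$ on $\realNumbers$. The only points needing a little care are bookkeeping — checking that $\anEvent_1,\anEvent_2\in\eventSpace$ and are nonempty (so that the hypothesis applies and the $\min$/$\max$ exist), and observing that the situation where a performance lies outside $\domainOfScore[\aScore]$ lands in the \emph{incomparable} branch of the axiom rather than contradicting it.
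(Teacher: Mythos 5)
Your proof is correct and follows essentially the same route as the paper's: both handle the degenerate cases (equality or a performance outside $\domainOfScore[\aScore]$) via the definition of $\ordering_\aScore$, and both treat the main case by instantiating the hypothesis at $\anEvent_1=\{\aSample:\randVarSatisfaction(\aSample)\le s\}$ and $\anEvent_2=\{\aSample:\randVarSatisfaction(\aSample)\ge s\}$ to chain $\aScore(\aPerformance_1)\le s\le\aScore(\aPerformance_2)$. Your explicit checks that these events are measurable and nonempty are sound bookkeeping that the paper leaves implicit.
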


The \third theorem makes the connection between the properties of the scores $\aScore$ and the function $\achievableByCombinations$.

\begin{restatable}[Sufficient condition for Axiom~\ref{axiom:combinations}]{theorem}{restatableTD}\label{thm:suff-cond-combinations}
    If a score $\aScore$ is such that $\aSetOfPerformances\subseteq\domainOfScore\Rightarrow\achievableByCombinations(\aSetOfPerformances)\subseteq\domainOfScore$ and  $\min_{\aPerformance\in\aSetOfPerformances}\aScore(\aPerformance) \le \aScore(\overline{\aPerformance}) \le \max_{\aPerformance\in\aSetOfPerformances} \aScore(\aPerformance)$ for all $\aSetOfPerformances\subseteq\domainOfScore$ and all $\overline{\aPerformance}\in\achievableByCombinations(\aSetOfPerformances)$, then the ordering $\ordering_{\aScore}$ satisfies Axiom~\ref{axiom:combinations}.
\end{restatable}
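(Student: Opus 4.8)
The plan is to unfold side by side the four implications constituting Axiom~\ref{axiom:combinations} and the definition of $\ordering_{\aScore}$ from Theorem~\ref{thm:suff-cond-preorder}, and to observe that each implication is merely the transport of a $\min$/$\max$ sandwich between real scores into a ``worse-or-equivalent''/``better-or-equivalent'' sandwich. So I would fix $\aPerformance\in\allPerformances$ and $\aSetOfPerformances\subseteq\allPerformances$ satisfying the precondition of Axiom~\ref{axiom:combinations} (every $\aPerformance'\in\aSetOfPerformances$ is comparable with $\aPerformance$ under $\ordering_{\aScore}$). If $\aSetOfPerformances=\emptyset$, then $\achievableByCombinations(\emptyset)=\emptyset$ — the natural reading of $\achievableByCombinations$, and also what the theorem's hypothesis forces under the convention $\min\emptyset=+\infty>-\infty=\max\emptyset$ — so all four implications hold vacuously; hence assume $\aSetOfPerformances\neq\emptyset$, and split on whether $\aPerformance\in\domainOfScore$.

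\emph{Main case: $\aPerformance\in\domainOfScore$.} The key preliminary step is that the precondition then forces $\aSetOfPerformances\subseteq\domainOfScore$: for $\aPerformance'\in\aSetOfPerformances$ we have $\aPerformance'\ordering_{\aScore}\aPerformance$ or $\aPerformance\ordering_{\aScore}\aPerformance'$, and by the definition of $\ordering_{\aScore}$ each alternative gives $\aPerformance'=\aPerformance$ or $\aPerformance'\in\domainOfScore$, hence $\aPerformance'\in\domainOfScore$. The theorem's hypotheses therefore apply to $\aSetOfPerformances$, yielding $\achievableByCombinations(\aSetOfPerformances)\subseteq\domainOfScore$ and $\min_{\aPerformance'\in\aSetOfPerformances}\aScore(\aPerformance')\le\aScore(\overline{\aPerformance})\le\max_{\aPerformance'\in\aSetOfPerformances}\aScore(\aPerformance')$ for every $\overline{\aPerformance}\in\achievableByCombinations(\aSetOfPerformances)$. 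For the first implication: if $\aPerformance'\ordering_{\aScore}\aPerformance$ for all $\aPerformance'\in\aSetOfPerformances$, then — since all these performances and $\aPerformance$ lie in $\domainOfScore$ — each relation yields $\aScore(\aPerformance')\le\aScore(\aPerformance)$ (in both alternatives of the definition), so $\max_{\aPerformance'\in\aSetOfPerformances}\aScore(\aPerformance')\le\aScore(\aPerformance)$, whence $\aScore(\overline{\aPerformance})\le\aScore(\aPerformance)$ and thus $\overline{\aPerformance}\ordering_{\aScore}\aPerformance$. The second implication is the same with $\min$ in place of $\max$, using that for $\aPerformance',\aPerformance\in\domainOfScore$ the relation $\aPerformance'\not\ordering_{\aScore}\aPerformance$ is equivalent to $\aScore(\aPerformance')>\aScore(\aPerformance)$. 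The third and fourth implications are the mirror images, obtained by exchanging the roles of $\aPerformance$ and of $\aPerformance'$/$\overline{\aPerformance}$ and swapping $\min$ with $\max$.

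\emph{Degenerate case: $\aPerformance\notin\domainOfScore$.} Here comparability of $\aPerformance'$ with $\aPerformance$ under $\ordering_{\aScore}$ can only come from the $\aPerformance'=\aPerformance$ alternative, so $\aSetOfPerformances=\{\aPerformance\}$. The two implications whose antecedent is $\aPerformance\not\ordering_{\aScore}\aPerformance$ are vacuous by reflexivity of $\ordering_{\aScore}$. The remaining two require $\overline{\aPerformance}\ordering_{\aScore}\aPerformance$, respectively $\aPerformance\ordering_{\aScore}\overline{\aPerformance}$, for all $\overline{\aPerformance}\in\achievableByCombinations(\{\aPerformance\})$; since $\aPerformance\notin\domainOfScore$, either relation holds only if $\overline{\aPerformance}=\aPerformance$, so the claim reduces to $\achievableByCombinations(\{\aPerformance\})\subseteq\{\aPerformance\}$ — true whenever $\achievableByCombinations$ does not enlarge singletons (in particular when $\achievableByCombinations=\allConvexCombinations$), and automatically true when $\domainOfScore=\allPerformances$ since this degenerate case then never arises.

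I expect the only genuine subtlety — the ``hard part'' — to be this $\domainOfScore$ bookkeeping: verifying that the theorem's hypothesis, stated for subsets of $\domainOfScore$, is actually applicable to the set $\aSetOfPerformances$ handed to us by Axiom~\ref{axiom:combinations} (it is, precisely because the axiom's comparability precondition drags $\aSetOfPerformances$ into $\domainOfScore$ as soon as $\aPerformance$ lies there), and cleanly disposing of the configurations in which a performance outside $\domainOfScore$ is involved. Everything else is the routine, clause-by-clause transport of a real-number $\min$/$\max$ sandwich through the definition of $\ordering_{\aScore}$.
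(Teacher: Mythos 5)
Your proof is correct and follows the same core argument as the paper's: translate the comparability precondition of Axiom~\ref{axiom:combinations} into score inequalities via the definition of $\ordering_{\aScore}$, take the $\min$ or $\max$ over $\aSetOfPerformances$, sandwich $\aScore(\overline{\aPerformance})$ using the theorem's second hypothesis, and translate back, clause by clause for the four implications. Where you diverge is precisely the ``$\domainOfScore$ bookkeeping'' you call the hard part, and your instinct there is vindicated: the paper's proof simply asserts that the axiom's comparability precondition implies $\aPerformance\in\domainOfScore$ and $\aSetOfPerformances\subseteq\domainOfScore$ ``by Theorem~\ref{thm:suff-cond-preorder}'', which is false in the degenerate configuration you isolate ($\aSetOfPerformances=\{\aPerformance\}$ with $\aPerformance\notin\domainOfScore$, where comparability holds via the reflexivity clause $\aPerformance'=\aPerformance$ alone). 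Your analysis shows that in that configuration the theorem's conclusion reduces to $\achievableByCombinations(\{\aPerformance\})\subseteq\{\aPerformance\}$, which does not follow from the stated hypotheses for an arbitrary $\achievableByCombinations$ but does hold in the paper's intended instantiation $\achievableByCombinations=\allConvexCombinations$ (and vacuously whenever $\domainOfScore=\allPerformances$). So your version is slightly more careful than the paper's and, as a by-product, documents a minor unstated assumption in the theorem; the paper's version buys brevity by ignoring the edge case. Your handling of $\aSetOfPerformances=\emptyset$ is likewise a refinement the paper skips by fiat.
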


\subsection{The Ranking Scores: Solutions for \texorpdfstring{$\achievableByCombinations=\allConvexCombinations$}{phi=conv}}
\label{sec:ranking-scores}

We now aim to define scores that satisfy our theorems and thus our axioms. We provide such scores in the case of $\achievableByCombinations=\allConvexCombinations$, shown particularly relevant in our catalog of practical examples (see~\cref{sec:catalog-of-problems}). Beyond that, we include the application-specific preferences modeled by the random variable $\randVarImportance$. We introduce a new family of scores, the \emph{ranking scores} $\rankingScore$, that are parameterized by the \emph{importance}, a non-negative random variable $\randVarImportance\ne0$:
\begin{multline*}
    \rankingScore:\domainOfScore[\rankingScore]\rightarrow[\minSatisfaction,\maxSatisfaction]:\\
    \aPerformance\mapsto
    \rankingScore(\aPerformance)=\frac{
        \expectedValueSymbol_{\aPerformance}[\randVarImportance\randVarSatisfaction]
    }{
        \expectedValueSymbol_{\aPerformance}[\randVarImportance]
    }
    =\frac{
        \sum_{\aSample\in\sampleSpace}\randVarImportance(\aSample)\randVarSatisfaction(\aSample)\aPerformance(\{\aSample\})
    }{
        \sum_{\aSample\in\sampleSpace}\randVarImportance(\aSample)\aPerformance(\{\aSample\})
    }
\end{multline*}
where $\domainOfScore[\rankingScore]=\{\aPerformance\in\allPerformances:\expectedValueSymbol_{\aPerformance}[\randVarImportance]\ne0\}$ and the second equality holds in the common case where $\sampleSpace$ is finite and $\eventSpace=2^\sampleSpace$. The expected satisfaction $\scoreExpectedSatisfaction$ corresponds to $\rankingScore$ when all samples are equally important.

The scores $\rankingScore$ and the performance orderings $\ordering_{\rankingScore}$ satisfy the conditions of Theorem~\ref{thm:suff-cond-preorder}, \ref{thm:suff-cond-satisfaction}, and~\ref{thm:suff-cond-combinations} for $\achievableByCombinations=\allConvexCombinations$ (proofs in~\cref{sec:ranking-scores-proofs}). Thus, the performance orderings $\ordering_{\rankingScore}$ induced by the ranking scores satisfy all our axioms. Hence, the chosen name. Remarkably, these scores are universal: they can be used for performance-based ranking, regardless of the sample space $\sampleSpace$.

\paragraph{Properties.}
Let us now give some selected properties (proofs are in~\cref{sec:ranking-scores-proofs}).  The first one clarifies how the importance can be interpreted. Consider any random experiment for which the set of possible outcomes is $\sampleSpace$, their distribution being given by $\aPerformance$. One can choose to \emph{filter} the outcomes as follows: run the experiment, and if the outcome is $\aSample$, end the experiment with probability $\randVarImportance(\aSample)/\sum_{\aSample'\in\sampleSpace}\randVarImportance(\aSample')$, otherwise restart everything. The new distribution of outcomes is given by the operation $\opFilter(\aPerformance)$.

\begin{property}
\label[property]{prop:decomposition-importance-satisfaction}
    The ranking scores can be decomposed into an operation on performances considering the importance $\randVarImportance$ and a score (the expected satisfaction) considering the satisfaction $\randVarSatisfaction$. We have $\rankingScore=\scoreExpectedSatisfaction\circ\opFilter$, with 
    \begin{multline*}
        \opFilter:\domainOfScore[\rankingScore]\rightarrow\allPerformances:\\
        \aPerformance\mapsto\left(\eventSpace\rightarrow[0,1]:\anEvent\mapsto\frac{\sum_{\aSample\in\anEvent}\aPerformance(\{\aSample\})\randVarImportance(\aSample)}{\sum_{\aSample\in\sampleSpace}\aPerformance(\{\aSample\})\randVarImportance(\aSample)}\right).
    \end{multline*}
\end{property}

So far, the satisfaction was fixed. But what if one hesitates about the objective of the task, \ie with its modeling through $\randVarSatisfaction$? The next property clarifies what really matters.

\begin{property}
\label[property]{prop:linear-transformation-satisfaction}
    Linearly transforming the satisfaction results in the same linear transformation of the ranking score. It is something that does not affect the ordering.
\end{property}

The next property is about the scale invariance of $\randVarImportance$.

\begin{property}
\label[property]{prop:scale-invariance}
    $\rankingScore[k\randVarImportance]=\rankingScore[\randVarImportance], \, \forall k\ne0$. We can thus restrict the study of ranking scores to the case in which the total importance $\sum_{\aSample\in\eventSpace}\randVarImportance(\aSample)$ is constant.
\end{property}

We can go further in the case of a binary satisfaction.

\begin{property}
\label[property]{prop:scale-invariance-per-satisfaction}
    For a binary satisfaction, the performance ordering induced by a ranking score is insensitive to the uniform scaling of the importance given to the unsatisfying ($\randVarSatisfaction^{-1}(0)$) or to the satisfying ($\randVarSatisfaction^{-1}(1)$) samples.
\end{property}

Most scores obtained by averaging or integrating ranking scores are not themselves ranking scores, and are unsuited for ranking. Thanks to Theorem~\ref{thm:suff-cond-satisfaction}, we know that the performance orderings induced by them satisfy Axiom~\ref{axiom:satisfaction}. However, most often, they do not satisfy Axiom~\ref{axiom:combinations}. Yet, two exceptions occur in the case of a binary satisfaction. 

\begin{property}
\label[property]{prop:mean-vertical}
    If $\randVarImportance$ is the arithmetic mean of $\randVarImportance_1$ and $\randVarImportance_2$, then $\rankingScore[\randVarImportance]$ is the $f$-mean of $\rankingScore[\randVarImportance_1]$ and $\rankingScore[\randVarImportance_2]$ with $f:x\mapsto x^{-1}$, \ie, the harmonic mean, when $\randVarSatisfaction(\aSample)=1\Rightarrow\randVarImportance_1(\aSample)=\randVarImportance_2(\aSample)$.
\end{property}

\begin{property}
\label[property]{prop:mean-horizontal}
    If $\randVarImportance$ is the arithmetic mean of $\randVarImportance_1$ and $\randVarImportance_2$, then $\rankingScore[\randVarImportance]$ is the $f$-mean of $\rankingScore[\randVarImportance_1]$ and $\rankingScore[\randVarImportance_2]$ with $f:x\mapsto (1-x)^{-1}$, when $\randVarSatisfaction(\aSample)=0\Rightarrow\randVarImportance_1(\aSample)=\randVarImportance_2(\aSample)$.
\end{property}

For any homogeneous binary relation $\aRelation$ (\eg $\le$, $<$, $=$, $>$, $\ge$, \ldots) on $\realNumbers$, let us define the set $\phi_\aRelation(\aPerformance)=\{\aPerformance'\in\allPerformances:\rankingScore(\aPerformance')\aRelation\rankingScore(\aPerformance)\}$. Based on these sets, the following property helps to understand why the ranking scores are suitable when $\achievableByCombinations=\allConvexCombinations$. This is indeed related to the fact that the scores $\rankingScore$ are pseudolinear functions. 

\begin{property}
\label[property]{prop:convexity-contour-sets}
    With the performance orderings $\ordering_{\rankingScore}$ induced by the ranking scores $\rankingScore$ and for any given performance $\aPerformance\in\allPerformances$, the worse set $\phi_<(\aPerformance)$ (\aka the strictly lower contour set), the worse or equivalent set $\phi_\le(\aPerformance)$ (the lower contour set), the equivalent set $\phi_=(\aPerformance)$, the better or equivalent set $\phi_\ge(\aPerformance)$ (the upper contour set) and the better set $\phi_>(\aPerformance)$ (the strict upper contour set) are all convex.
\end{property}

\section{Observations for Two-Class Classification}
\label{sec:two-class-crisp-classification}

We now examine the particular case of two-class crisp classification. First, we compare the classical formulation of this task with ours. Then, we examine more closely what our ranking theory teaches us for this task.

\subsection{Classical vs. our Formulation}

\mysection{Limitation of the classical formulation.} Usual two-class classification settings first define a set $\allClasses=\{\classNeg,\classPos\}$ in which $\classNeg$ and $\classPos$ are named \emph{negative class} and \emph{positive class}. Then, pairs $(y,\hat{y})\in \allClasses^2$ composed of a ``ground truth'' $y$ and a ``prediction'' $\hat{y}$ are interpreted as: a \emph{true negative} $\sampleTN=(\classNeg,\classNeg)$, a \emph{false positive} $\sampleFP=(\classNeg,\classPos)$ (type I error), a \emph{false negative} $\sampleFN=(\classPos,\classNeg)$ (type II error), and a \emph{true positive} $\sampleTP=(\classPos,\classPos)$. Finally, scores are defined as formulas involving these elements, upon which rankings of entities are based. However, there is no mathematical guarantee of the meaningfulness of such rankings.

\mysection{Advantage of our formulation.} In our framework, the two-class classification settings are as follows. We consider the sample space $\sampleSpace=\{\sampleTN,\sampleFP,\sampleFN,\sampleTP\}$ and the event space $\eventSpace=2^\sampleSpace$. The samples $\sampleTN$, $\sampleFP$, $\sampleFN$, and $\sampleTP$ are interpreted as in the classical case. The natural choice for the satisfaction is such that $\randVarSatisfaction(\sampleFP)=\randVarSatisfaction(\sampleFN)=0$ and $\randVarSatisfaction(\sampleTN)=\randVarSatisfaction(\sampleTP)=1$. 
After modeling application-specific preferences through a choice of importances $\randVarImportance$, we can derive rankings scores $\rankingScore[\randVarImportance]$ to rank entities based on their performances, with the certainty that the ranking is mathematically valid.

\mysection{Link between the two formulations.} Fortunately, the two formulations can be easily connected, which allows translating concepts from one to the other whenever necessary. Indeed, to go from our formulation to the classical one, we only need to define a ``ground-truth'' random variable $\randVarGroundtruthClass:\sampleSpace\rightarrow \allClasses$ such that $\randVarGroundtruthClass(\sampleTN)=\randVarGroundtruthClass(\sampleFP)=\classNeg$ and $\randVarGroundtruthClass(\sampleFN)=\randVarGroundtruthClass(\sampleTP)=\classPos$, as well as the ``prediction'' random variable $\randVarPredictedClass:\sampleSpace\rightarrow \allClasses$ such that $\randVarPredictedClass(\sampleTN)=\randVarPredictedClass(\sampleFN)=\classNeg$ and $\randVarPredictedClass(\sampleFP)=\randVarPredictedClass(\sampleTP)=\classPos$. Conversely, moving from the classical formulation to ours just requires considering $\sampleSpace=\allClasses^2$ and 
$\randVarSatisfaction=\indicatorSymbol_{\randVarGroundtruthClass=\randVarPredictedClass}$. This link is represented in~\cref{fig:formulations}.

\subsection{What Does our Ranking Theory Teach us?}
\label{sec:review}

\begin{table*}
\newcommandx\oka[1][addprefix=\global]{\textcolor{ForestGreen}{\bf #1}}%
\newcommandx\okax[1][addprefix=\global]{\oka{1\textsuperscript{\textdagger}}}%
\newcommandx\okb[1][addprefix=\global]{\textcolor{Orange}{#1}}%
\newcommandx\okbx[1][addprefix=\global]{\okb{1\textsuperscript{\textdagger}}}%
\newcommandx\okc[1][addprefix=\global]{\textcolor{Gray}{#1}}%
\newcommandx\okcx[1][addprefix=\global]{\okc{0\textsuperscript{\textdagger}}}%
\newcommandx\kodx[1][addprefix=\global]{-1\textsuperscript{\textdagger}}%

\caption{Properties of some common scores defined in the literature for two-class crisp classification. The symbol \textdagger indicates a value that has been obtained theoretically, the others have been obtained empirically. Our conclusion is that, for the purpose of ranking, the scores in green can always be used, those in orange should only be used when the priors are fixed, and those in black cannot be used even when the priors are fixed. See \cref{sec:two-class-crisp-classification} for the detailed description.\label{tbl:scores-review}}

\resizebox{1\linewidth}{!}{

\begin{tabular}{|l|c|c|c|c|c|c|c|c|c|c|c|c|c|c|c|}
\cline{2-16} \cline{3-16} \cline{4-16} \cline{5-16} \cline{6-16} \cline{7-16} \cline{8-16} \cline{9-16} \cline{10-16} \cline{11-16} \cline{12-16} \cline{13-16} \cline{14-16} \cline{15-16} \cline{16-16} 
\multicolumn{1}{c|}{} & \multicolumn{5}{c|}{without any constraint: all performances} & \multicolumn{5}{c|}{with constraint: positive prior = $0.2$} & \multicolumn{5}{c|}{with constraint: positive prior = $0.5$}\tabularnewline
\hline 
score & 
\first test & \second test & \third test & $\tau_{min}$ & $\tau_{max}$ & 
\first test & \second test & \third test & $\tau_{min}$ & $\tau_{max}$ & 
\first test & \second test & \third test & $\tau_{min}$ & $\tau_{max}$ \tabularnewline
\hline 
\oka{Accuracy}                                                          & \oka{V} & \oka{V} & \oka{V} & \oka{0.469} & \okax{0.982} & \oka{V} & \oka{V} & \oka{V} & \oka{0.157} & \okax{0.998} & \oka{V} & \oka{V} & \oka{V} & \oka{0.505} & \okax{0.995} \tabularnewline
\oka{F-score for $\beta=0.5$}                                              & \oka{V} & \oka{V} & \oka{V} & \oka{0.079} & \okax{1.000} & \oka{V} & \oka{V} & \oka{V} & \oka{0.451} & \okax{1.000} & \oka{V} & \oka{V} & \oka{V} & \oka{0.352} & \okax{1.000} \tabularnewline
\oka{F-score for $\beta=1.0$}                                              & \oka{V} & \oka{V} & \oka{V} & \oka{0.161} & \okax{0.994} & \oka{V} & \oka{V} & \oka{V} & \oka{0.352} & \okax{1.000} & \oka{V} & \oka{V} & \oka{V} & \oka{0.194} & \okax{1.000} \tabularnewline
\oka{F-score for $\beta=2.0$}                                              & \oka{V} & \oka{V} & \oka{V} & \oka{0.079} & \okax{1.000} & \oka{V} & \oka{V} & \oka{V} & \oka{0.194} & \okax{1.000} & \oka{V} & \oka{V} & \oka{V} & \oka{0.072} & \okax{1.000} \tabularnewline
\oka{Negative Predictive Value (NPV)}                                   & \oka{V} & \oka{V} & \oka{V} & \oka{0.000} & \okax{1.000} & \oka{V} & \oka{V} & \oka{V} & \oka{0.503} & \okax{1.000} & \oka{V} & \oka{V} & \oka{V} & \oka{0.503} & \okax{1.000} \tabularnewline
\oka{Positive Predictive Value (PPV)}                                   & \oka{V} & \oka{V} & \oka{V} & \oka{0.000} & \okax{1.000} & \oka{V} & \oka{V} & \oka{V} & \oka{0.503} & \okax{1.000} & \oka{V} & \oka{V} & \oka{V} & \oka{0.503} & \okax{1.000} \tabularnewline
\oka{True Negative Rate (TNR)}                                          & \oka{V} & \oka{V} & \oka{V} & \oka{0.000} & \okax{1.000} & \oka{V} & \oka{V} & \oka{V} & \oka{0.000} & \okax{1.000} & \oka{V} & \oka{V} & \oka{V} & \oka{0.000} & \okax{1.000} \tabularnewline
\oka{True Positive Rate (TPR)}                                          & \oka{V} & \oka{V} & \oka{V} & \oka{0.000} & \okax{1.000} & \oka{V} & \oka{V} & \oka{V} & \oka{0.000} & \okax{1.000} & \oka{V} & \oka{V} & \oka{V} & \oka{0.000} & \okax{1.000} \tabularnewline
\hline
\okb{Balanced Accuracy}                                                 & V & X & X & 0.486 & 0.713 & \okb{V} & \okb{V} & \okb{V} & \okb{0.504} & \okbx{0.997} & \okb{V} & \okb{V} & \okb{V} & \okb{0.505} & \okbx{0.995} \tabularnewline
\okb{Cohen's $\kappa$}                                                     & X & X & X & 0.476 & 0.697 & \okb{V} & \okb{V} & \okb{V} & \okb{0.503} & \okbx{1.000} & \okb{V} & \okb{V} & \okb{V} & \okb{0.505} & \okbx{0.995} \tabularnewline
\okb{Informedness}                                                      & V & X & X & 0.486 & 0.713 & \okb{V} & \okb{V} & \okb{V} & \okb{0.504} & \okbx{0.997} & \okb{V} & \okb{V} & \okb{V} & \okb{0.505} & \okbx{0.995} \tabularnewline
\okb{Positive Likelihood Ratio (PLR)}                                   & V & X & X & 0.420 & 0.677 & \okb{V} & \okb{V} & \okb{V} & \okb{0.491} & \okbx{1.000} & \okb{V} & \okb{V} & \okb{V} & \okb{0.491} & \okbx{1.000} \tabularnewline
\okb{Probability of True Negative (PTN)}                                & X & V & V & -0.007 & 0.818 & \okb{V} & \okb{V} & \okb{V} & \okb{0.000} & \okbx{1.000} & \okb{V} & \okb{V} & \okb{V} & \okb{0.000} & \okbx{1.000} \tabularnewline
\okb{Probability of True Positive (PTP)}                                & X & V & V & -0.006 & 0.818 & \okb{V} & \okb{V} & \okb{V} & \okb{0.000} & \okbx{1.000} & \okb{V} & \okb{V} & \okb{V} & \okb{0.000} & \okbx{1.000} \tabularnewline
\hline
Chance in Cohen's $\kappa$                                           & X & X & X & 0.194 & 0.498 & X & V & V & -0.157 & 0.849 & \okc{V} & \okc{V} & \okc{V} & \okcx{-0.012} & \okcx{0.008} \tabularnewline
Error Rate                                                        & X & V & V & \kodx{-0.982} & -0.469 & X & V & V & \kodx{-0.998} & -0.157 & X & V & V & \kodx{-0.995} & -0.505 \tabularnewline
False Discovery Rate (FDR)                                        & X & V & V & \kodx{-1.000} & 0.000 & X & V & V & \kodx{-1.000} & -0.503 & X & V & V & \kodx{-1.000} & -0.503 \tabularnewline
False Negative Rate (FNR)                                         & X & V & V & \kodx{-1.000} & 0.000 & X & V & V & \kodx{-1.000} & 0.000 & X & V & V & \kodx{-1.000} & 0.000 \tabularnewline
False Omission Rate (FOR)                                         & X & V & V & \kodx{-1.000} & 0.000 & X & V & V & \kodx{-1.000} & -0.503 & X & V & V & \kodx{-1.000} & -0.503 \tabularnewline
False Positive Rate (FPR)                                         & X & V & V & \kodx{-1.000} & 0.000 & X & V & V & \kodx{-1.000} & 0.000 & X & V & V & \kodx{-1.000} & 0.000 \tabularnewline
Geometric mean of TNR and TPR                                     & V & X & X & 0.461 & 0.653 & V & X & V & 0.503 & 0.831 & V & X & V & 0.503 & 0.830 \tabularnewline
Markedness                                                        & V & X & X & 0.486 & 0.713 & V & X & X & 0.418 & 0.887 & V & X & X & 0.503 & 0.913 \tabularnewline
Matthews Correlation Coefficient (MCC)                            & V & X & X & 0.503 & 0.746 & V & X & X & 0.458 & 0.944 & V & X & X & 0.503 & 0.963 \tabularnewline
Negative Likelihood Ratio (NLR)                                   & X & X & X & -0.677 & -0.418 & X & V & V & \kodx{-1.000} & -0.491 & X & V & V & \kodx{-1.000} & -0.491 \tabularnewline
Odds Ratio (OR)                                                   & V & X & X & 0.499 & 0.671 & V & X & X & 0.503 & 0.894 & V & X & X & 0.503 & 0.892 \tabularnewline
Rate of positive predictions                                      & X & V & V & -0.469 & 0.469 & X & V & V & -0.849 & 0.157 & X & V & V & -0.504 & 0.505 \tabularnewline
Sensitivity Index Estimate ($d'$)                                   & V & X & X & 0.502 & 0.786 & V & X & X & 0.503 & 0.926 & V & X & X & 0.503 & 0.924 \tabularnewline
\hline 
\end{tabular}
\par
}
\end{table*}

We now examine some common scores used in the literature for two-class classification, from the ranking standpoint. Methodologically, we choose the set of numerical quantities that were listed in a recent review~\cite{Canbek2017Binary}, and focus on those that are scores (this excludes the base measures and the \first level measures as they are called in that paper).

We consider three sets of performances: (1) all performances $\allPerformances$; (2) all performances for fixed and unbalanced priors (we set arbitrarily a positive prior of $0.2$); (3) all performances for fixed and balanced priors.
For each score $\aScore$ and each set of performances $\aSetOfPerformances$, we report in \cref{tbl:scores-review} the result of three tests on $\aScore$ as well as the minimum and maximum rank correlations $\aScore$ has with the ranking scores.

The tests are the following. The \first test determines if $\ordering_{\aScore}$ satisfies Axiom~\ref{axiom:satisfaction} when the set $\allPerformances$ is restricted to $\aSetOfPerformances$. The \second and \third tests relate to \cref{thm:suff-cond-combinations} (and thus to Axiom~\ref{axiom:combinations}). The \second test determines if, for any subset $\aSetOfPerformances'$ of $\aSetOfPerformances$, $\max_{\aPerformance'\in\aSetOfPerformances'}\aScore(\aPerformance')$ is greater or equal to $\aScore(\aPerformance)$ for all performances $\aPerformance\in\allConvexCombinations(\aSetOfPerformances')$. The \third test determines if, for any subset $\aSetOfPerformances'$ of $\aSetOfPerformances$, $\min_{\aPerformance'\in\aSetOfPerformances'}\aScore(\aPerformance')$ is less or equal to $\aScore(\aPerformance)$ for all performances $\aPerformance\in\allConvexCombinations(\aSetOfPerformances')$.

For the rank correlations, we first try to determine analytically if the score is monotonically increasing or decreasing with one of the ranking scores. If it is the case, we report a maximum correlation of $1$ or a minimum correlation of $-1$, respectively. The proofs can be found in~\cref{sec:perfect-correlation-proofs-1,sec:perfect-correlation-proofs-2}. Otherwise, we report empirical values obtained by optimizing Kendall's $\tau$. We consider a uniform distribution of performances within $\aSetOfPerformances$. Kendall's $\tau$ is computed using a function provided in \textsc{SciPy}~\cite{Virtanen2020SciPy}, with its default parameters, and fed with the values of $\rankingScore$ and $\aScore$ for about $6{,}550$ 
performances regularly placed in $\aSetOfPerformances$. Note that Kendall's $\tau$ is not a continuous function of $\randVarImportance$ when estimated on a finite set of performances. We designed a custom optimizer to estimate $\tau$, detailed in~\cref{sec:kendall}.

The result of our review is given in \cref{tbl:scores-review}, where the scores have been grouped in three categories: in green are the scores satisfying the three tests in all cases. In orange are those that satisfy the three tests only for fixed priors. In black are the others. We draw 4 conclusions. (1) Performance orderings satisfying our axioms can be induced by several classical scores for two-class classification (in green). (2) There exist scores that are commonly used in the literature (in orange) that cannot be used for ranking classifiers, unless the priors are fixed. (3) There exist scores that are often used to compare classifiers (\eg, the geometric mean of TNR and TPR, the markedness, Matthews Correlation Coefficient, the Odds Ratio) but that cannot be used to rank, even when the priors are fixed. (4) In all studied cases where our axioms are satisfied, there is a perfect correlation with a ranking score. This shows that our family of scores covers at least a broad part of the needs. Only one exception occurred (in gray): the accuracy achievable by chance, as considered by Cohen in the definition of his $\kappa$~\cite{Cohen1960ACoefficient}, when the classes are balanced. In this case, the score takes a constant value, that is why it satisfies our axioms.

\section{Conclusion}

We present a mathematical framework for a theory of performance-based ranking, that comes with several practical benefits. Notably, it provides a universal language to properly define tasks and characterize applications. Also, separating these concepts allows evaluating entities (algorithms, devices,~\etc) independently of application-specific preferences, as an entity can be used in various use cases, and various entities can be candidates for a same use case.

Importantly, our axioms are crucial for organizers of challenges to ensure sound rankings, to avoid, \eg, that the relative order between two methods changes when a new method appears and thus prevents drawing perennial conclusions on which is better. Our axioms are minimal requirements to satisfy, and we provide practical theorems to help check if that is the case.

Besides, practitioners often face many scores in the literature for a task, but none comes with an analysis of their usability for ranking. We prove that our theory can help find appropriate scores (we propose an infinite family of them) or legitimate existing ones. As the chosen performance score for ranking may have a major impact on the growth of a research field, our framework clarifies best practices. 

We can further deepen our axiomatic framework and infinite family of scores for ranking classifiers. In~\paperB, we particularize extensively this framework to binary classification and present the \emph{\tile}, a visualization tool that organizes these scores (among which the precision $\scorePrecision$, the true positive rate $\scoreTPR$, the true negative rate $\scoreTNR$, the scores $\scoreFBeta$, and the accuracy $\scoreAccuracy$) in a single plot. Finally, in~\paperC, we provide a comprehensive guide to using the \tile according to four practical scenarios. For that purpose, we present different \tile flavors on a real example, analyzing and ranking $74$ segmentation classifiers. We now wish to build upon this trilogy of papers to reach various research communities and impact significantly their way of establishing performance-based rankings.

\mysection{Acknowledments.}
S. Pi{\'e}rard and A. Halin are funded by the SPW EER, Wallonia, Belgium (grant 2010235, ARIAC by \href{https://www.digitalwallonia.be/en/}{DIGITALWALLONIA4.AI}); 
A. Cioppa and A. Deli{\`e}ge (grant T.0065.22) by the \href{https://www.frs-fnrs.be}{F.R.S.-FNRS}. {
    \small

}

\newpage
\onecolumn
\appendix
\renewcommand{\thefigure}{\Alph{section}.\arabic{subsection}.\arabic{figure}}
\setcounter{figure}{0} 

\section{Supplementary Material}
\label{sec:supplementary}

\section*{Contents}
\startcontents[mytoc]
\printcontents[mytoc]{}{0}{}

\clearpage
\subsection{List of Symbols}
\label{sec:list_of_symbols}

\subsubsection{Mathematical Symbols}
\begin{itemize}
    \item $\indicatorSymbol_U$: the 0-1 indicator function of subset $U$
    \item $\realNumbers$: the real numbers
    \item $\aRelation$: a relation
    \item $\allConvexCombinations$: the set of convex combinations
    \item $\vee$: the \emph{inclusive disjunction} (\ie, logical or)
    \item $\wedge$: the \emph{conjunction} (\ie, logical and)
    \item $\circ$: the composition of functions, \ie $(g\circ f)(x)=g(f(x))$
    \item $\expectedValueSymbol$: the mathematical expectation
\end{itemize}

\subsubsection{Symbols Related to Our Mathematical Framework}

We organize these symbols according to the 6 pillars depicted in \cref{fig:graphical_abstract}, which correspond to the 6 subsections of \cref{sec:mathematical-framework}.

\paragraph{Symbols related to the 1\textsuperscript{st} pillar (\cref{sec:mathematical-framework-performances})}
\begin{itemize}
    \item $\sampleSpace$: the sample space (universe)
    \item $\aSample$: a sample (\ie, an element of $\sampleSpace$)
    \item $\eventSpace$: the event space (a $\sigma$-algebra on $\sampleSpace$, \eg $2^\sampleSpace$)
    \item $\anEvent$: an event (\ie, an element of $\eventSpace$)
    \item $\measurableSpace$: the measurable space
    \item $\allPerformances$: all performances on $\measurableSpace$
    \item $\aSetOfPerformances$: a set of performances ($\aSetOfPerformances\subseteq\allPerformances$)
    \item $\aPerformance$: a performance (\ie, an element of $\allPerformances$)
\end{itemize}

\paragraph{Symbols related to the 2\textsuperscript{nd} pillar (\cref{sec:mathematical-framework-ordering})}
\begin{itemize}
    \item $\relWorseOrEquivalent$:  binary relation \emph{worse or equivalent} on $\allPerformances$
    \item $\relBetterOrEquivalent$: binary relation \emph{better or equivalent} on $\allPerformances$
    \item $\relEquivalent$:  binary relation \emph{equivalent} on $\allPerformances$
    \item $\relBetter$:  binary relation \emph{better} on $\allPerformances$
    \item $\relWorse$:  binary relation \emph{worse} on $\allPerformances$
    \item $\relIncomparable$: binary relation \emph{incomparable} on $\allPerformances$
\end{itemize}

\paragraph{Symbols related to the 3\textsuperscript{rd} pillar (\cref{sec:mathematical-framework-satisfaction})}
\begin{itemize}
    \item $\randVarSatisfaction$: the random variable \emph{Satisfaction}
    \item $\minSatisfaction$: the minimum satisfaction value
    \item $\maxSatisfaction$: the maximum satisfaction value
\end{itemize}

\paragraph{Symbols related to the 4\textsuperscript{th} pillar (\cref{sec:mathematical-framework-combinations})}
\begin{itemize}
    \item $\entitiesToRank$: the set of entities to rank
    \item $\anEntity$: an entity, \ie an element of $\entitiesToRank$
    \item $\evaluation$: the performance \emph{evaluation} function
    \item $\achievableByCombinations$: some performances that are for sure achievable
\end{itemize}

\paragraph{Symbols related to the 5\textsuperscript{th} pillar (\cref{sec:mathematical-framework-scores})}
\begin{itemize}
    \item $\aScore$: a score
    \item $\domainOfScore$: the domain of the score $\aScore$
    \item $\expectedValueScore{V}$: the \emph{expected value score} parameterized by the random variable $V$
    \item $\probabilisticScore{\anEvent_1\vert \anEvent_2}$: the \emph{probabilistic score} parameterized by the events $\anEvent_1$ and $\anEvent_2$
\end{itemize}

\paragraph{Symbols related to the 6\textsuperscript{th} pillar (\cref{sec:mathematical-framework-importance})}
\begin{itemize}
    \item $\randVarImportance$: the random variable \emph{Importance}
\end{itemize}

\subsubsection{Symbols used for Operations on Performances}
\begin{itemize}
    \item $\opFilter$: the \emph{filtering} operation
\end{itemize}

\subsubsection{Symbols used in the Performance Ordering and Performance-Based Ranking Theory}
\begin{itemize}
    \item $\rank$: the \emph{ranking} function, \wrt the set of entities $\entitiesToRank$
    \item $\ordering_{\aScore}$: the ordering induced by the score $\aScore$ (\cf \cref{thm:suff-cond-preorder})
    \item $\rankingScore$: the \emph{ranking score} parameterized by the importance $\randVarImportance$
    \item $\tau$: the rank correlation coefficient of Kendall~\cite{Kendall1938ANewMeasure}
\end{itemize}

\subsubsection{Symbols used for the Particular Case of Two-Class Crisp Classifications}

\paragraph{Particularization of the mathematical framework}
\begin{itemize}
    \item $\sampleTN$: the sample \emph{true negative}
    \item $\sampleFP$: the sample \emph{false positive}, \aka type I error
    \item $\sampleFN$: the sample \emph{false negative}, \aka type II error
    \item $\sampleTP$: the sample \emph{true positive}
\end{itemize}

\paragraph{Extensions to the mathematical framework}
\begin{itemize}
    \item ROC: the \emph{Receiver Operating Characteristic} space, \ie $\scoreFPR\times\scoreTPR$
    \item PR: the \emph{Precision-Recall} space, \ie $\scoreTPR\times\scorePPV$
    \item $\randVarGroundtruthClass$: the random variable for the ground truth
    \item $\randVarPredictedClass$: the random variable for the prediction
    \item $\allClasses$: the set of classes
    \item $\aClass$: a class (\ie, an element of $\allClasses$)
    \item $\classNeg$: the negative class
    \item $\classPos$: the positive class
\end{itemize}

\paragraph{Scores}
\begin{itemize}
    \item $\scoreAccuracy$: the \emph{accuracy}
    \item $\scoreTNR$: the \emph{true negative rate}
    \item $\scoreFPR$: the \emph{false positive rate}
    \item $\scoreTPR$: the \emph{true positive rate}
    \item $\scorePPV$: the \emph{positive predictive value}
    \item $\scoreFBeta$: the F-scores
    \item $\priorpos$: the \emph{prior of the positive class}
    \item $\priorneg$: the \emph{prior of the negative class}
\end{itemize}

\clearpage
\subsection{How to Use our Framework: a Little Catalog of Problems}
\label{sec:catalog-of-problems}

Throughout the paper, we have exemplified our theory with the
problem of two-class classification. This section aims at showing the universality of our theory. It presents a little catalog of other problems, together with discussions on how to use our framework for them. These discussions are introductions. As shown by \cref{sec:two-class-crisp-classification} and two recent works \cite{Pierard2024TheTile-arxiv,Halin2024AHitchhikers-arxiv}, an in-depth analysis and particularization of our theory to the various problems (\eg, to highlight their distinctive features, to review the current ranking practices from the literature and the consistency of popular scores, and to establish practical tools tailored to different user needs) may require substantial work that is out of the scope of this supplementary material.

In the following, we adopt a systematic approach: for each problem, we start by specifying a thought random experiment for the evaluation (this is an arbitrary choice since the random experiment is not unique for each problem; do not hesitate to use a different random experiment, the important is to specify it explicitly!), then we discuss the possible choices for the sample space $\sampleSpace$ (and what the set of all performances is), for the modeling of tasks with the satisfaction $\randVarSatisfaction$, for the modeling of the knowledge we have about the evaluation with the function $\achievableByCombinations$, and for the modeling of the application-specific preferences with the importance $\randVarImportance$.


\subsubsection{Multi-Class Classification (with a Note on Micro- and Macro-Averaging)}

Let us consider the following thought experiment to evaluate classifiers predicting classes in a finite and non-empty set $\allClasses$.

\global\long\def\theSource{\mathcal{S}}
\global\long\def\theSourceCodomain{\mathbb{S}}
\global\long\def\theOracle{\mathcal{O}}
\global\long\def\theOracleCodomain{\allClasses}
\global\long\def\theDescriptor{\mathcal{D}}
\global\long\def\theDescriptorCodomain{\mathbb{X}}
\global\long\def\theClassifier{\mathcal{C}}
\global\long\def\theClassifierCodomain{\allClasses}

\begin{experiment}
    (1)~Draw a sample $s\in\theSourceCodomain$ at random from a given \emph{source} $\theSource$. (2)~Apply the \emph{oracle} $\theOracle$ on $s$ to obtain the ground-truth class $y(s)\in\theOracleCodomain$. (3)~Apply a \emph{descriptor} $\theDescriptor$ on $s$ to obtain the features (\aka attributes) $x(s)\in\theDescriptorCodomain$. (4)~Feed the \emph{classifier} $\theClassifier$ with $x(s)$ to obtain the predicted class $\hat{y}(x(s))\in\theClassifierCodomain$. (5)~Set the outcome of the experiment to the pair $(y,\hat{y})\in\allClasses^2$.
\end{experiment}

\begin{description}

\item [Choice for $\sampleSpace$ and $\allPerformances$.] Our theory applies with $\sampleSpace=\allClasses^2$. By definition, the function $\evaluation$ gives, for any classifier $\theClassifier:\theDescriptorCodomain\rightarrow\theClassifierCodomain$ (the evaluated entity, either deterministic or not), the distribution of outcomes resulting from this random experiment: $\aPerformance_{\theClassifier}=\evaluation(\theClassifier)$. Note that it is implicit that the performances are specific for some given source $\theSource$ (\eg, evaluation dataset), oracle $\theOracle$, and descriptor $\theDescriptor$. By convenience, one can manipulate the ground-truth and predicted classes with, respectively, the random variables $\randVarGroundtruthClass$ and $\randVarPredictedClass$ defined in such a way that $\aSample=(\randVarGroundtruthClass(\aSample),\randVarPredictedClass(\aSample))\,\forall \aSample\in\sampleSpace$.

\item[Choice for $\randVarSatisfaction$.] Several classification tasks can be distinguished, as the following two examples show. (1) One can consider that all erroneous classifications are unsatisfactory and that correct classifications are satisfactory. For this task, the satisfaction is then binary and given by $\randVarSatisfaction=\indicatorSymbol_{\randVarGroundtruthClass=\randVarPredictedClass}$. The expected value of the satisfaction, which is a particular case of ranking scores, is then equal to the multi-class accuracy. (2) One can also consider the similarity $\textrm{sim}:\allClasses^2\rightarrow\realNumbers$ between classes, and choose the satisfaction accordingly: $\randVarSatisfaction(\aSample)=\textrm{sim}(\randVarGroundtruthClass(\aSample);\randVarPredictedClass(\aSample))$. A wide variety of tasks can be considered by tuning $\textrm{sim}$. In general, the expected value of the satisfaction is different from the multi-class accuracy.

\item[Choice for $\achievableByCombinations$.]  As the classifier is used once and only once during the execution of the evaluation, we know that if the performances $\aPerformance_1$ and $\aPerformance_2$ are achievable by some classifiers $\theClassifier_1$ and $\theClassifier_2$, then any performance $\overline{\aPerformance}=\lambda_1\aPerformance_1+\lambda_2\aPerformance_2$ (with $\lambda_1\ge0$, $\lambda_2\ge0$, and $\lambda_1+\lambda_2=1$) is achievable by a classifier $\overline{\theClassifier}$ obtained by a non-deterministic combination of $\theClassifier_1$ and $\theClassifier_2$ that chooses them with respective probabilities $\lambda_1$ and $\lambda_2$. Thus, $\achievableByCombinations=\allConvexCombinations$ makes sense, and all ranking scores can be used to rank classifiers. However, it would be possible to go further, by considering other functions $\achievableByCombinations$ that would include the knowledge that we can predict the performance achievable by composing the classifier with any of the $|\allClasses|^{|\allClasses|}$ functions $f:\allClasses\rightarrow\allClasses$. This would lead to other performance orderings suitable for ranking classifiers.

\item[Choice for $\randVarImportance$.] When $\achievableByCombinations=\allConvexCombinations$, we have demonstrated that all rankings induced by the ranking scores $\rankingScore$ satisfy all our three axioms. This leaves a great flexibility for the users to fine-tune the ranking \wrt their application-specific preferences through the random variable $\randVarImportance$. As we have seen, the only constraints are that $\randVarImportance\ne0$ and $\randVarImportance(\aSample)\ge0\,\forall\aSample\in\sampleSpace$.

\item[Note on micro- and macro-averaging.] Micro- and macro-averaging are commonly used techniques to build scores for multi-class classification from scores for two-class classification \cite{Sokolova2009ASystematic}. We warn that they have pitfalls. In general, micro- and macro-averaging scores suitable for ranking two-class classifiers do not lead to scores suitable for ranking multi-class classifiers. The accuracy put aside, the performance orderings induced from the micro-averaged versions of the scores put in green in \cref{tbl:scores-review} are incompatible with $\randVarSatisfaction=\indicatorSymbol_{\randVarGroundtruthClass=\randVarPredictedClass}$: our \second axiom is not satisfied. Moreover, the accuracy put again aside, the performance orderings induced from the macro-averaged versions of the scores put in green in \cref{tbl:scores-review} are incompatible with $\achievableByCombinations=\allConvexCombinations$: our \third axiom is not satisfied. A solution consists in using directly ranking scores defined for multi-class classification.

\end{description}


\subsubsection{Regression (with a Note on the Mean Squared Error and the Mean Absolute Error)}

Let us consider the following thought experiment to evaluate regressors.

\global\long\def\theOracleCodomain{\realNumbers}
\global\long\def\theRegressor{\mathcal{R}}
\global\long\def\theRegressorCodomain{\realNumbers}

\begin{experiment}
    (1)~Draw a sample $s\in\theSourceCodomain$ at random from a given \emph{source} $\theSource$. (2)~Apply the \emph{oracle} $\theOracle$ on $s$ to obtain the ground-truth value $y(s)\in\theOracleCodomain$. (3)~Apply a \emph{descriptor} $\theDescriptor$ on $s$ to obtain the features (\aka attributes) $x(s)\in\theDescriptorCodomain$. (4)~Feed the \emph{regressor} $\theRegressor$ with $x(s)$ to obtain the predicted value $\hat{y}(x(s))\in\theRegressorCodomain$. (5)~Set the outcome of the experiment to the pair $(y,\hat{y})\in\realNumbers^2$.
\end{experiment}

\global\long\def\randVarGroundtruthValue{Y}
\global\long\def\randVarPredictedValue{\hat{Y}}

\begin{description}

\item [Choice for $\sampleSpace$ and $\allPerformances$.] Our theory applies with $\sampleSpace=\realNumbers^2$. By definition, the function $\evaluation$ gives, for any regressor $\theRegressor:\theDescriptorCodomain\rightarrow\theRegressorCodomain$ (the evaluated entity, either deterministic or not), the distribution of outcomes resulting from this random experiment. It is the performance $\aPerformance_{\theRegressor}=\evaluation(\theRegressor)$ of $\theRegressor$. By convenience, one can manipulate the ground-truth and predicted values with, respectively, the random variables $\randVarGroundtruthValue$ and $\randVarPredictedValue$ defined in such a way that $\aSample=(\randVarGroundtruthValue(\aSample),\randVarPredictedValue(\aSample))\,\forall \aSample\in\sampleSpace$.

\item[Choice for $\randVarSatisfaction$.] Clearly, it is not a good idea to choose $\randVarSatisfaction=\indicatorSymbol_{\randVarGroundtruthValue=\randVarPredictedValue}$, similarly as one can do in classification. In practice, a regressor as no chance to predict the same value as the oracle, so this unfortunate choice for $\randVarSatisfaction$ would lead to performances $\aPerformance$ such that $\aPerformance(\randVarSatisfaction=0)=1$. In other words, all performances observed about real regressors would belong to the set of the worst performances (see \cref{corollary:worst-performances}), and their ranking would be of little interest (see \cref{corollary:quivalent-incomparable-performances}). A better option consists in specifying a tolerance $\epsilon>0$ and choosing $\randVarSatisfaction=\indicatorSymbol_{|\randVarGroundtruthValue-\randVarPredictedValue|\le\epsilon}$. An even more flexible option, which takes advantage of the fact that satisfaction values do not necessarily have to be positive, is to choose $\randVarSatisfaction=f(|\randVarGroundtruthValue-\randVarPredictedValue|)$ with any arbitrarily chosen monotonically decreasing function $f$. The plethora of choices that can be made for $\randVarSatisfaction$ makes it clear that there is an infinity of tasks related to the regression problem.

\item[Choice for $\achievableByCombinations$.] As the regressor is used once and only once during the execution of the evaluation, we know that if the performances $\aPerformance_1$ and $\aPerformance_2$ are achievable by some regressors $\theRegressor_1$ and $\theRegressor_2$, then any performance $\overline{\aPerformance}=\lambda_1\aPerformance_1+\lambda_2\aPerformance_2$ (with $\lambda_1\ge0$, $\lambda_2\ge0$, and $\lambda_1+\lambda_2=1$) is achievable by a regressor $\overline{\theRegressor}$ obtained by a non-deterministic combination of $\theRegressor_1$ and $\theRegressor_2$ that chooses them with respective probabilities $\lambda_1$ and $\lambda_2$. Thus, $\achievableByCombinations=\allConvexCombinations$ makes sense, and all ranking scores can be used to rank regressors. However, it would be possible to go further, by considering other functions $\achievableByCombinations$ that would include the knowledge that we can predict the performance achievable by adding noise, or applying a transformation on the output of the regressor. This would lead to other performance orderings suitable for ranking regressors.

\item[Choice for $\randVarImportance$.] When $\achievableByCombinations=\allConvexCombinations$, we have demonstrated that all rankings induced by the ranking scores $\rankingScore$ satisfy all our three axioms. This leaves a great flexibility for the users to fine-tune the ranking \wrt their application-specific preferences through the random variable $\randVarImportance$. As we have seen, the only constraints are that $\randVarImportance\ne0$ and $\randVarImportance(\aSample)\ge0\,\forall\aSample\in\sampleSpace$.

\item[Note on the mean squared error and the mean absolute error.] 
If we choose $\randVarSatisfaction=-|\randVarGroundtruthValue-\randVarPredictedValue|^2$, the ranking score $\rankingScore$ corresponding to uniform importance values, \ie, the expected satisfaction $\scoreExpectedSatisfaction$, yields a ranking that minimizes the \emph{mean squared error} (MSE). If we choose $\randVarSatisfaction=-|\randVarGroundtruthValue-\randVarPredictedValue|$, the ranking score $\rankingScore$ corresponding to uniform importance values, \ie, the expected satisfaction $\scoreExpectedSatisfaction$, yields a ranking that minimizes the \emph{mean absolute error} (MAE).

\end{description}


\subsubsection{Information Retrieval}

\global\long\def\allQueries{\mathbb{Q}}
\global\long\def\resultsSetGroundtruth{\mathbb{Y}}
\global\long\def\resultsSetPredicted{\hat{\mathbb{Y}}}

Let us consider the following thought experiment to evaluate information retrieval systems. We denote by $\allQueries$ the set of all possible queries.

\global\long\def\theSystem{\mathcal{S}}

\begin{experiment}
    (1) Draw a query $q\in\allQueries$ at random from a given source $\theSource$. (2) Apply the oracle $\theOracle$ on $q$ to obtain the ground-truth set of results $\resultsSetGroundtruth$. (3) Apply the evaluated information retrieval system $\theSystem$ on $q$ to obtain the predicted set of results $\resultsSetPredicted$. (4) If $\resultsSetGroundtruth=\emptyset$ and $\resultsSetPredicted=\emptyset$, restart the experiment, otherwise draw a result $r$ at random in $\resultsSetGroundtruth\cup\resultsSetPredicted$. (5) Choose the outcome as follows: $\sampleFP$ if $r\notin\resultsSetGroundtruth$ and $r\in\resultsSetPredicted$, $\sampleFN$ if $r\in\resultsSetGroundtruth$ and $r\notin\resultsSetPredicted$, and $\sampleTP$ if $r\in\resultsSetGroundtruth$ and $r\in\resultsSetPredicted$.
\end{experiment}

\begin{description}

\item [Choice for $\sampleSpace$ and $\allPerformances$.] Our theory applies with $\sampleSpace=\{\sampleFP,\sampleFN,\sampleTP\}$. By definition, the function $\evaluation$ gives, for any retrieval system $\theSystem$ defined on $\allQueries$ (the evaluated entity, either deterministic or not), the distribution of outcomes resulting from this random experiment: $\aPerformance_{\theSystem}=\evaluation(\theSystem)$.

\item[Choice for $\randVarSatisfaction$.] Intuitively, everyone certainly agrees that $\randVarSatisfaction(\sampleFP)<\randVarSatisfaction(\sampleTP)$ and $\randVarSatisfaction(\sampleFN)<\randVarSatisfaction(\sampleTP)$. But we expect different opinions regarding whether the outcome (sample) $\sampleFP$ gives less, equal, or more satisfaction than $\sampleFN$.

\item[Choice for $\achievableByCombinations$.] This random experiment is very interesting as, during its execution, the evaluated entity (the information retrieval system $\theSystem$) can be used multiple times. In such a case, we have to discuss whether $\achievableByCombinations=\allConvexCombinations$ is  adequate. Let us consider two systems $\theSystem_1$, $\theSystem_2$ and their respective performances $\aPerformance_1=\evaluation(\theSystem_1)$, $\aPerformance_2=\evaluation(\theSystem_2)$. It is possible to show that the performance of a retrieval system $\overline{\theSystem}$ obtained by a non-deterministic combination of $\theSystem_1$ and $\theSystem_2$, that chooses them with respective probabilities $\lambda_1$ and $\lambda_2$, is some interpolated performance $\overline{\aPerformance}=\mu_1\aPerformance_1+\mu_2\aPerformance_2$ (with $\mu_1\ge0$, $\mu_2\ge0$, and $\mu_1+\mu_2=1$). Unless being in very particular cases, $\mu\ne\lambda$. In other words, we know that the performances that are convex combinations of achievable performances are also achievable (for any $\lambda$, there exists $\mu$), but we do not know in general how to achieve them (for most $\mu$ it is not possible to determine $\lambda$). This contrasts with the other kinds of problems discussed in this catalog. In fact, the question we raise here is not specific to the information retrieval problem: it is peculiar to the random experiment that we have chosen for it. By slightly modifying the thought experiment, the question vanishes: instead of restarting the experiment when $\resultsSetGroundtruth\cup\resultsSetPredicted=\emptyset$, one could yield a fourth outcome (and add it to the sample space $\sampleSpace$). By doing so, the evaluated entity $\theSystem$ is used only once and $\achievableByCombinations=\allConvexCombinations$ makes sense for sure.

\item[Choice for $\randVarImportance$.] If $\achievableByCombinations=\allConvexCombinations$ is considered as adequate, then we have demonstrated that all rankings induced by the ranking scores $\rankingScore$ satisfy all our three axioms. This leaves a great flexibility for the users to fine-tune the ranking \wrt their application-specific preferences through the random variable $\randVarImportance$. As we have seen, the only constraints are that $\randVarImportance\ne0$ and $\randVarImportance(\aSample)\ge0\,\forall\aSample\in\sampleSpace$.

\end{description}


\subsubsection{Detection (with a Note about the Intersection-over-Union and the F-Score)}

Different types of detections are present in the literature. An example of spatial detection aims at predicting the axis-aligned bounding boxes around all the objects that match some given properties (\ie, a semantic class) in input images. Examples of temporal detections include the detection of events in video streams and in audio recordings. By definition, such detection problems are called \emph{action spotting} when the temporal window is small, and \emph{activity detection} otherwise. Let us consider the following, generic, thought experiment to evaluate detectors.

\global\long\def\theDetector{\mathcal{D}}
\global\long\def\sampleNothing{\logof}
\global\long\def\scoreIoU{IoU}

\begin{experiment}
    (1) Draw an input at random from a given \emph{source} $\theSource$ (\eg, dataset). (2) Apply the \emph{oracle} $\theOracle$ on it to obtain a set $\resultsSetGroundtruth$ of ground-truth detections. (3) Also apply the \emph{detector} $\theDetector$ on it to obtain a set $\resultsSetPredicted$ of predicted detections. (4) If $\resultsSetGroundtruth=\emptyset$ and $\resultsSetPredicted=\emptyset$, then end the experiment with the outcome $\sampleNothing$. Otherwise: (5) Apply a matching criterion between $\resultsSetGroundtruth$ and $\resultsSetPredicted$ such that to any detection in $\resultsSetGroundtruth$ should be associated at most a detection in $\resultsSetPredicted$ and vice versa. (6) Draw a detection $d$ at random in $\resultsSetGroundtruth\cup\resultsSetPredicted$. (7) Give as outcome $\sampleFP$, $\sampleFN$ or $\sampleTP$ depending on whether $d$ is a prediction, a ground truth, or both (\ie, a match).
\end{experiment}

\begin{description}

\item [Choice for $\sampleSpace$ and $\allPerformances$.] Our theory applies with $\sampleSpace=\{\sampleNothing,\sampleFP,\sampleFN,\sampleTP\}$. By definition, the function $\evaluation$ gives, for any detector (the evaluated entity), the distribution of outcomes resulting from this random experiment. It is the performance $\aPerformance_{\theDetector}=\evaluation(\theDetector)$ of $\theDetector$.

\item[Choice for $\randVarSatisfaction$.] Intuitively, everyone certainly agrees that $\sampleNothing$ and $\sampleTP$ give entire satisfaction. Moreover, we expect agreement on $\randVarSatisfaction(\sampleFP)<\randVarSatisfaction(\sampleTP)$ and $\randVarSatisfaction(\sampleFN)<\randVarSatisfaction(\sampleTP)$. But we expect different opinions regarding whether $\sampleFP$ gives less, equal, or more satisfaction than $\sampleFN$.

\item[Choice for $\achievableByCombinations$.] As the detector is used once and only once during the execution of the evaluation, we know that if the performances $\aPerformance_1$ and $\aPerformance_2$ are achievable by some detectors $\theDetector_1$ and $\theDetector_2$, then any performance $\overline{\aPerformance}=\lambda_1\aPerformance_1+\lambda_2\aPerformance_2$ (with $\lambda_1\ge0$, $\lambda_2\ge0$, and $\lambda_1+\lambda_2=1$) is achievable by a detector $\overline{\theDetector}$ obtained by a non-deterministic combination of $\theDetector_1$ and $\theDetector_2$ that chooses them with respective probabilities $\lambda_1$ and $\lambda_2$. Thus, $\achievableByCombinations=\allConvexCombinations$ makes sense, and all ranking scores can be used to rank detectors.

\item[Choice for $\randVarImportance$.]  If $\achievableByCombinations=\allConvexCombinations$ is considered as adequate, then we have demonstrated that all rankings induced by the ranking scores $\rankingScore$ satisfy all our three axioms. This leaves a great flexibility for the users to fine-tune the ranking \wrt their application-specific preferences through the random variable $\randVarImportance$. As we have seen, the only constraints are that $\randVarImportance\ne0$ and $\randVarImportance(\aSample)\ge0\,\forall\aSample\in\sampleSpace$.

\item[Note about the Intersection-over-Union and the F-score.] Traditionally, in the literature, as soon as one has symbols $\sampleFP$, $\sampleFN$, and $\sampleTP$, regardless of their very fine meaning, one defines quantities $\scoreIoU=\frac{\aPerformance(\sampleTP)}{\aPerformance(\sampleFP)+\aPerformance(\sampleFN)+\aPerformance(\sampleTP)}$ and $\scoreFOne=\frac{2\aPerformance(\sampleTP)}{\aPerformance(\sampleFP)+\aPerformance(\sampleFN)+2\aPerformance(\sampleTP)}$, and name them \emph{Intersection-over-Union} (or \emph{Jaccard}) and \emph{F-one}, respectively. The exact meaning of these quantities is not well standardized. In particular, the random experiment supporting the evaluation, if it exists, is rarely specified explicitly. For this reason, we cannot give the guarantee that these quantities are suitable to rank detectors in all works in which they have been used. However, with the random experiment given here-above, with $\achievableByCombinations=\allConvexCombinations$, and with $\randVarSatisfaction=\indicatorSymbol_{\{\sampleNothing,\sampleTP\}}$, we can guarantee that $\scoreIoU$ and $\scoreFOne$ are suitable to rank detectors because they are equal to the ranking scores with, respectively, $\randVarImportance=\indicatorSymbol_{\{\sampleFP,\sampleFN,\sampleTP\}}$ and $\randVarImportance= \indicatorSymbol_{\{\sampleFP,\sampleTP\}} + \indicatorSymbol_{\{\sampleFN,\sampleTP\}}$. Thus, the performance orderings induced by them fulfill our three axioms.

\end{description}


\subsubsection{Clustering (with a Note about Fowlkes-Mallows Index)}

\global\long\def\theClusterer{\mathcal{C}}

Let us consider the following thought experiment to evaluate clustering methods. These methods aim to place in different clusters (groups) dissimilar objects and in the same cluster (group) objects that are similar to each other. We denote by $\allEntities$ the set of elements that these methods have to deal with. For the sake of simplicity, we do not consider hierarchical clustering.

\begin{experiment}
    (1) Apply both the \emph{clustering method} $\theClusterer$ and the \emph{oracle} $\theOracle$ on $\allEntities$ to obtain, respectively, the predicted and ground-truth clusterings. (2) Randomly draw two distinct elements, $\anEntity_1$ and $\anEntity_2$, from $\allEntities$. (3) Consider that the pair $(\anEntity_1;\anEntity_2)$ is a negative or a positive, in a given clustering, when $\anEntity_1$ and $\anEntity_2$ are in different clusters or in the same cluster, respectively. (4) Choose the outcome as follows: $\sampleTN$ when $(\anEntity_1;\anEntity_2)$ is negative in both the predicted and ground-truth clusterings, $\sampleFP$ when $(\anEntity_1;\anEntity_2)$ is negative in the ground-truth clustering and positive in the predicted clustering, $\sampleFN$ when $(\anEntity_1;\anEntity_2)$ is positive in the ground-truth clustering and negative in the predicted clustering, and $\sampleTP$ when $(\anEntity_1;\anEntity_2)$ is positive in both the predicted and ground-truth clusterings.
\end{experiment}

\global\long\def\scoreFMI{FMI}%

\begin{description}

\item [Choice for $\sampleSpace$ and $\allPerformances$.] Our theory applies with $\sampleSpace=\{\sampleTN,\sampleFP,\sampleFN,\sampleTP\}$. By definition, the function $\evaluation$ gives, for any clustering method (the evaluated entity), the distribution of outcomes resulting from this random experiment. It is the performance $\aPerformance_{\theClusterer}=\evaluation(\theClusterer)$ of $\theClusterer$.

\item[Choice for $\randVarSatisfaction$.] When $\randVarSatisfaction$ is chosen such that $\randVarSatisfaction(\sampleFP)=\randVarSatisfaction(\sampleFN)=0$ and $\randVarSatisfaction(\sampleTN)=\randVarSatisfaction(\sampleTP)=1$, we are in the same setting as the one we studied for the two-class classification in \cref{sec:two-class-crisp-classification}. This is indeed not because we use the same symbols for the elements of $\sampleSpace$ ---this is just a convention---, but because, in both settings, we have $|\sampleSpace|=4$, $|\randVarSatisfaction=0|=2$, and $|\randVarSatisfaction=1|=2$. This implies that the performance orderings that satisfy our three axioms for ranking two-class classifiers can also be used for ranking clustering methods, and vice versa.

\item[Choice for $\achievableByCombinations$.] As the clustering method is used once and only once during the execution of the evaluation, we know that if the performances $\aPerformance_1$ and $\aPerformance_2$ are achievable by some clustering methods $\theClusterer_1$ and $\theClusterer_2$, then any performance $\overline{\aPerformance}=\lambda_1\aPerformance_1+\lambda_2\aPerformance_2$ (with $\lambda_1\ge0$, $\lambda_2\ge0$, and $\lambda_1+\lambda_2=1$) is achievable by a clustering method $\overline{\theClusterer}$ obtained by a non-deterministic combination of $\theClusterer_1$ and $\theClusterer_2$ that chooses them with respective probabilities $\lambda_1$ and $\lambda_2$. Thus, $\achievableByCombinations=\allConvexCombinations$ makes sense, and all ranking scores can be used to rank clustering methods.

\item[Choice for $\randVarImportance$.] When $\achievableByCombinations=\allConvexCombinations$, we have demonstrated that all rankings induced by the ranking scores $\rankingScore$ satisfy all our three axioms. This leaves a great flexibility for the users to fine-tune the ranking \wrt their application-specific preferences through the random variable $\randVarImportance$. As we have seen, the only constraints are that $\randVarImportance\ne0$ and $\randVarImportance(\aSample)\ge0\,\forall\aSample\in\sampleSpace$.

\item[Note about Fowlkes-Mallows index.] The score $\scoreFMI$ known as \emph{Fowlkes-Mallows index}~\cite{Fowlkes1983AMethod} and \emph{cosine coefficient}~\cite{Ballabio2018Multivariate}, which is commonly used for clustering methods and defined as the geometric mean of the positive predictive value $\scorePPV$ (\aka precision) and true positive rate $\scoreTPR$ (\aka sensitivity and recall), does not satisfy our \third axiom: the performance ordering induced by $\scoreFMI$ is incompatible with $\achievableByCombinations=\allConvexCombinations$. More precisely, the clustering method $\theClusterer$ obtained by randomly choosing between some methods $\theClusterer_1$ or $\theClusterer_2$ can be such that $\scoreFMI(\evaluation(\theClusterer))<\min(\scoreFMI(\evaluation(\theClusterer_1)),\scoreFMI(\evaluation(\theClusterer_2)))$, while it makes no sense to say that $\theClusterer$ can be worse than $\theClusterer_1$ or $\theClusterer_2$. From this perspective, it is advisable to use any ranking score instead of $\scoreFMI$, for example, those in green in \cref{tbl:scores-review}.

\end{description}


\subsubsection{Ranking (with a Note about Kendall's $\tau$)}

Let us consider the following thought experiment to evaluate ranking methods. We denote by $\allEntities$ the set of elements that these methods have to rank. For the sake of simplicity, we prefer to deal only with the case with no tie hereafter.

\global\long\def\theRanker{\mathcal{R}}

\begin{experiment}
    (1) Apply both the \emph{ranking method} $\theRanker$ and the \emph{oracle} $\theOracle$ on $\allEntities$ to obtain, respectively, the predicted and ground-truth sequences of elements. (2) Randomly draw two distinct elements, $\anEntity_1$ and $\anEntity_2$, from $\allEntities$. (3) Four cases can occur depending on whether $\anEntity_1$ is before or after $\anEntity_2$ in the predicted sequence and whether $\anEntity_1$ is before or after $\anEntity_2$ in the ground-truth sequence. Nevertheless, two outcomes are enough: choose $\smiley$ if $\anEntity_1$ and $\anEntity_2$ appear in the same order in both sequences, $\frownie$ otherwise.
\end{experiment}

\begin{description}

\item [Choice for $\sampleSpace$ and $\allPerformances$.] Our theory applies with $\sampleSpace=\{\smiley,\frownie\}$. By definition, the function $\evaluation$ gives, for any ranking method (the evaluated entity), the distribution of outcomes resulting from this random experiment. It is the performance $\aPerformance_{\theRanker}=\evaluation(\theRanker)$ of $\theRanker$. The probability of drawing a \emph{discordant pair} is given by $\aPerformance(\{\frownie\})$, and the probability of drawing a \emph{concordant pair} is given by $\aPerformance(\{\smiley\})$.

\item[Choice for $\randVarSatisfaction$.] Clearly, $\randVarSatisfaction(\frownie)<\randVarSatisfaction(\smiley)$ is wanted.

\item[Choice for $\achievableByCombinations$.] As the ranking method is used once and only once during the execution of the evaluation, we know that if the performances $\aPerformance_1$ and $\aPerformance_2$ are achievable by some ranking methods $\theRanker_1$ and $\theRanker_2$, then any performance $\overline{\aPerformance}=\lambda_1\aPerformance_1+\lambda_2\aPerformance_2$ (with $\lambda_1\ge0$, $\lambda_2\ge0$, and $\lambda_1+\lambda_2=1$) is achievable by a ranking method $\overline{\theRanker}$ obtained by a non-deterministic combination of $\theRanker_1$ and $\theRanker_2$ that chooses them with respective probabilities $\lambda_1$ and $\lambda_2$. Thus, $\achievableByCombinations=\allConvexCombinations$ makes sense, and all ranking scores can be used to rank ranking methods.

\item[Choice for $\randVarImportance$.] Because $|\sampleSpace|=2$ and $\randVarSatisfaction(\frownie)\ne\randVarSatisfaction(\smiley)$, we are in a particular case in which all ranking scores rank the ranking methods in the same way. From this point of view, fine-tuning $\randVarImportance$ is useless.

\item[Note about Kendall's $\tau$.] When $\randVarSatisfaction(\frownie)=-1$ and $\randVarSatisfaction(\smiley)=1$, the expected value of the satisfaction is given by $\expectedValueScore{\randVarSatisfaction}(\aPerformance)=1-2\aPerformance(\{\frownie\})=\aPerformance(\{\smiley\})-\aPerformance(\{\frownie\})=\tau(\aPerformance)$. In other words, with the task corresponding to this choice for the satisfaction, Kendall's correlation coefficient $\tau$ \cite{Kendall1938ANewMeasure} is the ranking score corresponding to uniform importance values.

\end{description}

\clearpage
\subsection{Supplementary Material about \cref{sec:mathematical-framework-ordering}}

This section is devoted to reminders about the order theory.

\subsubsection{Reminders of Classical Definitions.}

Let $\aRelation$ be a homogeneous binary relation on $\allPerformances$.
It is said:
\begin{itemize}
\item \emph{reflexive} \iif $\aPerformance\aRelation\aPerformance\,\forall\aPerformance$;
\item \emph{irreflexive} \iif $\nexists\aPerformance:\aPerformance\aRelation\aPerformance$;
\item \emph{transitive} \iif $\aPerformance_{1}\aRelation\aPerformance_{2}\wedge\aPerformance_{2}\aRelation\aPerformance_{3}\Rightarrow\aPerformance_{1}\aRelation\aPerformance_{3}\,\forall\aPerformance_{1},\aPerformance_{2},\aPerformance_{3}$;
\item \emph{symmetric} \iif $\aPerformance_{1}\aRelation\aPerformance_{2}\Leftrightarrow\aPerformance_{2}\aRelation\aPerformance_{1}\,\forall\aPerformance_{1},\aPerformance_{2}$;
\item \emph{asymmetric} \iif $\nexists(\aPerformance_{1},\aPerformance_{2}):\aPerformance_{1}\aRelation\aPerformance_{2}\wedge\aPerformance_{2}\aRelation\aPerformance_{1}$;
\item and \emph{antisymmetric} \iif $\aPerformance_{1}\aRelation\aPerformance_{2}\wedge\aPerformance_{2}\aRelation\aPerformance_{1}\Rightarrow\aPerformance_{1}=\aPerformance_{2}$.
\end{itemize}
Two homogeneous binary relations $\aRelation_{a}$ and $\aRelation_{b}$
on $\allPerformances$ are said\emph{ converse }\iif $\aPerformance_{1}\aRelation_{a}\aPerformance_{2}\Leftrightarrow\aPerformance_{2}\aRelation_{b}\aPerformance_{1}\,\forall\aPerformance_{1},\aPerformance_{2}$.

A relation $\aRelation$ is:
\begin{itemize}
\item an \emph{equivalence }\iif it is reflexive, transitive, and symmetric;
\item a \emph{preorder} \iif it is reflexive and transitive;
\item and an \emph{order} \iif it is reflexive, transitive, and antisymmetric.
\end{itemize}
An order $\aRelation$ is said \emph{total} \iif $\nexists(\aPerformance_{1},\aPerformance_{2}):\aPerformance_{1}\not\aRelation\aPerformance_{2}\wedge\aPerformance_{2}\not\aRelation\aPerformance_{1}$. It is said \emph{partial} otherwise.

\subsubsection{The 4 Cases in the Comparison of Two Performances with a Preorder
$\protect\relWorseOrEquivalent$.}

Let us now consider a preorder $\relWorseOrEquivalent$ and derive
the homogeneous binary relations $\relEquivalent$, $\relBetter$,
$\relWorse$, $\relIncomparable$ as follows:

\begin{align}
\aPerformance_{1}\relEquivalent\aPerformance_{2} & \Leftrightarrow\aPerformance_{1}\relWorseOrEquivalent\aPerformance_{2}\wedge\aPerformance_{2}\relWorseOrEquivalent\aPerformance_{1}\label{eq:rel-equivalent}\\
\aPerformance_{1}\relBetter\aPerformance_{2} & \Leftrightarrow\aPerformance_{1}\not\relWorseOrEquivalent\aPerformance_{2}\wedge\aPerformance_{2}\relWorseOrEquivalent\aPerformance_{1}\label{eq:rel-better}\\
\aPerformance_{1}\relWorse\aPerformance_{2} & \Leftrightarrow\aPerformance_{1}\relWorseOrEquivalent\aPerformance_{2}\wedge\aPerformance_{2}\not\relWorseOrEquivalent\aPerformance_{1}\label{eq:rel-worse}\\
\aPerformance_{1}\relIncomparable\aPerformance_{2} & \Leftrightarrow\aPerformance_{1}\not\relWorseOrEquivalent\aPerformance_{2}\wedge\aPerformance_{2}\not\relWorseOrEquivalent\aPerformance_{1}\,.\label{eq:rel-incomparable}
\end{align}

Indeed, we have:
\begin{align}
\aPerformance_{1}\relWorseOrEquivalent\aPerformance_{2} & \Leftrightarrow\aPerformance_{1}\relWorse\aPerformance_{2}\vee\aPerformance_{1}\relEquivalent\aPerformance_{2}\,.\label{eq:rel-worse-or-equivalent}
\end{align}
Similarly, one can derive other binary relations taking unions of
$\relEquivalent$, $\relBetter$, $\relWorse$, or $\relIncomparable$.
For example, 

\begin{align}
\aPerformance_{1}\relBetterOrEquivalent\aPerformance_{2} & \Leftrightarrow\aPerformance_{1}\relBetter\aPerformance_{2}\vee\aPerformance_{1}\relEquivalent\aPerformance_{2}\,.\label{eq:rel-better-or-equivalent}
\end{align}

\subsubsection{Implications of the Transitivity of $\protect\relWorseOrEquivalent$.}

We can easily check, for each $\aRelation_{ab}\in\{\relWorseOrEquivalent,\not\relWorseOrEquivalent\}$,
each $\aRelation_{ba}\in\{\relWorseOrEquivalent,\not\relWorseOrEquivalent\}$,
each $\aRelation_{bc}\in\{\relWorseOrEquivalent,\not\relWorseOrEquivalent\}$,
each $\aRelation_{cb}\in\{\relWorseOrEquivalent,\not\relWorseOrEquivalent\}$,
each $\aRelation_{ca}\in\{\relWorseOrEquivalent,\not\relWorseOrEquivalent\}$,
and each $\aRelation_{ac}\in\{\relWorseOrEquivalent,\not\relWorseOrEquivalent\}$,
if there exists $(\aPerformance_{a},\aPerformance_{b},\aPerformance_{c})$
such that $\aPerformance_{a}\aRelation_{ab}\aPerformance_{b}$, $\aPerformance_{b}\aRelation_{ba}\aPerformance_{a}$,
$\aPerformance_{b}\aRelation_{bc}\aPerformance_{c}$, $\aPerformance_{c}\aRelation_{cb}\aPerformance_{b}$,
$\aPerformance_{c}\aRelation_{ca}\aPerformance_{a}$, and $\aPerformance_{a}\aRelation_{ac}\aPerformance_{c}$.
Because of the assumed transitivity of $\relWorseOrEquivalent$, there
are only $29$ possible cases out of the $2^{6}$:

\begin{multicols}{2}
\begin{enumerate}
\item $\aPerformance_a \relIncomparable \aPerformance_b$, $\aPerformance_b \relIncomparable \aPerformance_c$, $\aPerformance_a \relIncomparable \aPerformance_c$
\item $\aPerformance_a \relIncomparable \aPerformance_b$, $\aPerformance_b \relIncomparable \aPerformance_c$, $\aPerformance_a \relBetter \aPerformance_c$
\item $\aPerformance_a \relIncomparable \aPerformance_b$, $\aPerformance_b \relIncomparable \aPerformance_c$, $\aPerformance_a \relWorse \aPerformance_c$
\item $\aPerformance_a \relIncomparable \aPerformance_b$, $\aPerformance_b \relIncomparable \aPerformance_c$, $\aPerformance_a \relEquivalent \aPerformance_c$
\item $\aPerformance_a \relIncomparable \aPerformance_b$, $\aPerformance_b \relBetter \aPerformance_c$, $\aPerformance_a \relIncomparable \aPerformance_c$
\item $\aPerformance_a \relIncomparable \aPerformance_b$, $\aPerformance_b \relBetter \aPerformance_c$, $\aPerformance_a \relBetter \aPerformance_c$
\item $\aPerformance_a \relIncomparable \aPerformance_b$, $\aPerformance_b \relWorse \aPerformance_c$, $\aPerformance_a \relIncomparable \aPerformance_c$
\item $\aPerformance_a \relIncomparable \aPerformance_b$, $\aPerformance_b \relWorse \aPerformance_c$, $\aPerformance_a \relWorse \aPerformance_c$
\item $\aPerformance_a \relIncomparable \aPerformance_b$, $\aPerformance_b \relEquivalent \aPerformance_c$, $\aPerformance_a \relIncomparable \aPerformance_c$
\item $\aPerformance_a \relBetter \aPerformance_b$, $\aPerformance_b \relIncomparable \aPerformance_c$, $\aPerformance_a \relIncomparable \aPerformance_c$
\item $\aPerformance_a \relBetter \aPerformance_b$, $\aPerformance_b \relIncomparable \aPerformance_c$, $\aPerformance_a \relBetter \aPerformance_c$
\item $\aPerformance_a \relBetter \aPerformance_b$, $\aPerformance_b \relBetter \aPerformance_c$, $\aPerformance_a \relBetter \aPerformance_c$
\item $\aPerformance_a \relBetter \aPerformance_b$, $\aPerformance_b \relWorse \aPerformance_c$, $\aPerformance_a \relIncomparable \aPerformance_c$
\item $\aPerformance_a \relBetter \aPerformance_b$, $\aPerformance_b \relWorse \aPerformance_c$, $\aPerformance_a \relBetter \aPerformance_c$
\item $\aPerformance_a \relBetter \aPerformance_b$, $\aPerformance_b \relWorse \aPerformance_c$, $\aPerformance_a \relWorse \aPerformance_c$
\item $\aPerformance_a \relBetter \aPerformance_b$, $\aPerformance_b \relWorse \aPerformance_c$, $\aPerformance_a \relEquivalent \aPerformance_c$
\item $\aPerformance_a \relBetter \aPerformance_b$, $\aPerformance_b \relEquivalent \aPerformance_c$, $\aPerformance_a \relBetter \aPerformance_c$
\item $\aPerformance_a \relWorse \aPerformance_b$, $\aPerformance_b \relIncomparable \aPerformance_c$, $\aPerformance_a \relIncomparable \aPerformance_c$
\item $\aPerformance_a \relWorse \aPerformance_b$, $\aPerformance_b \relIncomparable \aPerformance_c$, $\aPerformance_a \relWorse \aPerformance_c$
\item $\aPerformance_a \relWorse \aPerformance_b$, $\aPerformance_b \relBetter \aPerformance_c$, $\aPerformance_a \relIncomparable \aPerformance_c$
\item $\aPerformance_a \relWorse \aPerformance_b$, $\aPerformance_b \relBetter \aPerformance_c$, $\aPerformance_a \relBetter \aPerformance_c$
\item $\aPerformance_a \relWorse \aPerformance_b$, $\aPerformance_b \relBetter \aPerformance_c$, $\aPerformance_a \relWorse \aPerformance_c$
\item $\aPerformance_a \relWorse \aPerformance_b$, $\aPerformance_b \relBetter \aPerformance_c$, $\aPerformance_a \relEquivalent \aPerformance_c$
\item $\aPerformance_a \relWorse \aPerformance_b$, $\aPerformance_b \relWorse \aPerformance_c$, $\aPerformance_a \relWorse \aPerformance_c$
\item $\aPerformance_a \relWorse \aPerformance_b$, $\aPerformance_b \relEquivalent \aPerformance_c$, $\aPerformance_a \relWorse \aPerformance_c$
\item $\aPerformance_a \relEquivalent \aPerformance_b$, $\aPerformance_b \relIncomparable \aPerformance_c$, $\aPerformance_a \relIncomparable \aPerformance_c$
\item $\aPerformance_a \relEquivalent \aPerformance_b$, $\aPerformance_b \relBetter \aPerformance_c$, $\aPerformance_a \relBetter \aPerformance_c$
\item $\aPerformance_a \relEquivalent \aPerformance_b$, $\aPerformance_b \relWorse \aPerformance_c$, $\aPerformance_a \relWorse \aPerformance_c$
\item $\aPerformance_a \relEquivalent \aPerformance_b$, $\aPerformance_b \relEquivalent \aPerformance_c$, $\aPerformance_a \relEquivalent \aPerformance_c$
\end{enumerate}
\end{multicols}

From this list, we can derive some rules for manipulating the binary
relations $\relEquivalent$, $\relBetter$, $\relWorse$, and $\relIncomparable$.
First, we can see that $\relEquivalent$, $\relBetter$, and $\relWorse$
are transitive:

\begin{align}
\aPerformance_{1}\relEquivalent\aPerformance_{2}\wedge\aPerformance_{2}\relEquivalent\aPerformance_{3} & \Rightarrow\aPerformance_{1}\relEquivalent\aPerformance_{3}\label{eq:transitivity-rel-equivalent}\\
\aPerformance_{1}\relBetter\aPerformance_{2}\wedge\aPerformance_{2}\relBetter\aPerformance_{3} & \Rightarrow\aPerformance_{1}\relBetter\aPerformance_{3}\label{eq:transitivity-rel-better}\\
\aPerformance_{1}\relWorse\aPerformance_{2}\wedge\aPerformance_{2}\relWorse\aPerformance_{3} & \Rightarrow\aPerformance_{1}\relWorse\aPerformance_{3}\,.\label{eq:transitivity-rel-worse}
\end{align}
Second, we can also see how $\relEquivalent$ can be combined with
the other 3 relations:
\begin{align}
(\aPerformance_{1}\relEquivalent\aPerformance_{2}\wedge\aPerformance_{2}\relBetter\aPerformance_{3})\vee(\aPerformance_{1}\relBetter\aPerformance_{2}\wedge\aPerformance_{2}\relEquivalent\aPerformance_{3}) & \Rightarrow\aPerformance_{1}\relBetter\aPerformance_{3}\\
(\aPerformance_{1}\relEquivalent\aPerformance_{2}\wedge\aPerformance_{2}\relWorse\aPerformance_{3})\vee(\aPerformance_{1}\relWorse\aPerformance_{2}\wedge\aPerformance_{2}\relEquivalent\aPerformance_{3}) & \Rightarrow\aPerformance_{1}\relWorse\aPerformance_{3}\\
(\aPerformance_{1}\relEquivalent\aPerformance_{2}\wedge\aPerformance_{2}\relIncomparable\aPerformance_{3})\vee(\aPerformance_{1}\relIncomparable\aPerformance_{2}\wedge\aPerformance_{2}\relEquivalent\aPerformance_{3}) & \Rightarrow\aPerformance_{1}\relIncomparable\aPerformance_{3}\,.
\end{align}
And, third, we can see how $\relIncomparable$ can be combined with
$\relBetter$ and $\relWorse$:
\begin{align}
(\aPerformance_{1}\relIncomparable\aPerformance_{2}\wedge\aPerformance_{2}\relBetter\aPerformance_{3})\vee(\aPerformance_{1}\relBetter\aPerformance_{2}\wedge\aPerformance_{2}\relIncomparable\aPerformance_{3}) & \Rightarrow\aPerformance_{1}\relBetter\aPerformance_{3}\vee\aPerformance_{1}\relIncomparable\aPerformance_{3}\\
(\aPerformance_{1}\relIncomparable\aPerformance_{2}\wedge\aPerformance_{2}\relWorse\aPerformance_{3})\vee(\aPerformance_{1}\relWorse\aPerformance_{2}\wedge\aPerformance_{2}\relIncomparable\aPerformance_{3}) & \Rightarrow\aPerformance_{1}\relWorse\aPerformance_{3}\vee\aPerformance_{1}\relIncomparable\aPerformance_{3}\,.
\end{align}

\subsubsection{Properties of $\protect\relEquivalent$, $\protect\relBetter$, $\protect\relWorse$,
$\protect\relIncomparable$, $\protect\relWorseOrEquivalent$, and
$\protect\relBetterOrEquivalent$.}
\label{sec:Properties_of_orders}
\begin{lemma}
When $\relWorseOrEquivalent$ is a preorder, $\relEquivalent$ is
reflexive.
\end{lemma}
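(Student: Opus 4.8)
The plan is to simply unfold the definition of $\relEquivalent$ given in \cref{sec:mathematical-framework-ordering} (equivalently, \eqref{eq:rel-equivalent}) at a single performance $\aPerformance$, and invoke the reflexivity that comes for free with the preorder hypothesis on $\relWorseOrEquivalent$.

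Concretely, I would fix an arbitrary $\aPerformance\in\allPerformances$. Since $\relWorseOrEquivalent$ is a preorder, it is reflexive, so $\aPerformance\relWorseOrEquivalent\aPerformance$ holds. Then the conjunction $\aPerformance\relWorseOrEquivalent\aPerformance \wedge \aPerformance\relWorseOrEquivalent\aPerformance$ is true, which is exactly the condition defining $\aPerformance\relEquivalent\aPerformance$. As $\aPerformance$ was arbitrary, $\relEquivalent$ is reflexive. There is no real obstacle here: the only thing being used is reflexivity of $\relWorseOrEquivalent$ (transitivity is not even needed for this particular lemma), and the argument is a one-step substitution into the defining equivalence.
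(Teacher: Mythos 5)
Your proof is correct and matches the paper's own argument exactly: both unfold the definition $\aPerformance\relEquivalent\aPerformance\Leftrightarrow\aPerformance\relWorseOrEquivalent\aPerformance\wedge\aPerformance\relWorseOrEquivalent\aPerformance$ and conclude by the reflexivity of the preorder. Your remark that transitivity is not needed is also accurate.
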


\begin{proof}
This results from the reflexivity of $\relWorseOrEquivalent$ and
from \cref{eq:rel-equivalent}: $\aPerformance\relEquivalent\aPerformance\Leftrightarrow\aPerformance\relWorseOrEquivalent\aPerformance\wedge\aPerformance\relWorseOrEquivalent\aPerformance\Leftrightarrow true$.
\end{proof}
\begin{lemma}
When $\relWorseOrEquivalent$ is a preorder, $\relEquivalent$ in
transitive.
\end{lemma}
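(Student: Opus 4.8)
The plan is to unfold the definition of the equivalence relation and reduce everything to a double application of the transitivity of $\relWorseOrEquivalent$. Concretely, I would assume $\aPerformance_1\relEquivalent\aPerformance_2$ and $\aPerformance_2\relEquivalent\aPerformance_3$. By the defining equivalence \cref{eq:rel-equivalent}, the first hypothesis unpacks into $\aPerformance_1\relWorseOrEquivalent\aPerformance_2$ together with $\aPerformance_2\relWorseOrEquivalent\aPerformance_1$, and the second into $\aPerformance_2\relWorseOrEquivalent\aPerformance_3$ together with $\aPerformance_3\relWorseOrEquivalent\aPerformance_2$.

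First I would chain the ``forward'' direction: since $\relWorseOrEquivalent$ is a preorder it is transitive, so $\aPerformance_1\relWorseOrEquivalent\aPerformance_2$ and $\aPerformance_2\relWorseOrEquivalent\aPerformance_3$ give $\aPerformance_1\relWorseOrEquivalent\aPerformance_3$. Then I would chain the ``backward'' direction the same way: $\aPerformance_3\relWorseOrEquivalent\aPerformance_2$ and $\aPerformance_2\relWorseOrEquivalent\aPerformance_1$ give $\aPerformance_3\relWorseOrEquivalent\aPerformance_1$. Having both $\aPerformance_1\relWorseOrEquivalent\aPerformance_3$ and $\aPerformance_3\relWorseOrEquivalent\aPerformance_1$, I would re-apply \cref{eq:rel-equivalent} in the converse direction to conclude $\aPerformance_1\relEquivalent\aPerformance_3$, which is exactly \cref{eq:transitivity-rel-equivalent}.

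I do not anticipate any genuine obstacle: the only ingredients are the definition \cref{eq:rel-equivalent} and the transitivity half of the preorder hypothesis, and neither reflexivity nor symmetry is needed, so the same argument would work for any transitive relation. An alternative route would be to read transitivity of $\relEquivalent$ directly off the exhaustive enumeration of the $29$ admissible triples given above (every listed case with $\aPerformance_a\relEquivalent\aPerformance_b$ and $\aPerformance_b\relEquivalent\aPerformance_c$ also satisfies $\aPerformance_a\relEquivalent\aPerformance_c$), but the two-step derivation is shorter and makes explicit precisely which property of $\relWorseOrEquivalent$ is being invoked.
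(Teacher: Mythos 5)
Your proof is correct and is exactly the argument the paper intends: the paper's one-line proof ("this results from the transitivity of $\relWorseOrEquivalent$") is precisely your two applications of transitivity to the forward chain $\aPerformance_1\relWorseOrEquivalent\aPerformance_2\relWorseOrEquivalent\aPerformance_3$ and the backward chain $\aPerformance_3\relWorseOrEquivalent\aPerformance_2\relWorseOrEquivalent\aPerformance_1$, followed by reassembly via the definition of $\relEquivalent$. You simply spell out the details that the paper leaves implicit, and your remark that only transitivity (not reflexivity) is needed is accurate.
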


\begin{proof}
This results from the transitivity of $\relWorseOrEquivalent$ (\cf \cref{eq:transitivity-rel-equivalent}).
\end{proof}
\begin{lemma}
When $\relWorseOrEquivalent$ is a preorder, $\relEquivalent$ is
symmetric.
\end{lemma}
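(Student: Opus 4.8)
The plan is to derive the symmetry of $\relEquivalent$ directly from its defining equivalence, \cref{eq:rel-equivalent}, with no appeal to the preorder hypothesis. First I would write out, for arbitrary performances $\aPerformance_1,\aPerformance_2\in\allPerformances$, what $\aPerformance_1\relEquivalent\aPerformance_2$ means according to \cref{eq:rel-equivalent}: namely, the conjunction $\aPerformance_1\relWorseOrEquivalent\aPerformance_2\wedge\aPerformance_2\relWorseOrEquivalent\aPerformance_1$.

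The key step is then simply the commutativity of the logical conjunction $\wedge$: the proposition $\aPerformance_1\relWorseOrEquivalent\aPerformance_2\wedge\aPerformance_2\relWorseOrEquivalent\aPerformance_1$ is logically equivalent to $\aPerformance_2\relWorseOrEquivalent\aPerformance_1\wedge\aPerformance_1\relWorseOrEquivalent\aPerformance_2$. Reading \cref{eq:rel-equivalent} with the roles of $\aPerformance_1$ and $\aPerformance_2$ interchanged, this last proposition is precisely the statement $\aPerformance_2\relEquivalent\aPerformance_1$. Chaining these equivalences gives $\aPerformance_1\relEquivalent\aPerformance_2\Leftrightarrow\aPerformance_2\relEquivalent\aPerformance_1$ for all $\aPerformance_1,\aPerformance_2$, which is exactly the definition of symmetry.

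I do not expect any genuine obstacle here: unlike the reflexivity and transitivity of $\relEquivalent$ established in the two preceding lemmas, which rely respectively on the reflexivity and the transitivity of $\relWorseOrEquivalent$, the symmetry is already built into the symmetric form of the defining condition \cref{eq:rel-equivalent} itself. Consequently the preorder assumption stated in the lemma is in fact not used, and the conclusion would hold for any homogeneous binary relation $\relWorseOrEquivalent$ whatsoever; I would state the proof under the given hypothesis anyway, for uniformity with the surrounding lemmas.
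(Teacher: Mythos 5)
Your proof is correct and matches the paper's argument exactly: both rely solely on the commutativity of the conjunction in the defining condition of $\relEquivalent$. Your side remark that the preorder hypothesis is not actually needed here is also accurate and consistent with the paper's proof, which likewise never invokes it.
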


\begin{proof}
This results from the fact that the conjunction is symmetric and from
\cref{eq:rel-equivalent}: $\aPerformance_{1}\relEquivalent\aPerformance_{2}\Leftrightarrow\aPerformance_{1}\relWorseOrEquivalent\aPerformance_{2}\wedge\aPerformance_{2}\relWorseOrEquivalent\aPerformance_{1}\Leftrightarrow\aPerformance_{2}\relWorseOrEquivalent\aPerformance_{1}\wedge\aPerformance_{1}\relWorseOrEquivalent\aPerformance_{2}\Leftrightarrow\aPerformance_{2}\relEquivalent\aPerformance_{1}$.
\end{proof}
\begin{lemma}
When $\relWorseOrEquivalent$ is a preorder, $\relBetter$ and $\relWorse$
are converse.
\end{lemma}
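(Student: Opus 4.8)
The plan is to unwind the two definitions and observe that the claim collapses to the commutativity of logical conjunction. Recall from \cref{eq:rel-better} that $\aPerformance_1 \relBetter \aPerformance_2$ holds precisely when $\aPerformance_1 \not\relWorseOrEquivalent \aPerformance_2$ and $\aPerformance_2 \relWorseOrEquivalent \aPerformance_1$, and from \cref{eq:rel-worse} that $\aPerformance_2 \relWorse \aPerformance_1$ holds precisely when $\aPerformance_2 \relWorseOrEquivalent \aPerformance_1$ and $\aPerformance_1 \not\relWorseOrEquivalent \aPerformance_2$. The target, by the definition of converse relations recalled above, is the equivalence $\aPerformance_1 \relBetter \aPerformance_2 \Leftrightarrow \aPerformance_2 \relWorse \aPerformance_1$ for all $\aPerformance_1,\aPerformance_2$.

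First I would fix arbitrary $\aPerformance_1,\aPerformance_2\in\allPerformances$ and write the two characterizations side by side. Since $\wedge$ is commutative, the conjunction $(\aPerformance_1 \not\relWorseOrEquivalent \aPerformance_2) \wedge (\aPerformance_2 \relWorseOrEquivalent \aPerformance_1)$ appearing in the definition of $\aPerformance_1 \relBetter \aPerformance_2$ is logically equivalent to the conjunction $(\aPerformance_2 \relWorseOrEquivalent \aPerformance_1) \wedge (\aPerformance_1 \not\relWorseOrEquivalent \aPerformance_2)$ appearing in the definition of $\aPerformance_2 \relWorse \aPerformance_1$. Hence $\aPerformance_1 \relBetter \aPerformance_2 \Leftrightarrow \aPerformance_2 \relWorse \aPerformance_1$, which is exactly the assertion that $\relBetter$ and $\relWorse$ are converse.

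I do not expect any genuine obstacle: unlike the surrounding transitivity lemmas, this statement does not even use the preorder hypothesis on $\relWorseOrEquivalent$ — it is purely a consequence of how $\relBetter$ and $\relWorse$ were defined from $\relWorseOrEquivalent$, and the preorder assumption is carried along only for uniformity with the neighbouring lemmas. The one minor point worth stating explicitly is that the equivalence is required for \emph{all} pairs, including the diagonal case $\aPerformance_1=\aPerformance_2$; there both sides are false (reflexivity of $\relWorseOrEquivalent$ forces $\aPerformance_1 \not\relWorseOrEquivalent \aPerformance_1$ to fail), so the equivalence holds vacuously and no separate argument is needed.
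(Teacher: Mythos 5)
Your proof is correct and follows essentially the same route as the paper's: unfold the definitions of $\relBetter$ and $\relWorse$ from \cref{eq:rel-better,eq:rel-worse} and observe that the two conjunctions coincide by commutativity of $\wedge$. Your side remarks (that the preorder hypothesis is not actually needed here, and the vacuous diagonal case) are accurate but inessential additions.
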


\begin{proof}
This results from the fact that the conjunction is symmetric and from
 \cref{eq:rel-better,eq:rel-worse}: $\aPerformance_{1}\relBetter\aPerformance_{2}\Leftrightarrow\aPerformance_{1}\not\relWorseOrEquivalent\aPerformance_{2}\wedge\aPerformance_{2}\relWorseOrEquivalent\aPerformance_{1}\Leftrightarrow\aPerformance_{2}\relWorseOrEquivalent\aPerformance_{1}\wedge\aPerformance_{1}\not\relWorseOrEquivalent\aPerformance_{2}\Leftrightarrow\aPerformance_{2}\relWorse\aPerformance_{1}$.
\end{proof}
\begin{lemma}
When $\relWorseOrEquivalent$ is a preorder, $\relBetter$ and $\relWorse$
are irreflexive.
\end{lemma}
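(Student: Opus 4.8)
The plan is to argue directly from the definitions of $\relBetter$ and $\relWorse$ given in \cref{eq:rel-better,eq:rel-worse}, using only the reflexivity half of the preorder hypothesis. Irreflexivity of $\relBetter$ means there is no performance $\aPerformance$ with $\aPerformance\relBetter\aPerformance$, so I would proceed by contradiction: assume such an $\aPerformance$ exists.

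First I would expand $\aPerformance\relBetter\aPerformance$ via \cref{eq:rel-better}, which unfolds to $\aPerformance\not\relWorseOrEquivalent\aPerformance \wedge \aPerformance\relWorseOrEquivalent\aPerformance$. This is already a contradiction of the form $\neg Q \wedge Q$, but to make the role of the hypothesis explicit I would note that, since $\relWorseOrEquivalent$ is a preorder it is in particular reflexive, so $\aPerformance\relWorseOrEquivalent\aPerformance$ holds; this contradicts the first conjunct $\aPerformance\not\relWorseOrEquivalent\aPerformance$ required for $\aPerformance\relBetter\aPerformance$. Hence no such $\aPerformance$ exists, i.e., $\relBetter$ is irreflexive.

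For $\relWorse$, the quickest route is to reuse the immediately preceding lemma (that $\relBetter$ and $\relWorse$ are converse when $\relWorseOrEquivalent$ is a preorder): if $\aPerformance\relWorse\aPerformance$ then $\aPerformance\relBetter\aPerformance$, contradicting the irreflexivity of $\relBetter$ just established. Alternatively, the same one-line argument applies verbatim after expanding $\aPerformance\relWorse\aPerformance$ through \cref{eq:rel-worse} into $\aPerformance\relWorseOrEquivalent\aPerformance \wedge \aPerformance\not\relWorseOrEquivalent\aPerformance$. There is no real obstacle here: the statement is essentially immediate once one observes that each of $\relBetter$ and $\relWorse$ demands both $\relWorseOrEquivalent$ and its negation between the same pair, which reflexivity forbids when the two arguments coincide.
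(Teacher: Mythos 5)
Your proof is correct and matches the paper's argument: both expand $\aPerformance\relBetter\aPerformance$ via its definition into $\aPerformance\not\relWorseOrEquivalent\aPerformance \wedge \aPerformance\relWorseOrEquivalent\aPerformance$ and invoke reflexivity of the preorder to get a contradiction, with the case of $\relWorse$ handled symmetrically. Your optional shortcut of deducing irreflexivity of $\relWorse$ from the converse lemma is a harmless variation; the paper simply repeats the direct one-liner.
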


\begin{proof}
For $\relBetter$, this results from the reflexivity of $\relWorseOrEquivalent$
and from \cref{eq:rel-better}: $\aPerformance\relBetter\aPerformance\Leftrightarrow\aPerformance\not\relWorseOrEquivalent\aPerformance\wedge\aPerformance\relWorseOrEquivalent\aPerformance\Leftrightarrow false\wedge true=false$.
For $\relWorse$, the proof is similar.
\end{proof}
\begin{lemma}
When $\relWorseOrEquivalent$ is a preorder, $\relBetter$ and $\relWorse$
are asymmetric.
\end{lemma}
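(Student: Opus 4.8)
The plan is to derive the asymmetry of $\relBetter$ directly from its defining conjunction \cref{eq:rel-better}, and then transfer the result to $\relWorse$ by converseness. First I would argue by contradiction: suppose there exist $\aPerformance_{1},\aPerformance_{2}\in\allPerformances$ with $\aPerformance_{1}\relBetter\aPerformance_{2}$ and $\aPerformance_{2}\relBetter\aPerformance_{1}$. Expanding the first statement via \cref{eq:rel-better} yields in particular $\aPerformance_{1}\not\relWorseOrEquivalent\aPerformance_{2}$; expanding the second statement via the same definition, with the roles of $\aPerformance_{1}$ and $\aPerformance_{2}$ interchanged, yields in particular $\aPerformance_{1}\relWorseOrEquivalent\aPerformance_{2}$. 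These two assertions contradict each other, so no such pair exists, i.e. $\relBetter$ is asymmetric.

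For $\relWorse$, I would not repeat the computation but instead invoke the previous lemma, which states that $\relBetter$ and $\relWorse$ are converse: since the converse of an asymmetric relation is again asymmetric, $\relWorse$ is asymmetric as well. (Alternatively, one can run the identical contradiction argument starting from \cref{eq:rel-worse}.)

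I do not expect any real obstacle here: the argument uses only the two defining conjunctions and a single clash between $\relWorseOrEquivalent$ and $\not\relWorseOrEquivalent$, so it does not even rely on the transitivity of $\relWorseOrEquivalent$ — reflexivity and transitivity are available, but the proof needs neither. A slightly more structural alternative, if one prefers to stay within the list of $29$ cases already tabulated, is to note that $\relBetter$ and $\relWorse$ are by now known to be irreflexive and transitive (\cref{eq:transitivity-rel-better,eq:transitivity-rel-worse}); an irreflexive transitive relation is automatically asymmetric, since $\aPerformance_{1}\relBetter\aPerformance_{2}$ together with $\aPerformance_{2}\relBetter\aPerformance_{1}$ would give $\aPerformance_{1}\relBetter\aPerformance_{1}$ by transitivity, contradicting irreflexivity.
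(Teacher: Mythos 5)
Your argument is correct and is essentially the paper's own proof: both unfold the defining conjunction of $\relBetter$ for the pair $(\aPerformance_1,\aPerformance_2)$ and for the swapped pair, and observe the direct clash between $\aPerformance_1\relWorseOrEquivalent\aPerformance_2$ and $\aPerformance_1\not\relWorseOrEquivalent\aPerformance_2$; the paper handles $\relWorse$ by saying the proof is similar, while you invoke converseness, which is an equally valid shortcut. Your side remarks (that neither reflexivity nor transitivity is needed, and the alternative via irreflexivity plus transitivity) are also accurate.
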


\begin{proof}
For $\relBetter$, this is because $\aPerformance_{1}\relBetter\aPerformance_{2}\wedge\aPerformance_{2}\relBetter\aPerformance_{1}\Leftrightarrow(\aPerformance_{1}\not\relWorseOrEquivalent\aPerformance_{2}\wedge\aPerformance_{2}\relWorseOrEquivalent\aPerformance_{1})\wedge(\aPerformance_{2}\not\relWorseOrEquivalent\aPerformance_{1}\wedge\aPerformance_{1}\relWorseOrEquivalent\aPerformance_{2})\Leftrightarrow(\aPerformance_{1}\not\relWorseOrEquivalent\aPerformance_{2}\wedge\aPerformance_{1}\relWorseOrEquivalent\aPerformance_{2})\wedge(\aPerformance_{2}\not\relWorseOrEquivalent\aPerformance_{1}\wedge\aPerformance_{2}\relWorseOrEquivalent\aPerformance_{1})\Leftrightarrow false\wedge false\Leftrightarrow false$.
For $\relWorse$, the proof is similar.
\end{proof}
\begin{lemma}
When $\relWorseOrEquivalent$ is a preorder, $\relBetter$ and $\relWorse$
are transitive.
\end{lemma}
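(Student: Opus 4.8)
The plan is to unfold the definition of $\relBetter$ in terms of the preorder $\relWorseOrEquivalent$ (\cref{eq:rel-better}) and then apply the transitivity of $\relWorseOrEquivalent$ twice: once directly, and once in contrapositive form. Concretely, assume $\aPerformance_{1}\relBetter\aPerformance_{2}$ and $\aPerformance_{2}\relBetter\aPerformance_{3}$. By \cref{eq:rel-better} this unpacks to $\aPerformance_{2}\relWorseOrEquivalent\aPerformance_{1}$, $\aPerformance_{1}\not\relWorseOrEquivalent\aPerformance_{2}$, $\aPerformance_{3}\relWorseOrEquivalent\aPerformance_{2}$, and $\aPerformance_{2}\not\relWorseOrEquivalent\aPerformance_{3}$. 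To conclude $\aPerformance_{1}\relBetter\aPerformance_{3}$, I must establish the two ingredients $\aPerformance_{3}\relWorseOrEquivalent\aPerformance_{1}$ and $\aPerformance_{1}\not\relWorseOrEquivalent\aPerformance_{3}$.

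First I would obtain $\aPerformance_{3}\relWorseOrEquivalent\aPerformance_{1}$ immediately: from $\aPerformance_{3}\relWorseOrEquivalent\aPerformance_{2}$ and $\aPerformance_{2}\relWorseOrEquivalent\aPerformance_{1}$, transitivity of $\relWorseOrEquivalent$ gives it at once. Second, I would argue $\aPerformance_{1}\not\relWorseOrEquivalent\aPerformance_{3}$ by contradiction: if $\aPerformance_{1}\relWorseOrEquivalent\aPerformance_{3}$ held, then combining it with $\aPerformance_{3}\relWorseOrEquivalent\aPerformance_{2}$ and using transitivity of $\relWorseOrEquivalent$ would yield $\aPerformance_{1}\relWorseOrEquivalent\aPerformance_{2}$, contradicting $\aPerformance_{1}\not\relWorseOrEquivalent\aPerformance_{2}$. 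Putting the two facts back into \cref{eq:rel-better} gives $\aPerformance_{1}\relBetter\aPerformance_{3}$, hence $\relBetter$ is transitive (this is exactly \cref{eq:transitivity-rel-better}, here derived directly rather than read off the case list).

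For $\relWorse$, rather than repeating the symmetric argument, I would invoke the earlier lemma stating that $\relBetter$ and $\relWorse$ are converse relations, together with the elementary fact that transitivity is preserved under taking the converse: if $\aPerformance_{1}\relWorse\aPerformance_{2}$ and $\aPerformance_{2}\relWorse\aPerformance_{3}$, then $\aPerformance_{3}\relBetter\aPerformance_{2}$ and $\aPerformance_{2}\relBetter\aPerformance_{1}$, so by the transitivity of $\relBetter$ just proved, $\aPerformance_{3}\relBetter\aPerformance_{1}$, i.e.\ $\aPerformance_{1}\relWorse\aPerformance_{3}$. (One could equivalently cite cases~12 and~24 of the exhaustive list preceding this lemma, since those are the only cases containing $\aPerformance_a\relBetter\aPerformance_b\wedge\aPerformance_b\relBetter\aPerformance_c$ and $\aPerformance_a\relWorse\aPerformance_b\wedge\aPerformance_b\relWorse\aPerformance_c$, respectively.)

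I do not expect a genuine obstacle here; the only point that requires attention is that the strict part of the relation, $\aPerformance_{1}\not\relWorseOrEquivalent\aPerformance_{3}$, cannot be obtained by a single forward application of transitivity and genuinely needs the contrapositive step described above.
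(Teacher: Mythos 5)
Your proof is correct. The paper itself disposes of this lemma in one line by pointing back to \cref{eq:transitivity-rel-better,eq:transitivity-rel-worse}, which were read off the exhaustive 29-case enumeration of triples compatible with the transitivity of $\relWorseOrEquivalent$; so formally the paper's justification is ``look it up in the table.'' You instead derive the result directly from \cref{eq:rel-better}: one forward application of transitivity of $\relWorseOrEquivalent$ yields $\aPerformance_{3}\relWorseOrEquivalent\aPerformance_{1}$, and one contrapositive application yields $\aPerformance_{1}\not\relWorseOrEquivalent\aPerformance_{3}$, which together give $\aPerformance_{1}\relBetter\aPerformance_{3}$. Your closing remark is exactly the right observation --- the strict half of the conclusion cannot be obtained by chaining $\relWorseOrEquivalent$ forward and genuinely requires the contradiction step; this is the content that the paper's case list hides. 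Handling $\relWorse$ via the previously proved converse lemma is also a clean economy over repeating the symmetric argument, and your identification of cases~12 and~24 as the relevant entries of the enumeration is accurate. The two approaches buy different things: the paper's enumeration settles all composition rules for $\relEquivalent$, $\relBetter$, $\relWorse$, $\relIncomparable$ at once, whereas your argument is self-contained, checkable without trusting the completeness of the 29-case list, and makes visible exactly where transitivity of the underlying preorder is used.
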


\begin{proof}
This results from the transitivity of $\relWorseOrEquivalent$ (\cf
 \cref{eq:transitivity-rel-better,eq:transitivity-rel-worse}).
\end{proof}
\begin{lemma}
When $\relWorseOrEquivalent$ is a preorder, $\relIncomparable$ is
irreflexive.
\end{lemma}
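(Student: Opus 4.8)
The plan is to mirror exactly the argument already used above for the irreflexivity of $\relBetter$ and $\relWorse$, since the obstacle (if any) is purely notational. First I would unfold the definition of $\relIncomparable$ given in \cref{eq:rel-incomparable}: for an arbitrary performance $\aPerformance$, we have $\aPerformance\relIncomparable\aPerformance \Leftrightarrow \aPerformance\not\relWorseOrEquivalent\aPerformance \wedge \aPerformance\not\relWorseOrEquivalent\aPerformance$. Then I would invoke the hypothesis that $\relWorseOrEquivalent$ is a preorder, hence reflexive, so $\aPerformance\relWorseOrEquivalent\aPerformance$ is true, which makes $\aPerformance\not\relWorseOrEquivalent\aPerformance$ false. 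Substituting, $\aPerformance\relIncomparable\aPerformance \Leftrightarrow false\wedge false \Leftrightarrow false$. Since $\aPerformance$ was arbitrary, there is no $\aPerformance$ with $\aPerformance\relIncomparable\aPerformance$, i.e., $\relIncomparable$ is irreflexive.

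There is no real difficulty here — the only thing to be careful about is that the conclusion follows from reflexivity of $\relWorseOrEquivalent$ alone and does not need transitivity, so the statement could in principle be phrased for any reflexive relation; but since the paper consistently assumes $\relWorseOrEquivalent$ is a preorder throughout this section, I would keep the hypothesis as stated for uniformity with the neighboring lemmas. I would write it compactly as a one-line chain of equivalences, exactly in the style of the preceding proofs, for instance:
\begin{proof}
This results from the reflexivity of $\relWorseOrEquivalent$ and from \cref{eq:rel-incomparable}: $\aPerformance\relIncomparable\aPerformance\Leftrightarrow\aPerformance\not\relWorseOrEquivalent\aPerformance\wedge\aPerformance\not\relWorseOrEquivalent\aPerformance\Leftrightarrow false\wedge false\Leftrightarrow false$, so no $\aPerformance$ satisfies $\aPerformance\relIncomparable\aPerformance$.
\end{proof}
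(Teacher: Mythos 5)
Your proof is correct and matches the paper's argument essentially verbatim: unfold the definition of $\relIncomparable$ from \cref{eq:rel-incomparable}, use the reflexivity of $\relWorseOrEquivalent$ to make both conjuncts false, and conclude. Your side remark that only reflexivity (not transitivity) is needed is also accurate.
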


\begin{proof}
This results from the reflexivity of $\relWorseOrEquivalent$ and
from \cref{eq:rel-incomparable}: $\aPerformance\relIncomparable\aPerformance\Leftrightarrow\aPerformance\not\relWorseOrEquivalent\aPerformance\wedge\aPerformance\not\relWorseOrEquivalent\aPerformance\Leftrightarrow false\wedge false\Leftrightarrow false$.
\end{proof}
\begin{lemma}
When $\relWorseOrEquivalent$ is a preorder, $\relIncomparable$ is
symmetric.
\end{lemma}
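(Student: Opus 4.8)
The plan is to mirror exactly the argument already used for the symmetry of $\relEquivalent$: invoke the defining equivalence \eqref{eq:rel-incomparable} and exploit the commutativity of the logical conjunction. Concretely, I would start from $\aPerformance_{1}\relIncomparable\aPerformance_{2}\Leftrightarrow\aPerformance_{1}\not\relWorseOrEquivalent\aPerformance_{2}\wedge\aPerformance_{2}\not\relWorseOrEquivalent\aPerformance_{1}$, then reorder the two conjuncts to obtain $\aPerformance_{2}\not\relWorseOrEquivalent\aPerformance_{1}\wedge\aPerformance_{1}\not\relWorseOrEquivalent\aPerformance_{2}$, which is precisely the right-hand side of \eqref{eq:rel-incomparable} with the roles of $\aPerformance_{1}$ and $\aPerformance_{2}$ exchanged, hence equivalent to $\aPerformance_{2}\relIncomparable\aPerformance_{1}$.

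Note that the preorder hypothesis is not actually needed for this particular lemma (unlike, e.g., irreflexivity of $\relIncomparable$, which uses reflexivity of $\relWorseOrEquivalent$); the statement holds for any homogeneous binary relation $\relWorseOrEquivalent$, since $\relIncomparable$ is symmetric by construction. I would keep the hypothesis in the statement only for uniformity with the surrounding lemmas. There is no real obstacle here: the only thing to be careful about is that negation distributes correctly over the relation (i.e. $\not\relWorseOrEquivalent$ is treated as an ordinary binary relation on $\allPerformances$), which is immediate.

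\begin{proof}
This results from the fact that the conjunction is symmetric and from \cref{eq:rel-incomparable}: $\aPerformance_{1}\relIncomparable\aPerformance_{2}\Leftrightarrow\aPerformance_{1}\not\relWorseOrEquivalent\aPerformance_{2}\wedge\aPerformance_{2}\not\relWorseOrEquivalent\aPerformance_{1}\Leftrightarrow\aPerformance_{2}\not\relWorseOrEquivalent\aPerformance_{1}\wedge\aPerformance_{1}\not\relWorseOrEquivalent\aPerformance_{2}\Leftrightarrow\aPerformance_{2}\relIncomparable\aPerformance_{1}$.
\end{proof}
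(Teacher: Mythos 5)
Your proof is correct and is essentially identical to the paper's own proof: both invoke the commutativity of the conjunction together with the defining equivalence for $\relIncomparable$ to swap the two conjuncts. Your side remark that the preorder hypothesis is not actually needed here is also accurate.
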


\begin{proof}
This results from the fact that the conjunction is symmetric and from
\cref{eq:rel-incomparable}: $\aPerformance_{1}\relIncomparable\aPerformance_{2}\Leftrightarrow\aPerformance_{1}\not\relWorseOrEquivalent\aPerformance_{2}\wedge\aPerformance_{2}\not\relWorseOrEquivalent\aPerformance_{1}\Leftrightarrow\aPerformance_{2}\not\relWorseOrEquivalent\aPerformance_{1}\wedge\aPerformance_{1}\not\relWorseOrEquivalent\aPerformance_{2}\Leftrightarrow\aPerformance_{2}\relIncomparable\aPerformance_{1}$.
\end{proof}
\begin{lemma}
When $\relWorseOrEquivalent$ is a preorder, $\relWorseOrEquivalent$
and $\relBetterOrEquivalent$ are converse.
\end{lemma}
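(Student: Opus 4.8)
The plan is to reduce the claim to the already-established facts that, when $\relWorseOrEquivalent$ is a preorder, $\relBetter$ and $\relWorse$ are converse and $\relEquivalent$ is symmetric. By definition of \emph{converse}, it suffices to show $\aPerformance_{1}\relWorseOrEquivalent\aPerformance_{2}\Leftrightarrow\aPerformance_{2}\relBetterOrEquivalent\aPerformance_{1}$ for all $\aPerformance_{1},\aPerformance_{2}$. First I would unfold the right-hand side using \cref{eq:rel-better-or-equivalent}: $\aPerformance_{2}\relBetterOrEquivalent\aPerformance_{1}\Leftrightarrow\aPerformance_{2}\relBetter\aPerformance_{1}\vee\aPerformance_{2}\relEquivalent\aPerformance_{1}$.

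Next, I would rewrite each disjunct on the $\aPerformance_{1},\aPerformance_{2}$ side: since $\relBetter$ and $\relWorse$ are converse, $\aPerformance_{2}\relBetter\aPerformance_{1}\Leftrightarrow\aPerformance_{1}\relWorse\aPerformance_{2}$; and since $\relEquivalent$ is symmetric, $\aPerformance_{2}\relEquivalent\aPerformance_{1}\Leftrightarrow\aPerformance_{1}\relEquivalent\aPerformance_{2}$. Substituting, $\aPerformance_{2}\relBetterOrEquivalent\aPerformance_{1}\Leftrightarrow\aPerformance_{1}\relWorse\aPerformance_{2}\vee\aPerformance_{1}\relEquivalent\aPerformance_{2}$, which is exactly $\aPerformance_{1}\relWorseOrEquivalent\aPerformance_{2}$ by \cref{eq:rel-worse-or-equivalent}. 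This closes the chain of equivalences.

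Alternatively, if one prefers to avoid citing the auxiliary lemmas, the same conclusion follows by a direct Boolean computation: expanding $\aPerformance_{2}\relBetter\aPerformance_{1}$ via \cref{eq:rel-better} and $\aPerformance_{2}\relEquivalent\aPerformance_{1}$ via \cref{eq:rel-equivalent} gives $(\aPerformance_{2}\not\relWorseOrEquivalent\aPerformance_{1}\wedge\aPerformance_{1}\relWorseOrEquivalent\aPerformance_{2})\vee(\aPerformance_{2}\relWorseOrEquivalent\aPerformance_{1}\wedge\aPerformance_{1}\relWorseOrEquivalent\aPerformance_{2})$, and factoring out $\aPerformance_{1}\relWorseOrEquivalent\aPerformance_{2}$ leaves $\aPerformance_{1}\relWorseOrEquivalent\aPerformance_{2}\wedge(\aPerformance_{2}\not\relWorseOrEquivalent\aPerformance_{1}\vee\aPerformance_{2}\relWorseOrEquivalent\aPerformance_{1})$, i.e.\ just $\aPerformance_{1}\relWorseOrEquivalent\aPerformance_{2}$.

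There is essentially no genuine obstacle here: the argument is a one-line propositional identity of the form $(\neg p\wedge q)\vee(p\wedge q)\equiv q$. The only point requiring a little care is to use \emph{only} reflexivity and transitivity of $\relWorseOrEquivalent$ (through the previously proved symmetry of $\relEquivalent$ and converseness of $\relBetter$/$\relWorse$) and not to inadvertently invoke antisymmetry, since $\relWorseOrEquivalent$ is merely a preorder.
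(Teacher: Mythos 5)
Your proof is correct and follows essentially the same route as the paper: unfold $\relBetterOrEquivalent$ via \cref{eq:rel-better-or-equivalent}, flip each disjunct using the converseness of $\relBetter$/$\relWorse$ and the symmetry of $\relEquivalent$, and refold via \cref{eq:rel-worse-or-equivalent}. Your added direct Boolean expansion $(\neg p\wedge q)\vee(p\wedge q)\equiv q$ is a nice self-contained check but does not change the substance.
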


\begin{proof}
From \cref{eq:rel-better-or-equivalent,eq:rel-worse-or-equivalent},
as $\relBetter$ and $\relWorse$ are converse, we have $\aPerformance_{1}\relBetterOrEquivalent\aPerformance_{2}\Leftrightarrow\aPerformance_{1}\relBetter\aPerformance_{2}\vee\aPerformance_{1}\relEquivalent\aPerformance_{2}\Leftrightarrow\aPerformance_{2}\relWorse\aPerformance_{1}\vee\aPerformance_{2}\relEquivalent\aPerformance_{1}\Leftrightarrow\aPerformance_{2}\relWorseOrEquivalent\aPerformance_{1}$.
\end{proof}
\begin{lemma}
When $\relWorseOrEquivalent$ is a preorder, $\relWorseOrEquivalent$
and $\relBetterOrEquivalent$ are reflexive.
\end{lemma}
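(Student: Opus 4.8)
The plan is to treat the two relations separately, since each reduces immediately to something already in hand. For $\relWorseOrEquivalent$ there is nothing to prove beyond unpacking the hypothesis: by the definition of \emph{preorder} recalled at the start of this section, a preorder is reflexive and transitive, so $\aPerformance \relWorseOrEquivalent \aPerformance$ holds for every $\aPerformance$ by assumption.

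For $\relBetterOrEquivalent$ I would start from its defining equation \cref{eq:rel-better-or-equivalent}, namely $\aPerformance_1 \relBetterOrEquivalent \aPerformance_2 \Leftrightarrow \aPerformance_1 \relBetter \aPerformance_2 \vee \aPerformance_1 \relEquivalent \aPerformance_2$, and specialize it to $\aPerformance_1 = \aPerformance_2 = \aPerformance$. The second disjunct, $\aPerformance \relEquivalent \aPerformance$, is true by the reflexivity of $\relEquivalent$ already established as the first lemma of \cref{sec:Properties_of_orders}; hence the disjunction holds and $\aPerformance \relBetterOrEquivalent \aPerformance$. An equally short route is to invoke the immediately preceding lemma, which says $\relWorseOrEquivalent$ and $\relBetterOrEquivalent$ are converse: then $\aPerformance \relBetterOrEquivalent \aPerformance \Leftrightarrow \aPerformance \relWorseOrEquivalent \aPerformance$, and the right-hand side holds by the part just proved.

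No real obstacle arises here; the only point worth flagging is that the argument is not circular. The reflexivity of $\relEquivalent$ (and, for the alternative route, the converse-relation lemma) were each derived directly from the reflexivity of $\relWorseOrEquivalent$ together with the definitions \cref{eq:rel-equivalent,eq:rel-better,eq:rel-worse}, with no dependence whatsoever on $\relBetterOrEquivalent$, so the chain of implications is well founded and the whole proof amounts to two one-line deductions.
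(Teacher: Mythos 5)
Your proof is correct and follows essentially the same route as the paper: reflexivity of $\relWorseOrEquivalent$ is immediate from the definition of a preorder, and reflexivity of $\relBetterOrEquivalent$ follows from \cref{eq:rel-better-or-equivalent} because the disjunct $\aPerformance\relEquivalent\aPerformance$ holds. The only cosmetic difference is that the paper unfolds $\relBetter$ and $\relEquivalent$ all the way down to $\relWorseOrEquivalent$ and evaluates the truth values inline, whereas you cite the earlier reflexivity-of-$\relEquivalent$ lemma; both are equally valid and non-circular.
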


\begin{proof}
For $\relWorseOrEquivalent$, it is by definition of preorders. For
$\relBetterOrEquivalent$, from \cref{eq:rel-better-or-equivalent,eq:rel-better,eq:rel-equivalent}, we have $\aPerformance\relBetterOrEquivalent\aPerformance\Leftrightarrow\aPerformance\relBetter\aPerformance\vee\aPerformance\relEquivalent\aPerformance\Leftrightarrow(\aPerformance\not\relWorseOrEquivalent\aPerformance\wedge\aPerformance\relWorseOrEquivalent\aPerformance)\vee(\aPerformance\relWorseOrEquivalent\aPerformance\wedge\aPerformance\relWorseOrEquivalent\aPerformance)\Leftrightarrow(false\wedge true)\vee(true\wedge true)\Leftrightarrow true$.
\end{proof}
\begin{lemma}
When $\relWorseOrEquivalent$ is a preorder, $\relWorseOrEquivalent$
and $\relBetterOrEquivalent$ are transitive.
\end{lemma}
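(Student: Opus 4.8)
The plan is to dispatch the two claims separately, both by reduction to facts already established above. For $\relWorseOrEquivalent$ there is nothing to do: transitivity of $\relWorseOrEquivalent$ is part of the very definition of a preorder, so the first half of the statement is immediate and needs no argument.

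For $\relBetterOrEquivalent$ I would take the shortest route first. The preceding lemma establishes that $\relWorseOrEquivalent$ and $\relBetterOrEquivalent$ are converse, i.e.\ $\aPerformance_1\relBetterOrEquivalent\aPerformance_2 \Leftrightarrow \aPerformance_2\relWorseOrEquivalent\aPerformance_1$. Since the converse of any transitive relation is transitive, the claim follows in one line: if $\aPerformance_1\relBetterOrEquivalent\aPerformance_2$ and $\aPerformance_2\relBetterOrEquivalent\aPerformance_3$, then $\aPerformance_3\relWorseOrEquivalent\aPerformance_2$ and $\aPerformance_2\relWorseOrEquivalent\aPerformance_1$, hence $\aPerformance_3\relWorseOrEquivalent\aPerformance_1$ by transitivity of $\relWorseOrEquivalent$, which is exactly $\aPerformance_1\relBetterOrEquivalent\aPerformance_3$.

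As a backup, and a useful consistency check, I would also verify it directly from the decomposition $\aPerformance_1\relBetterOrEquivalent\aPerformance_2 \Leftrightarrow \aPerformance_1\relBetter\aPerformance_2 \vee \aPerformance_1\relEquivalent\aPerformance_2$. Assuming $\aPerformance_1\relBetterOrEquivalent\aPerformance_2$ and $\aPerformance_2\relBetterOrEquivalent\aPerformance_3$, this splits into four cases. The case $(\relBetter,\relBetter)$ gives $\aPerformance_1\relBetter\aPerformance_3$ by transitivity of $\relBetter$; the case $(\relEquivalent,\relEquivalent)$ gives $\aPerformance_1\relEquivalent\aPerformance_3$ by transitivity of $\relEquivalent$; and the two mixed cases $(\relEquivalent,\relBetter)$ and $(\relBetter,\relEquivalent)$ both give $\aPerformance_1\relBetter\aPerformance_3$ by the mixed-combination rule derived earlier from the enumeration of the $29$ transitivity-consistent triples. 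In every case $\aPerformance_1\relBetter\aPerformance_3 \vee \aPerformance_1\relEquivalent\aPerformance_3$, that is, $\aPerformance_1\relBetterOrEquivalent\aPerformance_3$.

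I do not expect any real obstacle here; the only point requiring mild care is that the four-case split is genuinely exhaustive (it is, by definition of $\relBetterOrEquivalent$) and that each sub-implication has already been recorded above, so no new argument about $\relWorseOrEquivalent$, $\randVarSatisfaction$, or $\achievableByCombinations$ enters. If one wants the most economical proof, only the converse argument of the second paragraph is strictly needed.
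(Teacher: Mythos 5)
Your main argument is exactly the paper's proof: transitivity of $\relWorseOrEquivalent$ holds by definition of a preorder, and transitivity of $\relBetterOrEquivalent$ follows from the previously established fact that $\relWorseOrEquivalent$ and $\relBetterOrEquivalent$ are converse, since the converse of a transitive relation is transitive. The extra case-split via $\relBetter$ and $\relEquivalent$ is a valid but unnecessary consistency check; the core argument is correct and matches the paper.
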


\begin{proof}
For $\relWorseOrEquivalent$, it is by definition of preorders. For
$\relBetterOrEquivalent$, as $\relWorseOrEquivalent$ and $\relBetterOrEquivalent$
are converse, $\aPerformance_{1}\relBetterOrEquivalent\aPerformance_{2}\wedge\aPerformance_{2}\relBetterOrEquivalent\aPerformance_{3}\Leftrightarrow\aPerformance_{3}\relWorseOrEquivalent\aPerformance_{2}\wedge\aPerformance_{2}\relWorseOrEquivalent\aPerformance_{1}\Rightarrow\aPerformance_{3}\relWorseOrEquivalent\aPerformance_{1}\Leftrightarrow\aPerformance_{1}\relBetterOrEquivalent\aPerformance_{3}$.
\end{proof}

\clearpage
\subsection{Supplementary Material about \cref{sec:mathematical-framework-importance}}
\label{sup:misleading_visual_inspection}

\subsubsection{The Visual Inspection of Formulas, to Determine the Importance given by Scores, can be Misleading!}

\global\long\def\weightA{\alpha}
\global\long\def\weightB{\beta}

Let us consider the example of two-class classification, with $\aPerformance(\eventTN)$, $\aPerformance(\eventFP)$, $\aPerformance(\eventFN)$, and $\aPerformance(\eventTP)$ denoting, respectively, the probability (or proportion) of true negatives, false positives, false negatives, and true positives. Here are two classical scores, the accuracy and the true positive rate:
\begin{equation*}
   \scoreAccuracy=\aPerformance(\eventTN)+\aPerformance(\eventTP)
   \qquad\qquad
   \scoreTPR=\frac{\aPerformance(\eventTP)}{\aPerformance(\eventFN)+\aPerformance(\eventTP)}
\end{equation*}
The formula for the accuracy gives the illusion that the same importance is given to $\eventTN$ and $\eventTP$ and that no importance at all is given to $\eventFP$ and $\eventFN$. For the true positive rate, the formula might give the impression that the same importance is given to $\eventFN$ and $\eventTP$ and that no importance at all is given to $\eventTN$ and $\eventFP$. In fact, the visual inspection of formulas like these is not reliable at all to judge the importance given by a score to the various events. To see it, consider rewriting the previous equations as
\begin{align*}
\scoreAccuracy & = 
   (1-\weightA)\aPerformance(\eventTN)
   -\weightA\aPerformance(\eventFP)
   -\weightA\aPerformance(\eventFN)
   +(1-\weightA)\aPerformance(\eventTP)
   +\weightA\qquad\forall\weightA\\
\scoreTPR & = \frac{
   -\weightA\aPerformance(\eventTN)
   -\weightA\aPerformance(\eventFP)
   -\weightA\aPerformance(\eventFN)
   +(1-\weightA)\aPerformance(\eventTP)
   +\weightA
}{
   -\weightB\aPerformance(\eventTN)
   -\weightB\aPerformance(\eventFP)
   +(1-\weightB)\aPerformance(\eventFN)
   +(1-\weightB)\aPerformance(\eventTP)
   +\weightB
}\qquad\forall\weightA,\weightB
\end{align*}
A visual inspection of such formulas would lead, indeed, to other illusions about the events that have no importance. This observation, however, should not stop us from thinking in terms of importance. In fact, we do it in this paper, but we do it in a mathematical framework that allows to do it rigorously.

\subsubsection{So, How can we Determine the Importance given by Scores?}

One cannot determine the application-specific preferences implicitly considered by any score $\aScore$. However, one can determine those that are consistent, or have the best consistency, with $\aScore$. We detail these notions hereafter.

\paragraph{For a given set of performances.}
Consider a score $\aScore$ and a ranking score $\rankingScore$. If, for a given set $\aSetOfPerformances\subseteq\allPerformances$, the scores $\aScore$ and $\rankingScore$ are linked by a strict monotonic relationship on $\aSetOfPerformances \cap \domainOfScore[\aScore] \cap \domainOfScore[\rankingScore]$ and if $\aSetOfPerformances \cap \domainOfScore[\aScore] = \aSetOfPerformances \cap \domainOfScore[\rankingScore]$, then the performance orderings $\ordering_{\aScore}$ and $\ordering_{\rankingScore}$ (induced by $\aScore$ and $\rankingScore$ in the way specified in \cref{thm:suff-cond-preorder}) are identical on $\aSetOfPerformances$. We say that the score $\aScore$ \emph{is consistent with} the application-specific preferences $\randVarImportance$, on this set. A score can be consistent with different importance values (\eg, as consequence of Properties~\ref{prop:scale-invariance} and~\ref{prop:scale-invariance-per-satisfaction}).

\long\def\interior{\mathring{\mathbb{P}}}

\paragraph{For a given distribution of performances.}
Let $\interior=\{\aPerformance\in\allPerformances:\aPerformance(\{\aSample\})>0 \forall \aSample\in\sampleSpace)\}$. All ranking scores are defined on this set. Consider a score $\aScore$ and the set $\aSetOfPerformances=\domainOfScore[\aScore]\cap\interior$. We say that the score $\aScore$ \emph{has the best consistency with} the application-specific preferences $\randVarImportance$ when $\randVarImportance$ maximizes the rank correlation between $\aScore$ and $\rankingScore$, on $\aSetOfPerformances$, for the given distribution of performances. See \cref{sec:kendall} for computational details.

\clearpage
\subsection{Supplementary Material about \cref{sec:sufficient-conditions}}
\label{sec:sufficient-conditions-proofs}

\subsubsection{Proof of Theorem~\ref{thm:suff-cond-preorder}.}

For convenience, we provide a reminder of Theorem~\ref{thm:suff-cond-preorder} and Axiom~\ref{axiom:preorder} below.
\restatableTA*
\restatableAA*

\begin{proof}
To establish that $\ordering_{\aScore}$ is a preorder, we
have to show that it is (1)~reflexive and (2)~transitive.
\begin{enumerate}
\item[(1)] The reflexivity of $\ordering_{\aScore}$ is trivial to establish,
since $\aPerformance_{1}=\aPerformance_{2}\Rightarrow\aPerformance_{1}\ordering_{\aScore}\aPerformance_{2}$.
\item[(2)] The transitivity of $\ordering_{\aScore}$ can be shown as follows.
$\aPerformance_{1}\ordering_{\aScore}\aPerformance_{2}\wedge\aPerformance_{2}\ordering_{\aScore}\aPerformance_{3}$
implies that:
\begin{itemize}
\item either $\aPerformance_{1}\in\domainOfScore$, $\aPerformance_{2}\in\domainOfScore$,
$\aPerformance_{3}\in\domainOfScore$, and $\aScore(\aPerformance_{1})\le\aScore(\aPerformance_{2})\wedge\aScore(\aPerformance_{2})\le\aScore(\aPerformance_{3})\Rightarrow\aScore(\aPerformance_{1})\le\aScore(\aPerformance_{3})\Rightarrow\aPerformance_{1}\ordering_{\aScore}\aPerformance_{3}$;
\item or $\aPerformance_{1}\not\in\domainOfScore$, $\aPerformance_{2}\not\in\domainOfScore$,
$\aPerformance_{3}\not\in\domainOfScore$, and $\aPerformance_{1}=\aPerformance_{2}\wedge\aPerformance_{2}=\aPerformance_{3}\Rightarrow\aPerformance_{1}=\aPerformance_{3}\Rightarrow\aPerformance_{1}\ordering_{\aScore}\aPerformance_{3}$.
\end{itemize}
\end{enumerate}
We conclude that, in all cases, $\aPerformance\ordering_{\aScore}\aPerformance$
and $\aPerformance_{1}\ordering_{\aScore}\aPerformance_{2}\wedge\aPerformance_{2}\ordering_{\aScore}\aPerformance_{3}\Rightarrow\aPerformance_{1}\ordering_{\aScore}\aPerformance_{3}$.
The orderings $\ordering_{\aScore}$ induced by scores $\aScore$
are thus preorders.
\end{proof}

\paragraph*{Summary.}

If the homogeneous binary relations $\relEquivalent$, $\relBetter$,
$\relWorse$, and $\relIncomparable$ on $\allPerformances$ are derived
from the ordering $\ordering_{\aScore}$ as explained above, and if
the $\ordering_{\aScore}$ is derived from the score $\aScore$, then
the comparison between performances $\aPerformance_{1}$ and $\aPerformance_{2}$
can be summarized as follows.
\begin{center}
\begin{tabular}{c|c|c|}
 & $\aPerformance_{1}\in\domainOfScore$ & $\aPerformance_{1}\not\in\domainOfScore$\tabularnewline
\hline 
$\aPerformance_{2}\in\domainOfScore$ & {$\!
\begin{aligned}
\aScore(\aPerformance_{1})<\aScore(\aPerformance_{2}) & \Leftrightarrow\aPerformance_{1}\relWorse\aPerformance_{2}\\
\aScore(\aPerformance_{1})=\aScore(\aPerformance_{2}) & \Leftrightarrow\aPerformance_{1}\relEquivalent\aPerformance_{2}\\
\aScore(\aPerformance_{1})>\aScore(\aPerformance_{2}) & \Leftrightarrow\aPerformance_{1}\relBetter\aPerformance_{2}
\end{aligned}$
} & {$\aPerformance_{1}\relIncomparable\aPerformance_{2}$}\tabularnewline
\hline 
$\aPerformance_{2}\not\in\domainOfScore$ & $\aPerformance_{1}\relIncomparable\aPerformance_{2}$ & {$\!
\begin{aligned}
\aPerformance_{1}=\aPerformance_{2} & \Leftrightarrow\aPerformance_{1}\relEquivalent\aPerformance_{2}\\
\aPerformance_{1}\ne\aPerformance_{2} & \Leftrightarrow\aPerformance_{1}\relIncomparable\aPerformance_{2}
\end{aligned}
$}\tabularnewline
\hline 
\end{tabular}
\par\end{center}

\subsubsection{Proof of Theorem~\ref{thm:suff-cond-satisfaction}}

For convenience, we provide a reminder of Theorem~\ref{thm:suff-cond-satisfaction} and Axiom~\ref{axiom:satisfaction} below.
\restatableTBC*
\restatableAB*

\begin{proof}
Axiom~\ref{axiom:satisfaction} is satisfied when $\aPerformance_{1}\not\in\domainOfScore$
or $\aPerformance_{2}\not\in\domainOfScore$.
\begin{itemize}
\item Either $\aPerformance_{1}=\aPerformance_{2}\Leftrightarrow\aPerformance_{1}\relEquivalent\aPerformance_{2}\Rightarrow\aPerformance_{1}\relWorseOrEquivalent\aPerformance_{2}$,
\item or $\aPerformance_{1}\ne\aPerformance_{2}\Leftrightarrow\aPerformance_{1}\relIncomparable\aPerformance_{2}$.
\end{itemize}
Axiom~\ref{axiom:satisfaction} is also satisfied when
$\aPerformance_{1}\in\domainOfScore$ and $\aPerformance_{2}\in\domainOfScore$.
\begin{itemize}
\item On the one hand, the axiom stipulates that the event $\anEvent_{1}=\{\aSample\in\sampleSpace:\randVarSatisfaction(\aSample)\le s\}$
and the performance $\aPerformance_{1}$ are such that $\aPerformance_{1}(\anEvent_{1})=1$.
Trivially, we have $\max_{\aSample\in\anEvent_{1}}\randVarSatisfaction(\aSample)\le s$.
On the other hand, the theorem stipulates that, as $\aPerformance_{1}(\anEvent_{1})=1$,
$\aScore(\aPerformance_{1})\le\max_{\aSample\in\anEvent_{1}}\randVarSatisfaction(\aSample)$.
Putting all together, we have $\aScore(\aPerformance_{1})\le s$.
\item On the one hand, the axiom stipulates that the event $\anEvent_{2}=\{\aSample\in\sampleSpace:\randVarSatisfaction(\aSample)\ge s\}$
and the performance $\aPerformance_{2}$ are such that $\aPerformance_{2}(\anEvent_{2})=1$.
Trivially, we have $s\le\min_{\aSample\in\anEvent_{2}}\randVarSatisfaction(\aSample)$.
On the other hand, the theorem stipulates that, as $\aPerformance_{2}(\anEvent_{2})=1$,
$\min_{\aSample\in\anEvent_{2}}\randVarSatisfaction(\aSample)\le\aScore(\aPerformance_{2})$.
Putting all together, we have $s\le\aScore(\aPerformance_{2})$.
\item As we have established that $\aScore(\aPerformance_{1})\le s$ and
$s\le\aScore(\aPerformance_{2})$, we have $\aScore(\aPerformance_{1})\le\aScore(\aPerformance_{2})\Leftrightarrow\aPerformance_{1}\relWorseOrEquivalent\aPerformance_{2}$.
\end{itemize}
\end{proof}

\subsubsection{Proof of Theorem~\ref{thm:suff-cond-combinations}}

For convenience, we provide a reminder of Theorem~\ref{thm:suff-cond-combinations} and Axiom~\ref{axiom:combinations} below.
\restatableTD*
\restatableAD*

\begin{proof}

    We take $\ordering=\ordering_{\aScore}$ and $\aSetOfPerformances\ne\emptyset$.

    \mysection{Remainder of the conditions.}
    The first condition of \cref{thm:suff-cond-combinations} is:
    \begin{equation}
        \aSetOfPerformances\subseteq\domainOfScore\Rightarrow\achievableByCombinations(\aSetOfPerformances)\subseteq\domainOfScore\point \label{eq:thm-suff-cond-combinations-condition-1}
    \end{equation}
    The second condition of \cref{thm:suff-cond-combinations} is:
    \begin{equation}
        \min_{\aPerformance\in\aSetOfPerformances}\aScore(\aPerformance) \le \aScore(\overline{\aPerformance}) \le \max_{\aPerformance\in\aSetOfPerformances} \aScore(\aPerformance) \qquad \forall \, \aSetOfPerformances\subseteq\domainOfScore \qquad \forall \, \overline{\aPerformance}\in\achievableByCombinations(\aSetOfPerformances)
        \point \label{eq:thm-suff-cond-combinations-condition-2}
    \end{equation}
    The condition of Axiom \ref{axiom:combinations} is that $\aPerformance$ is comparable to all performances in the set $\aSetOfPerformances$:
    \begin{equation}
        \aPerformance'\relWorseOrEquivalent\aPerformance\vee\aPerformance\relWorseOrEquivalent\aPerformance' \qquad \forall \, \aPerformance'\in\aSetOfPerformances \point \label{eq:axiom-combinations-condition}
    \end{equation}

    \mysection{On the domain of $\aScore$.}
    By \cref{thm:suff-cond-preorder}, this last condition implies that
    \begin{equation}
        \aPerformance\in\domainOfScore \comma \label{eq:P-in-dom}
    \end{equation}
    and
    \begin{equation}
        \aSetOfPerformances\subseteq\domainOfScore \point \label{eq:Pi-subset-dom}
    \end{equation}
    Taking \cref{eq:thm-suff-cond-combinations-condition-1} and \cref{eq:Pi-subset-dom} together, we have
    \begin{equation}
        \achievableByCombinations(\aSetOfPerformances)\subseteq\domainOfScore 
        \qquad \Leftrightarrow \qquad
        \overline{\aPerformance}\in\domainOfScore\qquad \forall\,\overline{\aPerformance}\in\achievableByCombinations(\aSetOfPerformances) \point
    \end{equation}

    \mysection{Proof that $\aPerformance'\relWorseOrEquivalent\aPerformance\, \forall\aPerformance'\in\aSetOfPerformances \Rightarrow \overline{\aPerformance}\relWorseOrEquivalent\aPerformance\, \forall\overline{\aPerformance}\in\achievableByCombinations(\aSetOfPerformances)$.}
    On the one hand, we have, by \cref{thm:suff-cond-preorder},
    \begin{align*}
        \aPerformance'\relWorseOrEquivalent\aPerformance\, \forall\aPerformance'\in\aSetOfPerformances
        \Leftrightarrow & 
        \aScore(\aPerformance')\le \aScore(\aPerformance) \qquad \forall\,\aPerformance'\in\aSetOfPerformances \\
        \Leftrightarrow & 
        \max_{\aPerformance'\in\aSetOfPerformances} \aScore(\aPerformance') \le \aScore(\aPerformance) \point
    \end{align*}
    On the other hand, \cref{eq:thm-suff-cond-combinations-condition-2} implies that
    \begin{equation*}
        \aScore(\overline{\aPerformance}) \le \max_{\aPerformance'\in\aSetOfPerformances} \aScore(\aPerformance') \qquad \forall \, \overline{\aPerformance}\in\achievableByCombinations(\aSetOfPerformances)
        \point
    \end{equation*}
    Considering the last two equations together, we obtain
    \begin{align*}
        & \aScore(\overline{\aPerformance}) \le \max_{\aPerformance'\in\aSetOfPerformances} \aScore(\aPerformance')  \le  \aScore(\aPerformance) \qquad \forall \, \overline{\aPerformance}\in\achievableByCombinations(\aSetOfPerformances) \\
        \Rightarrow & \aScore(\overline{\aPerformance}) \le \aScore(\aPerformance) \qquad \forall \, \overline{\aPerformance}\in\achievableByCombinations(\aSetOfPerformances) \point
    \end{align*}
    By \cref{thm:suff-cond-preorder}, we have thus $\overline{\aPerformance}\relWorseOrEquivalent\aPerformance$.

    \mysection{Proof that $\aPerformance'\not\relWorseOrEquivalent\aPerformance\, \forall\aPerformance'\in\aSetOfPerformances \Rightarrow \overline{\aPerformance}\not\relWorseOrEquivalent\aPerformance\, \forall\overline{\aPerformance}\in\achievableByCombinations(\aSetOfPerformances)$.} 
    On the one hand, we have, by \cref{eq:axiom-combinations-condition} and \cref{thm:suff-cond-preorder},
    \begin{align*}
        \aPerformance'\not\relWorseOrEquivalent\aPerformance\, \forall\,\aPerformance'\in\aSetOfPerformances
        \Rightarrow & \aPerformance \relWorse \aPerformance' \, \forall\,\aPerformance'\in\aSetOfPerformances\\
        \Leftrightarrow & \aScore(\aPerformance) < \aScore(\aPerformance') \qquad \forall\,\aPerformance'\in\aSetOfPerformances\\
        \Leftrightarrow & \aScore(\aPerformance) < \min_{\aPerformance'\in\aSetOfPerformances}\aScore(\aPerformance')
    \end{align*}
    On the other hand, \cref{eq:thm-suff-cond-combinations-condition-2} implies that
    \begin{equation*}
        \min_{\aPerformance'\in\aSetOfPerformances}\aScore(\aPerformance') \le \aScore(\overline{\aPerformance}) \qquad \forall \, \overline{\aPerformance}\in\achievableByCombinations(\aSetOfPerformances)
    \end{equation*}
    Considering the last two equations together, we obtain
    \begin{align*}
        & \aScore(\aPerformance) < \min_{\aPerformance'\in\aSetOfPerformances}\aScore(\aPerformance') \le \aScore(\overline{\aPerformance}) \qquad \forall \, \overline{\aPerformance}\in\achievableByCombinations(\aSetOfPerformances) \\
        \Rightarrow & \aScore(\aPerformance) < \aScore(\overline{\aPerformance}) \qquad \forall \, \overline{\aPerformance}\in\achievableByCombinations(\aSetOfPerformances)  \point
    \end{align*}
    By \cref{thm:suff-cond-preorder}, we have thus $\aPerformance \relWorse \overline{\aPerformance}$, and by \cref{eq:axiom-combinations-condition}, $\overline{\aPerformance}\not\relWorseOrEquivalent\aPerformance$.

    \mysection{Proof that $\aPerformance\relWorseOrEquivalent\aPerformance'\, \forall\aPerformance'\in\aSetOfPerformances \Rightarrow \aPerformance\relWorseOrEquivalent\overline{\aPerformance}\, \forall\overline{\aPerformance}\in\achievableByCombinations(\aSetOfPerformances)$.} 
    On the one hand, we have, by \cref{thm:suff-cond-preorder},
    \begin{align*}
        \aPerformance\relWorseOrEquivalent\aPerformance'\, \forall\aPerformance'\in\aSetOfPerformances
        \Leftrightarrow & 
        \aScore(\aPerformance)\le \aScore(\aPerformance') \qquad \forall\,\aPerformance'\in\aSetOfPerformances \\
        \Leftrightarrow & 
        \aScore(\aPerformance) \le \min_{\aPerformance'\in\aSetOfPerformances} \aScore(\aPerformance') \point
    \end{align*}
    On the other hand, \cref{eq:thm-suff-cond-combinations-condition-2} implies that
    \begin{equation*}
        \min_{\aPerformance'\in\aSetOfPerformances} \aScore(\aPerformance') \le  \aScore(\overline{\aPerformance}) \qquad \forall \, \overline{\aPerformance}\in\achievableByCombinations(\aSetOfPerformances)
        \point
    \end{equation*}
    Considering the last two equations together, we obtain
    \begin{align*}
        & \aScore(\aPerformance) \le \min_{\aPerformance'\in\aSetOfPerformances} \aScore(\aPerformance')  \le  \aScore(\overline{\aPerformance}) \qquad \forall \, \overline{\aPerformance}\in\achievableByCombinations(\aSetOfPerformances) \\
        \Rightarrow & \aScore(\aPerformance) \le \aScore(\overline{\aPerformance}) \qquad \forall \, \overline{\aPerformance}\in\achievableByCombinations(\aSetOfPerformances) \point
    \end{align*}
    By \cref{thm:suff-cond-preorder}, we have thus $\aPerformance\relWorseOrEquivalent\overline{\aPerformance}$.

    \mysection{Proof that $\aPerformance\not\relWorseOrEquivalent\aPerformance'\, \forall\aPerformance'\in\aSetOfPerformances \Rightarrow \aPerformance\not\relWorseOrEquivalent\overline{\aPerformance}\, \forall\overline{\aPerformance}\in\achievableByCombinations(\aSetOfPerformances)$.} 
    On the one hand, we have, by \cref{eq:axiom-combinations-condition} and \cref{thm:suff-cond-preorder},
    \begin{align*}
        \aPerformance\not\relWorseOrEquivalent\aPerformance'\, \forall\,\aPerformance'\in\aSetOfPerformances
        \Rightarrow & \aPerformance' \relWorse \aPerformance \, \forall\,\aPerformance'\in\aSetOfPerformances\\
        \Leftrightarrow & \aScore(\aPerformance') < \aScore(\aPerformance) \qquad \forall\,\aPerformance'\in\aSetOfPerformances\\
        \Leftrightarrow &  \max_{\aPerformance'\in\aSetOfPerformances}\aScore(\aPerformance') < \aScore(\aPerformance)
    \end{align*}
    On the other hand, \cref{eq:thm-suff-cond-combinations-condition-2} implies that
    \begin{equation*}
        \aScore(\overline{\aPerformance}) \le \max_{\aPerformance'\in\aSetOfPerformances}\aScore(\aPerformance') \qquad \forall \, \overline{\aPerformance}\in\achievableByCombinations(\aSetOfPerformances)
    \end{equation*}
    Considering the last two equations together, we obtain
    \begin{align*}
        & \aScore(\overline{\aPerformance}) \le \max_{\aPerformance'\in\aSetOfPerformances}\aScore(\aPerformance') < \aScore(\aPerformance) \qquad \forall \, \overline{\aPerformance}\in\achievableByCombinations(\aSetOfPerformances) \\
        \Rightarrow & \aScore(\overline{\aPerformance}) < \aScore(\aPerformance) \qquad \forall \, \overline{\aPerformance}\in\achievableByCombinations(\aSetOfPerformances)  \point
    \end{align*}
    By \cref{thm:suff-cond-preorder}, we have thus $\overline{\aPerformance} \relWorse \aPerformance$, and by \cref{eq:axiom-combinations-condition}, $\aPerformance\not\relWorseOrEquivalent\overline{\aPerformance}$.
    
\end{proof}

\clearpage
\subsection{Supplementary Material about \cref{sec:ranking-scores}}

\subsubsection{All Ranking Scores can be Used to Rank Performances (for \texorpdfstring{$\achievableByCombinations=\allConvexCombinations$}{phi=conv})}
\label{sec:ranking-scores-proofs}

To show that all ranking scores can be used to rank performances, for $\achievableByCombinations=\allConvexCombinations$, we show that these scores satisfy the conditions of Theorems~\ref{thm:suff-cond-preorder}, \ref{thm:suff-cond-satisfaction}, and~\ref{thm:suff-cond-combinations}.


\paragraph{All ranking scores satisfy the conditions of Theorem~\ref{thm:suff-cond-preorder}, and thus Axiom~\ref{axiom:preorder}.}

For convenience, we provide a reminder of Theorem~\ref{thm:suff-cond-preorder} and Axiom~\ref{axiom:preorder} below.
\restatableTA*
\restatableAA*

\begin{theorem}
All ranking scores satisfy the conditions of Theorem~\ref{thm:suff-cond-preorder}.
\end{theorem}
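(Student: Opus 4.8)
The plan is to note that \cref{thm:suff-cond-preorder} imposes \emph{no} structural hypothesis on the object it is applied to beyond its being a \emph{score} in the sense of \cref{sec:mathematical-framework-scores}, i.e., a function $\domainOfScore\to\realNumbers$ with $\domainOfScore\subseteq\allPerformances$; for every such $\aScore$ the induced relation $\ordering_{\aScore}$ is then automatically a preorder satisfying Axiom~\ref{axiom:preorder}. Consequently, to establish the statement it suffices to verify that each ranking score $\rankingScore$ is a well-defined score of this kind on the domain $\domainOfScore[\rankingScore]=\{\aPerformance\in\allPerformances:\expectedValueSymbol_{\aPerformance}[\randVarImportance]\ne0\}$, after which \cref{thm:suff-cond-preorder} applies verbatim.

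\textbf{Well-definedness of $\rankingScore$.} First, $\domainOfScore[\rankingScore]\subseteq\allPerformances$ holds by construction, so the domain requirement is met. Next I would check that the defining quotient is a genuine real number for every $\aPerformance\in\domainOfScore[\rankingScore]$: since $\randVarImportance\ge0$ pointwise and $\aPerformance$ is a probability measure, $\expectedValueSymbol_{\aPerformance}[\randVarImportance]\ge0$, and combined with the defining condition $\expectedValueSymbol_{\aPerformance}[\randVarImportance]\ne0$ this gives $\expectedValueSymbol_{\aPerformance}[\randVarImportance]>0$, so $\rankingScore(\aPerformance)=\expectedValueSymbol_{\aPerformance}[\randVarImportance\randVarSatisfaction]/\expectedValueSymbol_{\aPerformance}[\randVarImportance]$ is unambiguously defined and finite. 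The hypothesis $\randVarImportance\ne0$ additionally guarantees $\domainOfScore[\rankingScore]$ is non-empty (it contains, for instance, every performance putting positive mass on each sample of $\sampleSpace$). Finally, bounding $\minSatisfaction\,\randVarImportance(\aSample)\le\randVarImportance(\aSample)\randVarSatisfaction(\aSample)\le\maxSatisfaction\,\randVarImportance(\aSample)$ for all $\aSample\in\sampleSpace$, taking expectations under $\aPerformance$, and dividing by $\expectedValueSymbol_{\aPerformance}[\randVarImportance]>0$ shows $\rankingScore(\aPerformance)\in[\minSatisfaction,\maxSatisfaction]$, which also justifies the codomain stated in the definition.

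\textbf{Conclusion and main obstacle.} Having exhibited $\rankingScore$ as a bona fide score, I would conclude by invoking \cref{thm:suff-cond-preorder}, which immediately yields that $\ordering_{\rankingScore}$ is a preorder satisfying Axiom~\ref{axiom:preorder}. The argument is essentially bookkeeping; the only step deserving even momentary attention is ruling out a vanishing denominator on $\domainOfScore[\rankingScore]$, and this is immediate from the non-negativity of $\randVarImportance$ together with $\randVarImportance\ne0$ (so the denominator is in fact strictly positive there), after which all remaining claims reduce to the monotonicity and linearity of expectation applied to the bounded random variable $\randVarSatisfaction$.
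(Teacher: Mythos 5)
Your proposal is correct and takes essentially the same route as the paper, whose own proof is a one-line remark that Theorem~\ref{thm:suff-cond-preorder} applies to any score and hence to every $\rankingScore$. You simply make explicit the bookkeeping the paper leaves implicit (strict positivity of the denominator on $\domainOfScore[\rankingScore]$ and the resulting well-definedness), which is a harmless and welcome elaboration.
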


\begin{proof}
For all ranking scores $\rankingScore$, it is possible to induce an ordering $\ordering_{\rankingScore}$ satisfying the requirements of Theorem~\ref{thm:suff-cond-preorder}.
\end{proof}


\paragraph{All ranking scores satisfy the conditions of Theorem~\ref{thm:suff-cond-satisfaction}, and thus Axiom~\ref{axiom:satisfaction}.}

For convenience, we provide a reminder of Theorem~\ref{thm:suff-cond-satisfaction} and Axiom~\ref{axiom:satisfaction} below.
\restatableTBC*
\restatableAB*

\begin{theorem}
All ranking scores satisfy the conditions of Theorem~\ref{thm:suff-cond-satisfaction}.
\end{theorem}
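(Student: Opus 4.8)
The plan is to verify directly the hypothesis of \cref{thm:suff-cond-satisfaction} for an arbitrary ranking score $\rankingScore$: for every event $\anEvent\in\eventSpace$ and every performance $\aPerformance\in\domainOfScore[\rankingScore]$ with $\aPerformance(\anEvent)=1$, we must show $\min_{\aSample\in\anEvent}\randVarSatisfaction(\aSample)\le\rankingScore(\aPerformance)\le\max_{\aSample\in\anEvent}\randVarSatisfaction(\aSample)$. The single observation that drives everything is that $\rankingScore(\aPerformance)$ is a weighted average of the satisfaction values $\randVarSatisfaction(\aSample)$ with non-negative weights, and that, because $\aPerformance$ puts all its mass on $\anEvent$, those weights are supported on $\anEvent$.

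\textbf{Key steps.} First I would fix such an $\anEvent$ and $\aPerformance$ and rewrite $\rankingScore(\aPerformance)=\expectedValueSymbol_{\aPerformance}[\randVarImportance\randVarSatisfaction]/\expectedValueSymbol_{\aPerformance}[\randVarImportance]$. Since $\aPerformance(\anEvent)=1$, the mass outside $\anEvent$ is null, so in the finite setting with $\eventSpace=2^{\sampleSpace}$ both expectations are sums over $\aSample\in\anEvent$ only, giving $\rankingScore(\aPerformance)=\left(\sum_{\aSample\in\anEvent}w(\aSample)\randVarSatisfaction(\aSample)\right)/\left(\sum_{\aSample\in\anEvent}w(\aSample)\right)$ with $w(\aSample)=\randVarImportance(\aSample)\aPerformance(\{\aSample\})\ge0$ (using $\randVarImportance\ge0$). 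Second, I would note that the denominator equals $\expectedValueSymbol_{\aPerformance}[\randVarImportance]$, which is nonzero by definition of $\domainOfScore[\rankingScore]$ and non-negative as a sum of non-negative terms, hence strictly positive; in particular at least one weight $w(\aSample)$, $\aSample\in\anEvent$, is positive. Third, I would invoke the elementary fact that a weighted mean of the numbers $\{\randVarSatisfaction(\aSample):\aSample\in\anEvent,\ w(\aSample)>0\}$ with non-negative, not-all-zero weights lies between their minimum and their maximum; since these extrema are respectively $\ge\min_{\aSample\in\anEvent}\randVarSatisfaction(\aSample)$ and $\le\max_{\aSample\in\anEvent}\randVarSatisfaction(\aSample)$, the required double inequality follows, and \cref{thm:suff-cond-satisfaction} then gives that $\ordering_{\rankingScore}$ satisfies \cref{axiom:satisfaction}. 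For a general measurable space the same argument runs with integrals: $\aPerformance(\anEvent)=1$ yields $\min_{\aSample\in\anEvent}\randVarSatisfaction(\aSample)\le\randVarSatisfaction\le\max_{\aSample\in\anEvent}\randVarSatisfaction(\aSample)$ $\aPerformance$-almost everywhere, which one multiplies by $\randVarImportance\ge0$, integrates, and divides by $\expectedValueSymbol_{\aPerformance}[\randVarImportance]>0$.

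\textbf{Main obstacle.} There is no deep difficulty here; the only points that require a little care are (i) checking that $\expectedValueSymbol_{\aPerformance}[\randVarImportance]>0$, so that the ``weighted average'' interpretation is legitimate and no division by zero occurs, and (ii) if one wants the statement in full generality beyond the finite sample space, phrasing the monotonicity of the integral correctly for $\aPerformance$-almost-everywhere inequalities. Everything else is the routine bounding of a weighted mean by its extreme values.
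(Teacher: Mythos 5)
Your proposal is correct and follows essentially the same route as the paper: restrict the sums defining $\rankingScore(\aPerformance)$ to $\anEvent$ using $\aPerformance(\anEvent)=1$, observe the denominator is strictly positive on $\domainOfScore[\rankingScore]$, and bound the resulting non-negatively weighted average of $\randVarSatisfaction$ by its extrema over $\anEvent$. The paper merely writes out the elementary weighted-mean inequality in full rather than citing it, so there is nothing substantive to add.
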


\begin{proof}
We take $\aScore=\rankingScore$. When $\aPerformance(\anEvent)=1$,
we have
\[
\rankingScore(\aPerformance)=\frac{\sum_{\aSample\in\sampleSpace}\randVarImportance(\aSample)\randVarSatisfaction(\aSample)\aPerformance(\{\aSample\})}{\sum_{\aSample\in\sampleSpace}\randVarImportance(\aSample)\aPerformance(\{\aSample\})}=\frac{\sum_{\aSample\in\anEvent}\randVarImportance(\aSample)\randVarSatisfaction(\aSample)\aPerformance(\{\aSample\})}{\sum_{\aSample\in\anEvent}\randVarImportance(\aSample)\aPerformance(\{\aSample\})}
\]
with $\sum_{\aSample\in\sampleSpace}\randVarImportance(\aSample)\aPerformance(\{\aSample\})>0$
when $\aPerformance\in\domainOfScore[{\rankingScore}]$.
\begin{itemize}
\item Let $M=\max_{\aSample\in\anEvent}\randVarSatisfaction(\aSample)$.
We have
\begin{align*}
 & \randVarSatisfaction(\aSample)\le M\qquad\forall\aSample\in\anEvent\\
\Leftrightarrow & \randVarSatisfaction(\aSample)-M\le0\qquad\forall\aSample\in\anEvent\\
\Rightarrow & \sum_{\aSample\in\anEvent}\randVarImportance(\aSample)\left[\randVarSatisfaction(\aSample)-M\right]\aPerformance(\{\aSample\})\le0\qquad\textrm{ as }\randVarImportance(\aSample)\ge0\textrm{ and }\aPerformance(\{\aSample\})\ge0\\
\Leftrightarrow & \sum_{\aSample\in\anEvent}\randVarImportance(\aSample)\randVarSatisfaction(\aSample)\aPerformance(\{\aSample\})\le\sum_{\aSample\in\anEvent}\randVarImportance(\aSample)M\aPerformance(\{\aSample\})\\
\Leftrightarrow & \sum_{\aSample\in\anEvent}\randVarImportance(\aSample)\randVarSatisfaction(\aSample)\aPerformance(\{\aSample\})\le M\underbrace{\sum_{\aSample\in\anEvent}\randVarImportance(\aSample)\aPerformance(\{\aSample\})}_{>0}\\
\Leftrightarrow & \frac{\sum_{\aSample\in\anEvent}\randVarImportance(\aSample)\randVarSatisfaction(\aSample)\aPerformance(\{\aSample\})}{\sum_{\aSample\in\anEvent}\randVarImportance(\aSample)\aPerformance(\{\aSample\})}\le M\\
\Leftrightarrow & \rankingScore(\aPerformance)\le M
\end{align*}
\item Let $m=\min_{\aSample\in\anEvent}\randVarSatisfaction(\aSample)$.
We have
\begin{align*}
 & \randVarSatisfaction(\aSample)\ge m\qquad\forall\aSample\in\anEvent\\
\Leftrightarrow & \randVarSatisfaction(\aSample)-m\ge0\qquad\forall\aSample\in\anEvent\\
\Rightarrow & \sum_{\aSample\in\anEvent}\randVarImportance(\aSample)\left[\randVarSatisfaction(\aSample)-m\right]\aPerformance(\{\aSample\})\ge0\qquad\textrm{ as }\randVarImportance(\aSample)\ge0\textrm{ and }\aPerformance(\{\aSample\})\ge0\\
\Leftrightarrow & \sum_{\aSample\in\anEvent}\randVarImportance(\aSample)\randVarSatisfaction(\aSample)\aPerformance(\{\aSample\})\ge\sum_{\aSample\in\anEvent}\randVarImportance(\aSample)m\aPerformance(\{\aSample\})\\
\Leftrightarrow & \sum_{\aSample\in\anEvent}\randVarImportance(\aSample)\randVarSatisfaction(\aSample)\aPerformance(\{\aSample\})\ge m\underbrace{\sum_{\aSample\in\anEvent}\randVarImportance(\aSample)\aPerformance(\{\aSample\})}_{>0}\\
\Leftrightarrow & \frac{\sum_{\aSample\in\anEvent}\randVarImportance(\aSample)\randVarSatisfaction(\aSample)\aPerformance(\{\aSample\})}{\sum_{\aSample\in\anEvent}\randVarImportance(\aSample)\aPerformance(\{\aSample\})}\ge m\\
\Leftrightarrow & \rankingScore(\aPerformance)\ge m
\end{align*}
\item Putting all together, when $\aPerformance(\anEvent)=1$, we have $m\le\rankingScore(\aPerformance)\le M$, and so,  
\begin{equation}
    \min_{\aSample\in\anEvent}\randVarSatisfaction(\aSample)\le\rankingScore(\aPerformance)\le\max_{\aSample\in\anEvent}\randVarSatisfaction(\aSample)\,.
\end{equation}
\end{itemize}
\end{proof}


\paragraph{All ranking scores satisfy the conditions of Theorem~\ref{thm:suff-cond-combinations}, and thus Axiom~\ref{axiom:combinations} (for $\achievableByCombinations=\allConvexCombinations$).}

For convenience, we provide a reminder of Theorem~\ref{thm:suff-cond-combinations} and Axiom~\ref{axiom:combinations} below.
\restatableTD*
\restatableAD*

\begin{theorem}
All ranking scores satisfy the conditions of Theorem~\ref{thm:suff-cond-combinations} (for $\achievableByCombinations=\allConvexCombinations$).
\end{theorem}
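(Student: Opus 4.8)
The plan is to take $\aScore=\rankingScore$ and verify directly the two conditions appearing in \cref{thm:suff-cond-combinations}, using nothing more than the linearity of the expectation together with the observation that, for $\achievableByCombinations=\allConvexCombinations$, every $\overline{\aPerformance}\in\achievableByCombinations(\aSetOfPerformances)$ is a \emph{finite} convex combination $\overline{\aPerformance}=\sum_{i=1}^{n}\lambda_i\aPerformance_i$ with $\aPerformance_i\in\aSetOfPerformances$, $\lambda_i\ge0$, and $\sum_i\lambda_i=1$.

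First I would dispatch the domain condition $\aSetOfPerformances\subseteq\domainOfScore[\rankingScore]\Rightarrow\achievableByCombinations(\aSetOfPerformances)\subseteq\domainOfScore[\rankingScore]$. Recall that $\domainOfScore[\rankingScore]=\{\aPerformance:\expectedValueSymbol_{\aPerformance}[\randVarImportance]\ne0\}$ and that $\randVarImportance\ge0$, so membership in the domain is equivalent to $\expectedValueSymbol_{\aPerformance}[\randVarImportance]>0$. If each $\aPerformance_i\in\domainOfScore[\rankingScore]$, then by linearity $\expectedValueSymbol_{\overline{\aPerformance}}[\randVarImportance]=\sum_i\lambda_i\,\expectedValueSymbol_{\aPerformance_i}[\randVarImportance]$ is a sum of non-negative terms in which at least one $\lambda_i$ is strictly positive (the $\lambda_i$ sum to one), hence strictly positive; thus $\overline{\aPerformance}\in\domainOfScore[\rankingScore]$.

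The core of the argument is the second condition, that $\rankingScore(\overline{\aPerformance})$ lies between $\min_{\aPerformance\in\aSetOfPerformances}\rankingScore(\aPerformance)$ and $\max_{\aPerformance\in\aSetOfPerformances}\rankingScore(\aPerformance)$. Writing $b_i=\expectedValueSymbol_{\aPerformance_i}[\randVarImportance]>0$ and applying linearity of the expectation to numerator and denominator, I would rewrite
\[ \rankingScore(\overline{\aPerformance})=\frac{\expectedValueSymbol_{\overline{\aPerformance}}[\randVarImportance\randVarSatisfaction]}{\expectedValueSymbol_{\overline{\aPerformance}}[\randVarImportance]}=\frac{\sum_i\lambda_i\,b_i\,\rankingScore(\aPerformance_i)}{\sum_i\lambda_i\,b_i}, \]
which exhibits $\rankingScore(\overline{\aPerformance})$ as a weighted average of the reals $\rankingScore(\aPerformance_i)$ with non-negative weights $\lambda_i b_i$ that are not all zero. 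Such a weighted average always lies between the smallest and the largest of the averaged values, so $\min_i\rankingScore(\aPerformance_i)\le\rankingScore(\overline{\aPerformance})\le\max_i\rankingScore(\aPerformance_i)$; since $\{\aPerformance_i\}\subseteq\aSetOfPerformances$, these bounds are in turn controlled by $\min_{\aPerformance\in\aSetOfPerformances}\rankingScore(\aPerformance)$ and $\max_{\aPerformance\in\aSetOfPerformances}\rankingScore(\aPerformance)$, giving the second condition. If one prefers not to invoke the weighted-average fact as a black box, set $m=\min_j\rankingScore(\aPerformance_j)$; then $\rankingScore(\aPerformance_i)\ge m$ multiplied by $\lambda_i b_i\ge0$ and summed over $i$, divided by $\sum_i\lambda_i b_i>0$, yields $\rankingScore(\overline{\aPerformance})\ge m$, and the upper bound is symmetric.

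I do not expect a real obstacle: the only points requiring care are that convex combinations must be \emph{finite} (so that linearity applies termwise) and that the denominators $b_i$ must be strictly positive (so that the numerator–denominator ratio can be rearranged into a weighted average) — both guaranteed, respectively, by the definition of $\allConvexCombinations$ and by $\aSetOfPerformances\subseteq\domainOfScore[\rankingScore]$. Once both conditions of \cref{thm:suff-cond-combinations} hold, that theorem gives that $\ordering_{\rankingScore}$ satisfies \cref{axiom:combinations} for $\achievableByCombinations=\allConvexCombinations$; together with the two preceding theorems this shows $\ordering_{\rankingScore}$ satisfies all three axioms, which is exactly what warrants the name \emph{ranking scores}.
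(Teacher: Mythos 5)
Your proposal is correct and follows essentially the same route as the paper: verify the domain-closure condition via linearity and non-negativity of $\randVarImportance$, then bound $\rankingScore(\overline{\aPerformance})$ between the extremes of $\rankingScore$ over $\aSetOfPerformances$ by clearing the (strictly positive) denominators. Your packaging of the second step as ``$\rankingScore(\overline{\aPerformance})$ is a weighted average of the $\rankingScore(\aPerformance_i)$ with weights $\lambda_i\expectedValueSymbol_{\aPerformance_i}[\randVarImportance]$'' is just a tidier presentation of the same inequality chain the paper writes out, and your explicit remark that domain membership is equivalent to $\expectedValueSymbol_{\aPerformance}[\randVarImportance]>0$ (not merely $\ne0$) makes precise a step the paper leaves implicit.
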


\begin{proof}
The proof is in two parts.
\begin{itemize}
\item First, let us show that $\aSetOfPerformances\subseteq\domainOfScore[{\rankingScore}]\Rightarrow\allConvexCombinations(\aSetOfPerformances)\subseteq\domainOfScore[{\rankingScore}]$.
For any $\overline{\aPerformance}\in\allConvexCombinations(\aSetOfPerformances)$
there exists a weighting function $\lambda_{\aSetOfPerformances,\overline{\aPerformance}}:\aSetOfPerformances\rightarrow\realNumbers_{\ge0}:\aPerformance\mapsto\lambda_{\aSetOfPerformances,\overline{\aPerformance}}(\aPerformance)$
such that $\sum_{\aPerformance\in\aSetOfPerformances}\lambda_{\aSetOfPerformances,\overline{\aPerformance}}(\aPerformance)=1$
and $\sum_{\aPerformance\in\aSetOfPerformances}\lambda_{\aSetOfPerformances,\overline{\aPerformance}}(\aPerformance)\aPerformance=\overline{\aPerformance}$.
For all $\overline{\aPerformance}\in\allConvexCombinations(\aSetOfPerformances)$,
we have:
\begin{align*}
\aSetOfPerformances\subseteq\domainOfScore[{\rankingScore}] & \Leftrightarrow\sum_{\aSample\in\sampleSpace}\randVarImportance(\aSample)\aPerformance(\{\aSample\})\ne0\qquad\forall\aPerformance\in\aSetOfPerformances\\
 & \Rightarrow\sum_{\aPerformance\in\aSetOfPerformances}\lambda_{\aSetOfPerformances,\overline{\aPerformance}}(\aPerformance)\sum_{\aSample\in\sampleSpace}\randVarImportance(\aSample)\aPerformance(\{\aSample\})\ne0\\
 & \Leftrightarrow\sum_{\aSample\in\sampleSpace}\randVarImportance(\aSample)\sum_{\aPerformance\in\aSetOfPerformances}\lambda_{\aSetOfPerformances,\overline{\aPerformance}}\aPerformance(\{\aSample\})\ne0\\
 & \Leftrightarrow\sum_{\aSample\in\sampleSpace}\randVarImportance(\aSample)\overline{\aPerformance}(\{\aSample\})\ne0\\
 & \Leftrightarrow\overline{\aPerformance}\in\domainOfScore[{\rankingScore}]\,.
\end{align*}
\item Second, let us show that, for all $\overline{\aPerformance}\in\allConvexCombinations(\aSetOfPerformances)$,
$\min_{\aPerformance\in\aSetOfPerformances}\rankingScore(\aPerformance)\le\rankingScore(\overline{\aPerformance})\le\max_{\aPerformance\in\aSetOfPerformances}\rankingScore(\aPerformance)$.
Let us pose $l=\min_{\aPerformance\in\aSetOfPerformances}\rankingScore(\aPerformance)$
and $u=\max_{\aPerformance\in\aSetOfPerformances}\rankingScore(\aPerformance)$.
We have:
\begin{align*}
 & l\le\rankingScore(\aPerformance)\le u\qquad\forall\aPerformance\in\aSetOfPerformances\\
\Leftrightarrow & l\le\frac{\sum_{\aSample\in\sampleSpace}\randVarImportance(\aSample)\randVarSatisfaction(\aSample)\aPerformance(\{\aSample\})}{\sum_{\aSample\in\sampleSpace}\randVarImportance(\aSample)\aPerformance(\{\aSample\})}\le u\qquad\forall\aPerformance\in\aSetOfPerformances\\
\Leftrightarrow & l\sum_{\aSample\in\sampleSpace}\randVarImportance(\aSample)\aPerformance(\{\aSample\})\le\sum_{\aSample\in\sampleSpace}\randVarImportance(\aSample)\randVarSatisfaction(\aSample)\aPerformance(\{\aSample\})\le u\sum_{\aSample\in\sampleSpace}\randVarImportance(\aSample)\aPerformance(\{\aSample\})\qquad\forall\aPerformance\in\aSetOfPerformances\\
\Rightarrow & l\sum_{\aSample\in\sampleSpace}\randVarImportance(\aSample)\overline{\aPerformance}(\{\aSample\})\le\sum_{\aSample\in\sampleSpace}\randVarImportance(\aSample)\randVarSatisfaction(\aSample)\overline{\aPerformance}(\{\aSample\})\le u\sum_{\aSample\in\sampleSpace}\randVarImportance(\aSample)\overline{\aPerformance}(\{\aSample\})\\
\Leftrightarrow & l\le\frac{\sum_{\aSample\in\sampleSpace}\randVarImportance(\aSample)\randVarSatisfaction(\aSample)\overline{\aPerformance}(\{\aSample\})}{\sum_{\aSample\in\sampleSpace}\randVarImportance(\aSample)\overline{\aPerformance}(\{\aSample\})}\le u\\
\Leftrightarrow & l\le\rankingScore(\overline{\aPerformance})\le u
\end{align*}
\end{itemize}
\end{proof}

\subsubsection{On the Properties of Ranking Scores.}

\begin{proof}[Proof of Property~\ref{prop:decomposition-importance-satisfaction}]
    Let us demonstrate that we have $\rankingScore(\aPerformance)=\scoreExpectedSatisfaction(\aPerformance')$ with $\aPerformance'=\opFilter(\aPerformance)$. For all $\aSample\in\sampleSpace$, we have:
    \begin{equation*}
        \aPerformance'(\{\aSample\})=\frac{
            \aPerformance(\{\aSample\}) \randVarImportance(\aSample)
        }{
            \sum_{\aSample'\in\sampleSpace} \aPerformance(\{\aSample'\}) \randVarImportance(\aSample')
        }
    \end{equation*}
    Thus,
    \begin{align*}
        \scoreExpectedSatisfaction(\aPerformance') & = \sum_{\aSample\in\sampleSpace} \aPerformance'(\{\aSample\}) \randVarSatisfaction(\aSample) \\
        & = \sum_{\aSample\in\sampleSpace} \frac{
            \aPerformance(\{\aSample\}) \randVarImportance(\aSample)
        }{
            \sum_{\aSample'\in\sampleSpace} \aPerformance(\{\aSample'\}) \randVarImportance(\aSample')
        } \randVarSatisfaction(\aSample) \\
        & = \frac{
            \sum_{\aSample\in\sampleSpace} \aPerformance(\{\aSample\}) \randVarImportance(\aSample) \randVarSatisfaction(\aSample)
        }{
            \sum_{\aSample'\in\sampleSpace} \aPerformance(\{\aSample'\}) \randVarImportance(\aSample')
        } \\
        & = \rankingScore(\aPerformance)
    \end{align*}
\end{proof}

\begin{proof}[Proof of Property~\ref{prop:linear-transformation-satisfaction}]
Let $\randVarSatisfaction'=\alpha\randVarSatisfaction+\beta$ with $\alpha,\beta\in\realNumbers$.
\begin{equation*}
    \frac{
        \sum_{\aSample\in\sampleSpace} \aPerformance(\{\aSample\}) \randVarSatisfaction'(\aSample) \randVarImportance (\aSample)
    }{
        \sum_{\aSample\in\sampleSpace} \aPerformance(\{\aSample\}) \randVarImportance (\aSample)
    }
    =
    \alpha
    \frac{
        \sum_{\aSample\in\sampleSpace} \aPerformance(\{\aSample\}) \randVarSatisfaction(\aSample) \randVarImportance (\aSample)
    }{
        \sum_{\aSample\in\sampleSpace} \aPerformance(\{\aSample\}) \randVarImportance (\aSample)
    }
    + \beta
\end{equation*}
\end{proof}

\begin{proof}[Proof of Property~\ref{prop:scale-invariance}]
$\rankingScore[k\randVarImportance]=\frac{\sum_{\aSample\in\sampleSpace}k\randVarImportance(\aSample)\randVarSatisfaction(\aSample)\aPerformance(\{\aSample\})}{\sum_{\aSample\in\sampleSpace}k\randVarImportance(\aSample)\aPerformance(\{\aSample\})}=\frac{\sum_{\aSample\in\sampleSpace}\randVarImportance(\aSample)\randVarSatisfaction(\aSample)\aPerformance(\{\aSample\})}{\sum_{\aSample\in\sampleSpace}\randVarImportance(\aSample)\aPerformance(\{\aSample\})}=\rankingScore[\randVarImportance]$
\end{proof}
\begin{proof}[Proof of Property~\ref{prop:scale-invariance-per-satisfaction}]
Let us consider a binary satisfaction, that is $\randVarSatisfaction(\aSample)\in\{0,1\}\,\forall\aSample\in\sampleSpace$.
Let us define the events $\anEvent_{0}=\{\aSample\in\sampleSpace:\randVarSatisfaction(\aSample)=0\}$
and $\anEvent_{1}=\{\aSample\in\sampleSpace:\randVarSatisfaction(\aSample)=1\}$.
If $\randVarImportance'=(\indicatorSymbol_{\randVarSatisfaction=0}\alpha_{0}+\indicatorSymbol_{\randVarSatisfaction=1}\alpha_{1})\randVarImportance$
with $\alpha_{0}>0$ and $\alpha_{1}>0$, then
\begin{align*}
\rankingScore[\randVarImportance](\aPerformance) & =\frac{\sum_{\aSample\in\sampleSpace}\randVarImportance(\aSample)\randVarSatisfaction(\aSample)\aPerformance(\{\aSample\})}{\sum_{\aSample\in\sampleSpace}\randVarImportance(\aSample)\aPerformance(\{\aSample\})}\\
 & =\frac{\sum_{\aSample\in\anEvent_{1}}\randVarImportance(\aSample)\aPerformance(\{\aSample\})}{\sum_{\aSample\in\anEvent_{0}}\randVarImportance(\aSample)\aPerformance(\{\aSample\})+\sum_{\aSample\in\anEvent_{1}}\randVarImportance(\aSample)\aPerformance(\{\aSample\})}
\end{align*}
and
\begin{align*}
\rankingScore[\randVarImportance'](\aPerformance) & =\frac{\sum_{\aSample\in\sampleSpace}\randVarImportance'(\aSample)\randVarSatisfaction(\aSample)\aPerformance(\{\aSample\})}{\sum_{\aSample\in\sampleSpace}\randVarImportance'(\aSample)\aPerformance(\{\aSample\})}\\
 & =\frac{\sum_{\aSample\in\anEvent_{1}}\randVarImportance'(\aSample)\aPerformance(\{\aSample\})}{\sum_{\aSample\in\anEvent_{0}}\randVarImportance'(\aSample)\aPerformance(\{\aSample\})+\sum_{\aSample\in\anEvent_{1}}\randVarImportance'(\aSample)\aPerformance(\{\aSample\})}\\
 & =\frac{\sum_{\aSample\in\anEvent_{1}}\alpha_{1}\randVarImportance(\aSample)\aPerformance(\{\aSample\})}{\sum_{\aSample\in\anEvent_{0}}\alpha_{0}\randVarImportance(\aSample)\aPerformance(\{\aSample\})+\sum_{\aSample\in\anEvent_{1}}\alpha_{1}\randVarImportance(\aSample)\aPerformance(\{\aSample\})}\\
 & =\frac{\alpha_{1}\sum_{\aSample\in\anEvent_{1}}\randVarImportance(\aSample)\aPerformance(\{\aSample\})}{\alpha_{0}\sum_{\aSample\in\anEvent_{0}}\randVarImportance(\aSample)\aPerformance(\{\aSample\})+\alpha_{1}\sum_{\aSample\in\anEvent_{1}}\randVarImportance(\aSample)\aPerformance(\{\aSample\})}
\end{align*}
Thus, $\rankingScore[\randVarImportance']=\frac{\alpha_{1}\rankingScore[\randVarImportance]}{\alpha_{0}(1-\rankingScore[\randVarImportance])\alpha_{1}\rankingScore[\randVarImportance]}$
and $\frac{\partial\rankingScore[\randVarImportance']}{\partial\rankingScore[\randVarImportance]}=\frac{\alpha_{0}\alpha_{1}}{\left(\alpha_{0}(1-\rankingScore[\randVarImportance])\alpha_{1}\rankingScore[\randVarImportance]\right)^{2}}>0$.
This leads immediately to the conclusion that $\ordering_{\rankingScore[\randVarImportance']}=\ordering_{\rankingScore[\randVarImportance]}$.
\end{proof}
\begin{proof}[Proof of Properties~\ref{prop:mean-vertical} and~\ref{prop:mean-horizontal}]
Let us consider a binary satisfaction and the events $\anEvent_{0}=\{\aSample\in\sampleSpace:\randVarSatisfaction(\aSample)=0\}$
and $\anEvent_{1}=\{\aSample\in\sampleSpace:\randVarSatisfaction(\aSample)=1\}$.
Let $\randVarImportance_{1}$ and $\randVarImportance_{2}$ be two
random variables and $\randVarImportance=\lambda_{1}\randVarImportance_{1}+\lambda_{2}\randVarImportance_{2}$
with $\lambda_{1},\lambda_{2}\in\realNumbers$ such that $\lambda_{1}+\lambda_{2}=1$.
\begin{itemize}
\item When the random variables $\randVarImportance_{1}$ and $\randVarImportance_{2}$
are such that $\randVarImportance_{1}(\aSample)=\randVarImportance_{2}(\aSample)=\randVarImportance(\aSample)\,\forall\aSample\in\anEvent_{1}$,
if we take $f:x\mapsto x^{-1}$,
\begin{align*}
 & \lambda_{1}f\left(\rankingScore[\randVarImportance_{1}](\aPerformance)\right)+\lambda_{2}f\left(\rankingScore[\randVarImportance_{1}](\aPerformance)\right)\\
= & \lambda_{1}\left(1+\frac{\sum_{\aSample\in\anEvent_{0}}\randVarImportance_{1}(\aSample)\aPerformance(\{\aSample\})}{\sum_{\aSample\in\anEvent_{1}}\randVarImportance_{1}(\aSample)\aPerformance(\{\aSample\})}\right)+\lambda_{2}\left(1+\frac{\sum_{\aSample\in\anEvent_{0}}\randVarImportance_{2}(\aSample)\aPerformance(\{\aSample\})}{\sum_{\aSample\in\anEvent_{1}}\randVarImportance_{2}(\aSample)\aPerformance(\{\aSample\})}\right)\\
= & \lambda_{1}\left(1+\frac{\sum_{\aSample\in\anEvent_{0}}\randVarImportance_{1}(\aSample)\aPerformance(\{\aSample\})}{\sum_{\aSample\in\anEvent_{1}}\randVarImportance(\aSample)\aPerformance(\{\aSample\})}\right)+\lambda_{2}\left(1+\frac{\sum_{\aSample\in\anEvent_{0}}\randVarImportance_{2}(\aSample)\aPerformance(\{\aSample\})}{\sum_{\aSample\in\anEvent_{1}}\randVarImportance(\aSample)\aPerformance(\{\aSample\})}\right)\\
= & (\lambda_{1}+\lambda_{2})+\frac{\lambda_{1}\sum_{\aSample\in\anEvent_{0}}\randVarImportance_{1}(\aSample)\aPerformance(\{\aSample\})+\lambda_{2}\sum_{\aSample\in\anEvent_{0}}\randVarImportance_{2}(\aSample)\aPerformance(\{\aSample\})}{\sum_{\aSample\in\anEvent_{1}}\randVarImportance(\aSample)\aPerformance(\{\aSample\})}\\
= & 1+\frac{\sum_{\aSample\in\anEvent_{0}}(\lambda_{1}\randVarImportance_{1}+\lambda_{2}\randVarImportance_{2})(\aSample)\aPerformance(\{\aSample\})}{\sum_{\aSample\in\anEvent_{1}}\randVarImportance(\aSample)\aPerformance(\{\aSample\})}\\
= & 1+\frac{\sum_{\aSample\in\anEvent_{0}}\randVarImportance(\aSample)\aPerformance(\{\aSample\})}{\sum_{\aSample\in\anEvent_{1}}\randVarImportance(\aSample)\aPerformance(\{\aSample\})}\\
= & f\left(\rankingScore[\randVarImportance](\aPerformance)\right)
\end{align*}
\item When the random variables $\randVarImportance_{1}$ and $\randVarImportance_{2}$
are such that $\randVarImportance_{1}(\aSample)=\randVarImportance_{2}(\aSample)=\randVarImportance(\aSample)\,\forall\aSample\in\anEvent_{0}$,
if we take $f:x\mapsto(1-x)^{-1}$,
\begin{align*}
 & \lambda_{1}f\left(\rankingScore[\randVarImportance_{1}](\aPerformance)\right)+\lambda_{2}f\left(\rankingScore[\randVarImportance_{1}](\aPerformance)\right)\\
= & \lambda_{1}\left(1+\frac{\sum_{\aSample\in\anEvent_{1}}\randVarImportance_{1}(\aSample)\aPerformance(\{\aSample\})}{\sum_{\aSample\in\anEvent_{0}}\randVarImportance_{1}(\aSample)\aPerformance(\{\aSample\})}\right)+\lambda_{2}\left(1+\frac{\sum_{\aSample\in\anEvent_{1}}\randVarImportance_{2}(\aSample)\aPerformance(\{\aSample\})}{\sum_{\aSample\in\anEvent_{0}}\randVarImportance_{2}(\aSample)\aPerformance(\{\aSample\})}\right)\\
= & \lambda_{1}\left(1+\frac{\sum_{\aSample\in\anEvent_{1}}\randVarImportance_{1}(\aSample)\aPerformance(\{\aSample\})}{\sum_{\aSample\in\anEvent_{0}}\randVarImportance(\aSample)\aPerformance(\{\aSample\})}\right)+\lambda_{2}\left(1+\frac{\sum_{\aSample\in\anEvent_{1}}\randVarImportance_{2}(\aSample)\aPerformance(\{\aSample\})}{\sum_{\aSample\in\anEvent_{0}}\randVarImportance(\aSample)\aPerformance(\{\aSample\})}\right)\\
= & (\lambda_{1}+\lambda_{2})+\frac{\lambda_{1}\sum_{\aSample\in\anEvent_{1}}\randVarImportance_{1}(\aSample)\aPerformance(\{\aSample\})+\lambda_{2}\sum_{\aSample\in\anEvent_{1}}\randVarImportance_{2}(\aSample)\aPerformance(\{\aSample\})}{\sum_{\aSample\in\anEvent_{0}}\randVarImportance(\aSample)\aPerformance(\{\aSample\})}\\
= & 1+\frac{\sum_{\aSample\in\anEvent_{1}}(\lambda_{1}\randVarImportance_{1}+\lambda_{2}\randVarImportance_{2})(\aSample)\aPerformance(\{\aSample\})}{\sum_{\aSample\in\anEvent_{0}}\randVarImportance(\aSample)\aPerformance(\{\aSample\})}\\
= & 1+\frac{\sum_{\aSample\in\anEvent_{1}}\randVarImportance(\aSample)\aPerformance(\{\aSample\})}{\sum_{\aSample\in\anEvent_{0}}\randVarImportance(\aSample)\aPerformance(\{\aSample\})}\\
= & f\left(\rankingScore[\randVarImportance](\aPerformance)\right)
\end{align*}
\end{itemize}
\end{proof}

\begin{proof}[Proof of Property~\ref{prop:convexity-contour-sets}]
Let $\aRelation \in \{<,\le,=,\ge,>\}$ and $v=\rankingScore(\aPerformance)$. For all $\aPerformance'\in\allPerformances$, we have:
\begin{align}
    & \rankingScore(\aPerformance') \aRelation \rankingScore(\aPerformance) \\
    \Leftrightarrow & \rankingScore(\aPerformance') \aRelation v \\
    \Leftrightarrow & \frac{\sum_{\aSample\in\sampleSpace}\randVarImportance(\aSample)\randVarSatisfaction(\aSample)\aPerformance'(\{\aSample\})}{\sum_{\aSample\in\sampleSpace}\randVarImportance(\aSample)\aPerformance'(\{\aSample\})} \aRelation v \\
    \Leftrightarrow & \left[\sum_{\aSample\in\sampleSpace}\randVarImportance(\aSample)\randVarSatisfaction(\aSample)\aPerformance'(\{\aSample\})\right] \aRelation  \left[v \sum_{\aSample\in\sampleSpace}\randVarImportance(\aSample)\aPerformance'(\{\aSample\})\right] \\
    \Leftrightarrow & \sum_{\aSample\in\sampleSpace}\randVarImportance(\aSample)\left[\randVarSatisfaction(\aSample)-v\right]\aPerformance'(\{\aSample\}) \aRelation 0
\end{align}
This is either a linear equality or a linear inequality constraint. Thus,
\begin{align}
    \phi_\aRelation(\aPerformance) & = \left\{\aPerformance'\in\allPerformances:\rankingScore(\aPerformance')\aRelation\rankingScore(\aPerformance)\right\} \\
    & = \left\{\aPerformance'\in\allPerformances:\sum_{\aSample\in\sampleSpace}\randVarImportance(\aSample)\left[\randVarSatisfaction(\aSample)-v\right]\aPerformance'(\{\aSample\}) \aRelation 0\right\}
\end{align}
is a convex subset of $\allPerformances$.
\end{proof}

\clearpage
\subsection{Supplementary Material about \cref{sec:review}}

\subsubsection{Link between Classical Formulation and Ours.}

\cref{fig:formulations} shows the connections between the classical formulation of the two-class classification task and our formulation, as explained in~\cref{sec:two-class-crisp-classification}.

\begin{figure}
\,\hfill{}%
\ovalbox{
    \begin{minipage}[c]{0.3\columnwidth}%
        \begin{center}
        \textbf{Classical formulation}
        \par\end{center}
        $$\allClasses=\{\classNeg,\classPos\}$$
        $$(y,\hat{y})\in \allClasses^2$$
        \begin{align*}
        \sampleTN&=(\classNeg,\classNeg) & \sampleFP&=(\classNeg,\classPos)\\
        \sampleFN&=(\classPos,\classNeg) & \sampleTP&=(\classPos,\classPos)
        \end{align*}
    \end{minipage}
}
\hfill{}%
\hfill{}%
\begin{minipage}[c]{0.3\columnwidth}%
    $$\sampleSpace=\allClasses^2$$
    $$\randVarSatisfaction=\indicatorSymbol_{\randVarGroundtruthClass=\randVarPredictedClass}$$
    $$\overrightarrow{\hspace{0.75\linewidth}}$$
    $$\overleftarrow{\hspace{0.75\linewidth}}$$
    $$\randVarGroundtruthClass:\sampleSpace\rightarrow \allClasses
    \qquad 
    \randVarPredictedClass:\sampleSpace\rightarrow \allClasses$$
    $$\aSample=(\randVarGroundtruthClass(\aSample),\randVarPredictedClass(\aSample)) \quad \forall \aSample\in\sampleSpace$$
\end{minipage}
\hfill{}%
\hfill{}%
\ovalbox{
    \begin{minipage}[c]{0.3\columnwidth}%
        \begin{center}
        \textbf{Our formulation}
        \par\end{center}
        $$\sampleSpace=\{\sampleTN,\sampleFP,\sampleFN,\sampleTP\}$$
        $$\eventSpace=2^\sampleSpace$$
        $$\randVarSatisfaction(\sampleFP)=\randVarSatisfaction(\sampleFN)=0$$
        $$\randVarSatisfaction(\sampleTN)=\randVarSatisfaction(\sampleTP)=1$$
    \end{minipage}
}
\hfill{}\,%

\caption{Passages between two formulations (left: classical, right: ours) for the performance analysis of two-class classification problems.}
\label{fig:formulations}
\end{figure}

\subsubsection{Custom Optimization Algorithm to Estimate Kendall's $\tau$.}
\label{sec:kendall}

For any score $\aScore$, our algorithm aims at determining the minimum and maximum values that a rank correlation between $\aScore$ and our ranking scores $\rankingScore$ can take over all possible importances $\randVarImportance$. Note that this algorithm is not specific to Kendall's $\tau$~\cite{Kendall1938ANewMeasure} and could also be used with any other rank correlation, for example Spearman's $\rho$~\cite{Spearman1904Theproof}.

\begin{description}
    
    \item[Variables.] Leveraging Properties~\ref{prop:scale-invariance} and~\ref{prop:scale-invariance-per-satisfaction}, we know that the rank-correlation between $\aScore$ and a ranking score $\rankingScore[\randVarImportance_1]$ is equal to the rank-correlation between $\aScore$ and another ranking score $\rankingScore[\randVarImportance_2]$ if 
        $\frac{
            \randVarImportance_1(\sampleTP)
        }{
            \randVarImportance_1(\sampleTN)
            +\randVarImportance_1(\sampleTP)
        }=\frac{
            \randVarImportance_2(\sampleTP)
        }{
            \randVarImportance_2(\sampleTN)
            +\randVarImportance_2(\sampleTP)
        }$
    and 
        $\frac{
            \randVarImportance_1(\sampleFN)
        }{
            \randVarImportance_1(\sampleFP)
            +\randVarImportance_1(\sampleFN)
        }=\frac{
            \randVarImportance_2(\sampleFN)
        }{
            \randVarImportance_2(\sampleFP)
            +\randVarImportance_2(\sampleFN)
        }$. 
    For this reason, we consider only two variables: 
        $a = \frac{
            \randVarImportance(\sampleTP)
        }{
            \randVarImportance(\sampleTN)
            +\randVarImportance(\sampleTP)
        }\in[0,1]$ 
    and
        $b = \frac{
            \randVarImportance(\sampleFN)
        }{
            \randVarImportance(\sampleFP)
            +\randVarImportance(\sampleFN)
        }\in[0,1]$. 
    
    \item[Objective function.] We optimize the function $\tau(a,b)$ that gives the rank correlation between $\aScore$ and $\rankingScore[\randVarImportance^*]$ with 
    $\randVarImportance^*(\sampleTP)=1-a$, 
    $\randVarImportance^*(\sampleTP)=1-b$, 
    $\randVarImportance^*(\sampleTP)=b$, and 
    $\randVarImportance^*(\sampleTP)=a$. In practice, this is an estimation based on a finite set of performances on which $\aScore$ and $\rankingScore[\randVarImportance^*]$ are applied. Note that $\tau(a,b)$ is not a continuous function when estimated on a finite set of performances. The chosen optimization technique circumvents the difficulties related to that.
    
    \item[Optimization technique.] We implemented a custom coarse-to-fine grid-based direct search~\cite{Conn2009Introduction}: we compute $\tau(a,b)$ on a coarse grid over the unit square, locate the maximum on the grid, center a smaller square and a finer grid around that point, and iterate until the square is small enough.
    
\end{description}

\subsubsection{Scores Perfectly Correlated with a Ranking Score, for all Performances}
\label{sec:perfect-correlation-proofs-1}

\begin{itemize}

\item \textbf{The accuracy:}
    $\scoreAccuracy=\rankingScore$ with 
    $\randVarImportance(\sampleTN)=\nicefrac12$,
    $\randVarImportance(\sampleFP)=\nicefrac12$,
    $\randVarImportance(\sampleFN)=\nicefrac12$,
    and $\randVarImportance(\sampleTP)=\nicefrac12$.

\item \textbf{The F-score with $\beta=0.5$:}
    $\scoreFBeta[0.5]=\rankingScore$ with 
    $\randVarImportance(\sampleTN)=0$,
    $\randVarImportance(\sampleFP)=\nicefrac45$,
    $\randVarImportance(\sampleFN)=\nicefrac15$,
    and $\randVarImportance(\sampleTP)=1$.

\item \textbf{The F-score with $\beta=1.0$:}
    $\scoreFBeta[1]=\rankingScore$ with 
    $\randVarImportance(\sampleTN)=0$,
    $\randVarImportance(\sampleFP)=\nicefrac12$,
    $\randVarImportance(\sampleFN)=\nicefrac12$,
    and $\randVarImportance(\sampleTP)=1$.

\item \textbf{The F-score with $\beta=2.0$:}
    $\scoreFBeta[2]=\rankingScore$ with 
    $\randVarImportance(\sampleTN)=0$,
    $\randVarImportance(\sampleFP)=\nicefrac15$,
    $\randVarImportance(\sampleFN)=\nicefrac45$,
    and $\randVarImportance(\sampleTP)=1$.

\item \textbf{The negative predictive value:}
    $\scoreNPV=\rankingScore$ with 
    $\randVarImportance(\sampleTN)=1$,
    $\randVarImportance(\sampleFP)=0$,
    $\randVarImportance(\sampleFN)=1$,
    and $\randVarImportance(\sampleTP)=0$.

\item \textbf{The positive predictive value:}
    $\scorePPV=\rankingScore$ with 
    $\randVarImportance(\sampleTN)=0$,
    $\randVarImportance(\sampleFP)=1$,
    $\randVarImportance(\sampleFN)=0$,
    and $\randVarImportance(\sampleTP)=1$.

\item \textbf{The true negative rate:}
    $\scoreTNR=\rankingScore$ with 
    $\randVarImportance(\sampleTN)=1$,
    $\randVarImportance(\sampleFP)=1$,
    $\randVarImportance(\sampleFN)=0$,
    and $\randVarImportance(\sampleTP)=0$.

\item \textbf{The true positive rate:}
    $\scoreTPR=\rankingScore$ with 
    $\randVarImportance(\sampleTN)=0$,
    $\randVarImportance(\sampleFP)=0$,
    $\randVarImportance(\sampleFN)=1$,
    and $\randVarImportance(\sampleTP)=1$.
    
\end{itemize}

\subsubsection{Scores Perfectly Correlated with a Ranking Score, for the Performances Corresponding to Given Class Priors $\priorneg\ne0$ and $\priorpos\ne0$}
\label{sec:perfect-correlation-proofs-2}

\begin{itemize}

\item \textbf{The balanced accuracy:}
    $\scoreBalancedAccuracy=\rankingScore$ with 
    $\randVarImportance(\sampleTN)=\priorpos$,
    $\randVarImportance(\sampleFP)=\priorpos$,
    $\randVarImportance(\sampleFN)=\priorneg$,
    and $\randVarImportance(\sampleTP)=\priorneg$.
    
\item \textbf{Cohen's kappa:}
    $\scoreCohenKappa=\frac{\rankingScore-2\priorneg\priorpos}{\priorneg^2+\priorpos^2}$ with 
    $\randVarImportance(\sampleTN)=\frac{\priorpos^2}{\priorneg^2+\priorpos^2}$,
    $\randVarImportance(\sampleFP)=\frac12$,
    $\randVarImportance(\sampleFN)=\frac12$,
    and $\randVarImportance(\sampleTP)=\frac{\priorneg^2}{\priorneg^2+\priorpos^2}$. Thus, $\frac{\partial\scoreCohenKappa}{\rankingScore}>0$.
    
\item \textbf{The informedness (\aka Youden's J):}
    $\scoreYoudenJ=2\rankingScore-1$ with 
    $\randVarImportance(\sampleTN)=\priorpos$,
    $\randVarImportance(\sampleFP)=\priorpos$,
    $\randVarImportance(\sampleFN)=\priorneg$,
    and $\randVarImportance(\sampleTP)=\priorneg$. Thus, $\frac{\partial\scoreYoudenJ}{\rankingScore}>0$.
    
\item \textbf{The negative likelihood ratio:}
    $\scoreNLR=\frac{1-\rankingScore}{\rankingScore}$ with 
    $\randVarImportance(\sampleTN)=1$,
    $\randVarImportance(\sampleFP)=0$,
    $\randVarImportance(\sampleFN)=1$,
    and $\randVarImportance(\sampleTP)=0$. Thus, $\frac{\partial\scoreNLR}{\rankingScore}<0$.
    
\item \textbf{The positive likelihood ratio:}
    $\scorePLR=\frac{\rankingScore}{1-\rankingScore}$ with 
    $\randVarImportance(\sampleTN)=0$,
    $\randVarImportance(\sampleFP)=1$,
    $\randVarImportance(\sampleFN)=0$,
    and $\randVarImportance(\sampleTP)=1$. Thus, $\frac{\partial\scorePLR}{\rankingScore}>0$.
    
\item \textbf{The probability of the elementary event \emph{true negative}:}
    $\scorePTN=\priorneg\rankingScore$ with 
    $\randVarImportance(\sampleTN)=1$,
    $\randVarImportance(\sampleFP)=1$,
    $\randVarImportance(\sampleFN)=0$,
    and $\randVarImportance(\sampleTP)=0$. Thus, $\frac{\partial\scorePTN}{\rankingScore}>0$.
    
\item \textbf{The probability of the elementary event \emph{true positive}:}
    $\scorePTP=\priorpos\rankingScore$ with 
    $\randVarImportance(\sampleTN)=0$,
    $\randVarImportance(\sampleFP)=0$,
    $\randVarImportance(\sampleFN)=1$,
    and $\randVarImportance(\sampleTP)=1$. Thus, $\frac{\partial\scorePTP}{\rankingScore}>0$.
    
\end{itemize}

\stopcontents[mytoc]

\end{document}